\documentclass{article} 
\usepackage{iclr2027_conference,times}

\usepackage{amsmath,amsfonts,bm}

\def\eqref#1{equation~\ref{#1}}

\def\1{\bm{1}}

\DeclareMathAlphabet{\mathsfit}{\encodingdefault}{\sfdefault}{m}{sl}
\SetMathAlphabet{\mathsfit}{bold}{\encodingdefault}{\sfdefault}{bx}{n}

\usepackage{hyperref}
\usepackage{url}
\usepackage{amssymb}
\usepackage{pifont}
\usepackage{graphicx} 
\usepackage[table,xcdraw]{xcolor}
\usepackage{multirow}
\usepackage[table,xcdraw]{xcolor}
\usepackage{algorithm}
\usepackage{algorithmic}
\usepackage{amsthm}
\usepackage[table]{xcolor}
\usepackage{capt-of}
\usepackage{subcaption}
\usepackage{wrapfig}
\usepackage{todonotes}
\usepackage{graphicx}
\usepackage{makecell}
\usepackage{booktabs}
\usepackage{makecell}

\usepackage{arydshln} 

\newtheorem{theorem}{Theorem}

\newtheorem{proposition}{Proposition}

\title{RACE: RoPE-Aligned Attention Compression Based on Query-Key Activation Energies}
\iclrfinalcopy

\author{
\begin{minipage}{\textwidth}
\centering
\small
\textbf{%
Irene Tenison\textsuperscript{1}\thanks{Contact: itenison@mit.edu},
Abdulrahman Alabdulkareem\textsuperscript{1},
Daniel Karl I. Weidele\textsuperscript{2},\\
Nima Dehmamy\textsuperscript{2},
Dan Gutfreund\textsuperscript{2}
}
\\[0.7em]
\normalfont
\textsuperscript{1}MIT CSAIL
\qquad
\textsuperscript{2}MIT-IBM Computing Research Lab
\end{minipage}
}
\begin{document}

\maketitle

\begin{abstract}
Head-dimension compression can reduce the memory required by the
key-value (KV) cache, but existing methods often reconstruct
full-dimensional representations during inference, adding significant decoding FLOPs overhead. We formalize the design considerations for converting pretrained attention into a natively reduced-dimensional form that reduces both KV-cache storage and attention computation using a calibration set. Specifically, we consider reductions that (1) apply a shared projection to queries and keys so that they remain in a common reduced feature space, (2) derive this projection jointly from query and key activations to better preserve their interaction, and (3) are compatible with the pretrained RoPE transformation. To obtain a reconstruction-free realization of this reduction, we restrict it to coordinate selection and theoretically show that RoPE compatibility then holds if and only if complete rotary pairs are retained. Building on this characterization, we introduce RACE
(\textbf{R}oPE-Aligned \textbf{A}ttention \textbf{C}ompression based on
activation \textbf{E}nergy), a post-training method that selects rotary
pairs using their joint query-key activation energy. We theoretically
show that this criterion minimizes a separable, calibration-dependent
upper bound on post-RoPE query-key product approximation error. The
resulting reductions are folded into the model weights and require
neither runtime reconstruction nor specialized kernels. Across three model families and different compression levels, RACE outperforms structured
pruning and remains competitive with reconstruction-based methods on language-modeling, zero-shot, and long-context tasks. At 50\% KV-cache retention and 256K context, RACE reduces total
inference FLOPs by 47.8\%, yielding $1.79\times$ higher throughput,
43.2\% lower time-to-first-token (TTFT), and 28.1\% lower allocated GPU memory in comparison to an uncompressed model. We also show that RACE is compatible with token eviction or compaction as well as quantization based KV-cache compression methods.
\end{abstract}

\section{Introduction}
\label{sec:intro}
Autoregressive language-model inference is increasingly constrained by the key-value (KV) cache, especially at long contexts. Each attention layer stores a key and value vector for every preceding token, so cache memory grows linearly with context length and with the dimensionality of the cached representations. Reducing the key and value head dimensions offers the possibility of decreasing both KV-cache memory and native attention computation. Realizing these savings is more difficult than merely storing a low-dimensional encoding. Several head-dimension compression methods project keys and values into compact representations but reconstruct full-dimensional keys during inference to effectively apply rotary positional embeddings (RoPE) \citep{saxena2024eigen,chang2025palu}. Such methods reduce cache storage but introduce decompression and additional arithmetic into the attention path during inference. To obtain the full benefit of head-dimension compression, the attention layer itself must operate directly in the reduced dimensions.

This raises a more fundamental question: \emph{when can a pretrained RoPE-based attention layer be converted into a natively lower-dimensional attention layer without reconstructing the discarded coordinates?} Since attention logits depend on the post-RoPE product \(QK^\top\), we view head-dimension reduction as an approximate matrix multiplication (AMM) problem along the shared feature dimension. However, preserving this interaction in the post-RoPE space is not sufficient for native inference. It must be realizable before RoPE. This imposes an additional structural constraint that the reduction must commute with the pretrained rotary transformation. We characterize this condition exactly for coordinate-selection reductions.

These observations define an admissible family of native Q/K reductions: queries and keys can share a reduced feature space, and the reduction must respect the pretrained RoPE structure. The remaining question is which admissible dimensions to retain. We introduce RACE, which ranks complete rotary pairs using joint activation energy for each RoPE pair $m$, \(E_m=\|Q_m\|_F^2+\|K_m\|_F^2\), and retains the highest-energy pairs. This criterion minimizes a separable, calibration-dependent upper bound on post-RoPE query-key product approximation error. The resulting Q/K reduction, together with a calibrated value subspace, is folded into the attention projections, enabling RoPE, caching, and attention to operate entirely in the reduced dimensions without runtime reconstruction or specialized kernels. Summarizing our contributions:

\vspace{-0.15cm}
\begin{itemize}
    \item \textbf{Constrained-AMM formulation.} We formulate native attention compression as preservation of the post-RoPE query-key product subject to pre-RoPE realizability constraint.

    \item \textbf{Structural Characterization.}
    We prove that a coordinate-selection reduction can commute with the pretrained RoPE if and only if it retains complete rotary pairs,
    characterizing the admissible family of coordinate reductions.

    \item \textbf{Forward-only pair selection within the family.} RACE ranks RoPE pairs using joint query-key activation energy, minimizing a separable calibration-dependent upper bound.

    \item \textbf{Empirical validation.} Across multiple model families and compression levels, RACE remains competitive with reconstruction-based methods while reducing KV-cache memory, attention cost, parameters, TTFT, and runtime overhead without specialized kernels.
    
\end{itemize}
\vspace{-0.2cm}

\begin{figure}
    \centering
    \begin{minipage}[t]{0.495\columnwidth}
        \centering
        \includegraphics[width=\linewidth]{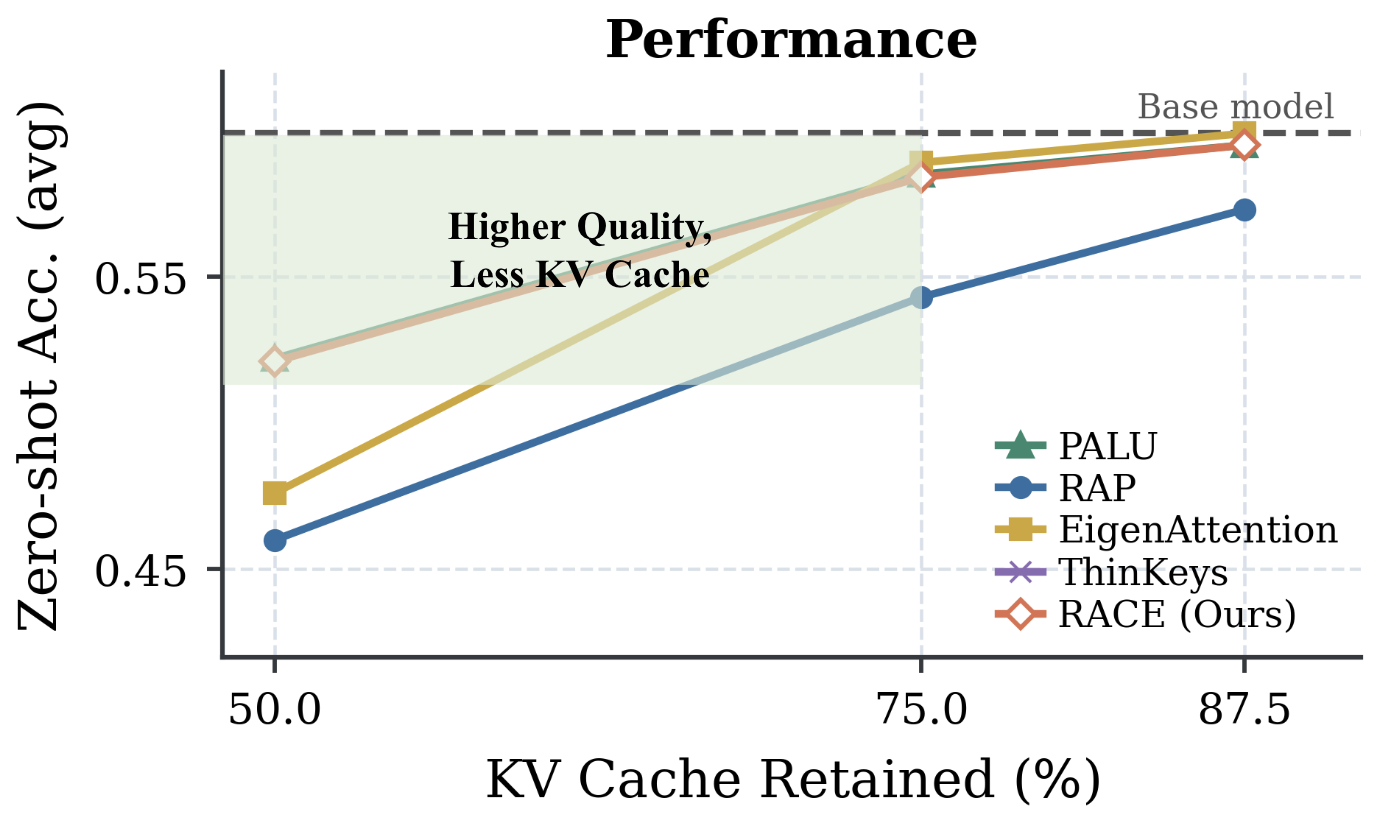}
    \end{minipage}
    \hfill
    \begin{minipage}[t]{0.495\columnwidth}
        \centering
        \includegraphics[width=\linewidth]{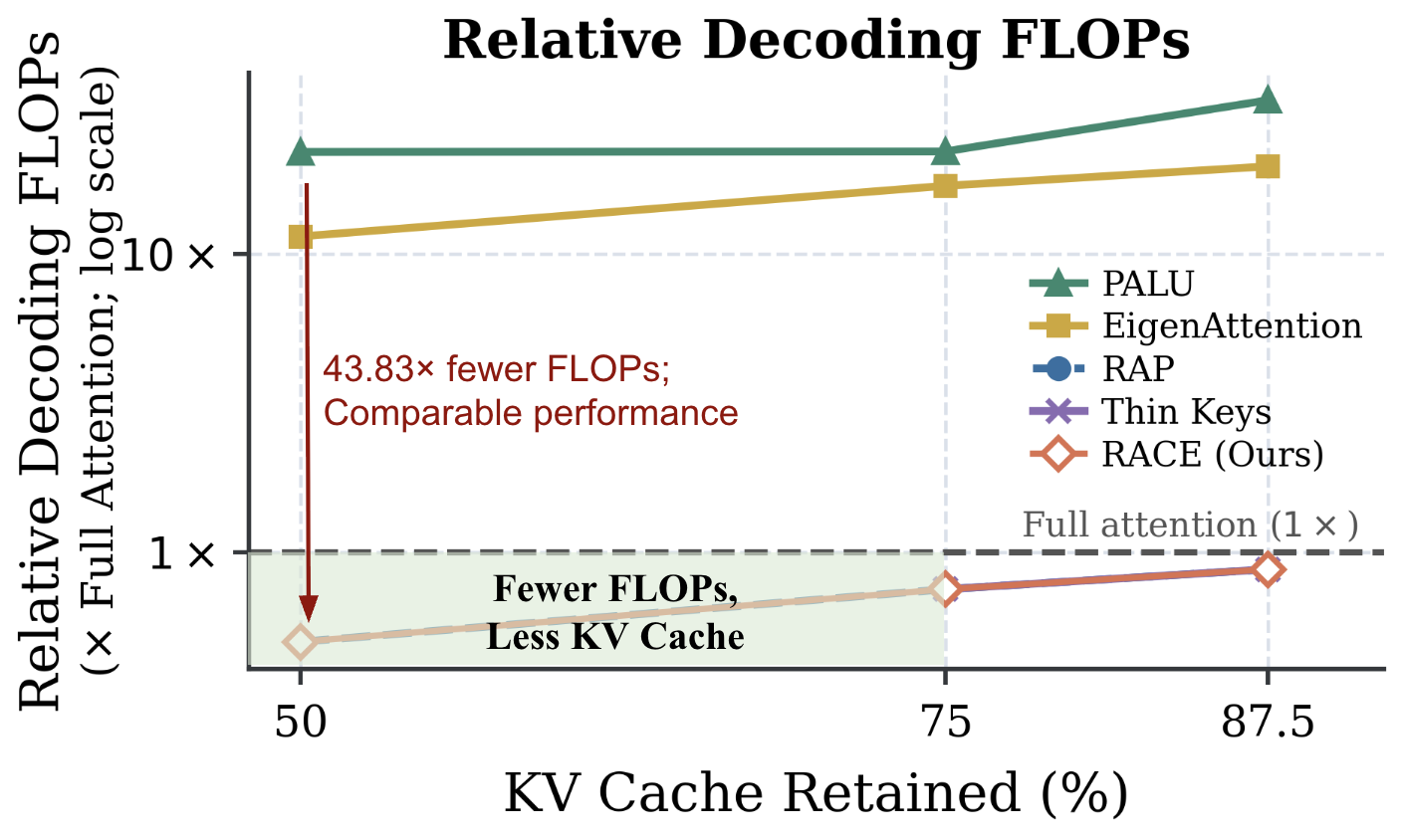}
    \end{minipage}
    \vspace{-0.6cm}
    \caption{\textbf{Quality-efficiency trade-off across KV-cache retention levels on Llama-3.2-3B-Instruct.} Left: average zero-shot accuracy. Right: relative decoding FLOPs on a logarithmic scale, normalized to full attention ($1\times$). The dashed line represent the uncompressed model. RAP and Thin Keys have decoding FLOPs that overlap those of RACE, while Thin Keys' accuracy falls below the displayed y-axis. \emph{RACE combines accuracy competitive with reconstruction-based methods with the low decoding cost of native dimensional reduction.}}
    \label{fig:quality-efficiency}
    \vspace{-0.3cm}
\end{figure}

\begin{table}[]
\caption{
\textbf{Comparison of head-dimension KV-cache compression methods.}
RACE combines a shared, jointly selected, RoPE-compatible Q/K reduction
with native reduced-dimensional execution. FLOPs are normalized to full attention. See ~\ref{app:flops_analysis} for
derivations of FLOPs.
}
\vspace{-0.2cm}
\label{tab:flops_analysis}
\resizebox{\columnwidth}{!}{%

\begin{tabular}{ccccccccccc}
\hline
\rowcolor[HTML]{EFEFEF} 
\textbf{}                & \multicolumn{3}{c}{\cellcolor[HTML]{EFEFEF}\textbf{Excecution}}                                                                     & \multicolumn{3}{c}{\cellcolor[HTML]{EFEFEF}\textbf{Q/K Reduction Criterion}}                                                                                                                                                      & \multicolumn{4}{c}{\cellcolor[HTML]{EFEFEF}\textbf{Efficiency}}                                                                                                                                                                                                                      \\
\textbf{Method}          &  \textbf{\begin{tabular}[c]{@{}c@{}}Reconst-\\ ruction free\end{tabular}} & \textbf{\begin{tabular}[c]{@{}c@{}}Custom-\\ Kernel free\end{tabular}} & 
\textbf{\begin{tabular}[c]{@{}c@{}}Fwd.-only\\ Calibration\end{tabular}} & \textbf{\begin{tabular}[c]{@{}c@{}}Shared \\Q/K Map\end{tabular}} & \textbf{\begin{tabular}[c]{@{}c@{}}Joint Q/K\\ Signal\end{tabular}} & \textbf{\begin{tabular}[c]{@{}c@{}}Native\\ RoPE\end{tabular}} & \textbf{\begin{tabular}[c]{@{}c@{}}KV Cache\\ Retained\end{tabular}} & \textbf{\begin{tabular}[c]{@{}c@{}}Attn. Params\\ Retained\end{tabular}} & \textbf{\begin{tabular}[c]{@{}c@{}}Prefill\\ FLOPs\end{tabular}} & \textbf{\begin{tabular}[c]{@{}c@{}}Decode\\  FLOPs\end{tabular}} \\ \hline
\rowcolor[HTML]{EFEFEF} 
Attention \citep{attention} &       -                                                               & -                                                                  &            -                                                      &            -         &      -                                                                   &             -                                                                  & 100\%                                                                  & 100\%                                                                     & $\times$1                                                                & $\times$1                                                                \\
EigenAttention \citep{saxena2024eigen}         &                             \ding{55}                                       &          \ding{51}       &          \ding{51}                                                 &   \ding{55}                                                                      &            \ding{51}                                                             &               \ding{55}                                                                & 50\%                                                                   & 69.8\%                                                                    & $\times$0.75                                                             & $\times$11.42                                                            \\
\rowcolor[HTML]{EFEFEF} 
PALU \citep{chang2025palu}                 &   \ding{55}                                                                      &     \ding{55}                                                                 &                 \ding{51}            &      \ding{55}                                                              &      \ding{55}                                                                     &           \ding{55}                                                                      & 50\%                                                                   & 58.3\%                                                                    & $\times$0.75                                                                & $\times$21.92                                                            \\
Thin Keys \citep{thinkeys}   &         \ding{51}                                                               &                          \ding{51}    &   \ding{51}                                         &         \ding{55}                                                                  &                  \ding{55}                                                         &                            \ding{55}                                                     & 75\%                                                                   & 50\%                                                                      & $\times$0.75                                                             & $\times$0.75                                                             \\
\rowcolor[HTML]{EFEFEF} 
RAP          \citep{rap}            &         \ding{51}                                                               &        \ding{51}                                                             &       \ding{55}  &   \ding{51}                                                                        &       \ding{55}                                                                    &              \ding{51}                                                                   & 50\%                                                                  & 50\%                                                                     & $\times$0.5                                                              & $\times$0.5                                                              \\
\textbf{RACE (Ours)}     & \textbf{ \ding{51} }                                                            & \textbf{ \ding{51} }                                                         & \textbf{ \ding{51} }                                                               & \textbf{ \ding{51} }                                                               & \textbf{ \ding{51} }              & \textbf{ \ding{51} }                                                       & \textbf{50\%}                                                          & \textbf{50\%}                                                             & \textbf{$\times$0.5}                                                     & \textbf{$\times$0.5}                                                     \\ \hline
\vspace{-0.8cm}
\end{tabular}}
\end{table}

\vspace{-0.2cm}
\section{Related Work}

\paragraph{Compression along other KV-cache axes.}
KV-cache memory can be reduced along the token, precision, or head-dimension
axes as shown in Fig. \ref{fig:cache_axes}. Token-level methods reduce the number of cached positions ($S$ in Eq.\ref{eq1}), through eviction, retention, or compaction
\citep{zhang2023h2o,xiao2024streamingllm,li2024snapkv,cai2024pyramidkv, fu2025headkv,shi2025lacache}. Quantization methods instead reduce the number of
bits used for each cached element, ($b$ in Eq.\ref{eq1}),
\citep{liu2024kivi,hooper2024kvquant,wang2025squat}. Our method preserves the
retained token positions and reduces the number of elements in every key and
value vector. It is therefore complementary to both token reduction and
quantization; we evaluate representative combinations in
Sec.~\ref{sec:complementary}.

\paragraph{Head-dimension KV-cache compression.}
Head-dimension methods reduce the width of the cached representations ($d_k$ and $d_v$ in Eq.\ref{eq1}) for keys and values respectively.
EigenAttention \citep{saxena2024eigen} derives low-rank subspaces from
calibrated activations, whereas PALU \citep{chang2025palu} factorizes attention
projection weights. The dense low-rank compression subspaces in these and similar methods are not generally invariant under the pretrained
RoPE transformation and so these methods recover full-dimensional keys before applying RoPE. This can be compute-intense and so, methods like PALU \citep{chang2025palu} mitigate this by using a custom kernel that does not materialize the full-dimensional reconstructed key tensor. However the inference FLOPs remains dominated by the dense reconstruction basis. 

Thin Keys \citep{thinkeys} avoids key reconstruction by
reducing the shared Q/K dimension, but retains all dimensions of Values. A concurrent preprint, RAP \citep{rap}, performs RoPE-aligned structured parameter pruning using empirical-Fisher scores computed from gradients of $W_K$ and arrives at a similar RoPE-aligned native architecture from a different starting point to RACE. While RAP approaches the problem as structured parameter pruning, RACE begins from the problem of constrained-AMM as shown in Sec.\ref{sec:amm_motivation}. The methods can therefore produce similar reduced-dimensional architectures, but differ in their compression objective and calibration signal. Additionally, our calibration is forward-only and does not depend on a task loss or the batch-sensitivity of Fisher estimates \citep{adabet}. Figure \ref{fig:quality-efficiency} compares our solution to previous techniques in terms of accuracy and FLOPs across a range of compression levels.

Methods including MatryoshkaKV \citep{lin2025matryoshkakv}, STAR-KV
\citep{starkv}, ThinK \citep{think}, and OjaKV
\citep{ojakv} additionally study heterogeneous ranks or
input-dependent compression. These methods primarily
address \emph{where} or \emph{when} to allocate compression across layers,
heads, or inputs. Our work instead addresses \emph{how} to construct a fixed,
RoPE-compatible reduced representation that operates natively without runtime
selection, adaptation, or full-dimensional reconstruction.

\begin{figure}[t]
    
        \centering
        \includegraphics[width=\linewidth]{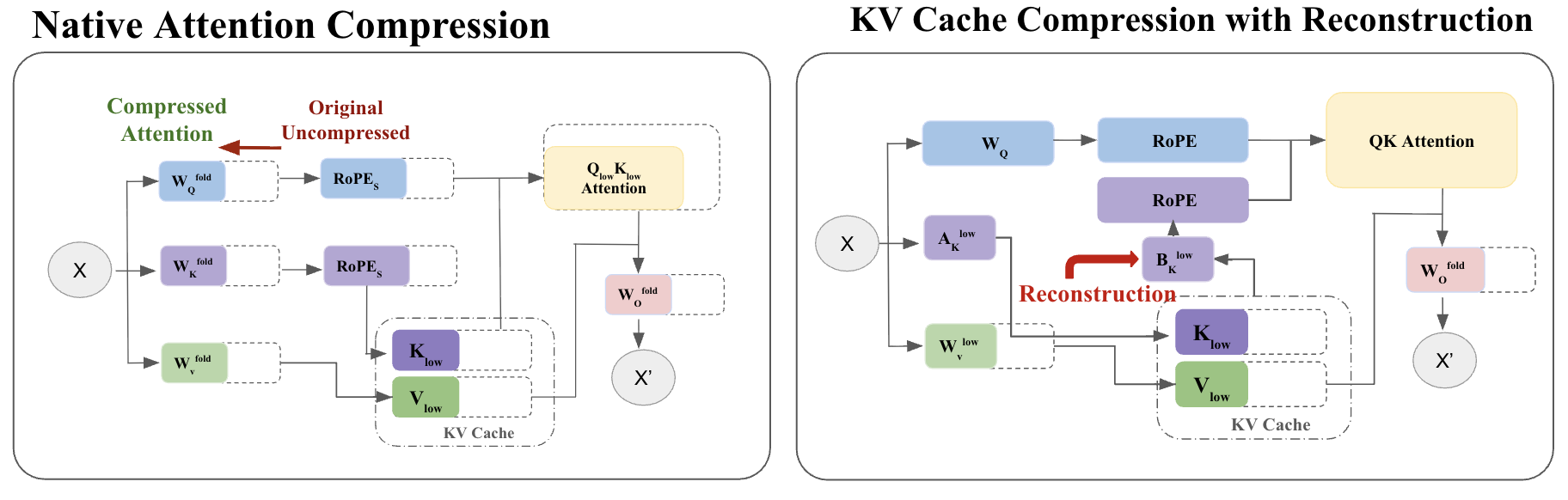}
\vspace{-0.5cm}
    \caption{\textbf{Native attention compression along the head-dimension.}  Reconstruction-based methods (right) may cache low-dimensional representations yet return to the original feature dimension during inference. This is often done using $A_K^{low}$ and $B_K^{low}$ that are decomposition of $W_K$. We instead seek a native reduction (left) that can be applied before RoPE and folded into the pretrained projections, so that attention is executed natively in the reduced dimension. Dashed boxes indicate the corresponding full-dimensional operations.}
    \label{fig:compressed_attn}
    \vspace{-0.6cm}
\end{figure}

\section{Method}

\paragraph{Problem Formulation.}
\label{sec:problem_formulation}
Consider an attention head with pre-RoPE activations
$Q,K\in\mathbb{R}^{S\times d_k}$ and
$V\in\mathbb{R}^{S\times d_v}$.
Given target dimensions $r_k<d_k$ and $r_v<d_v$, we seek
reduced-dimensional representations
\begin{equation}
     Q' =  QP_Q,\qquad
     K' = KP_K,\qquad
    V' = VP_V,
\end{equation}
with $P_Q,P_K\in\mathbb{R}^{d_k\times r_k}$ and
$P_V\in\mathbb{R}^{d_v\times r_v}$, such that the resulting layer
stores only $K'$ and $V'$ in the KV cache and performs attention
directly in dimensions $r_k$ and $r_v$, without reconstructing the
discarded coordinates at inference.

For native reduced-dimensional execution, the Q/K reductions must
ultimately be applied to the \emph{pre-}RoPE activations so that they can be
folded into the query and key projection weights. However, the quantity
that determines attention logits is the \emph{post-}RoPE query-key product. 
We therefore first characterize the low-dimensional Q/K representation
that preserves this interaction, and then determine when it can be
realized equivalently before RoPE. Finally we introduce RACE, a criterion that satifies these conditions and effectively convert a pretrained attention to a natively reduced-dimensional attention.

\subsection{Native Attention Compression as Constrained AMM}
\label{sec:amm_motivation}

Let $\widetilde Q=\operatorname{RoPE}(Q),
    \widetilde K=\operatorname{RoPE}(K)$.
Since attention logits depend on the post-RoPE product
$\widetilde Q\widetilde K^\top$, we formulate Q/K head-dimension
reduction as an approximate matrix multiplication (AMM) problem.
We are seeking a column-orthonormal reduction
$P\in\mathbb{R}^{d_k\times r_k}$ such that,
\begin{equation}
    (\widetilde QP)(\widetilde KP)^\top
    =
    \widetilde QPP^\top\widetilde K^\top
    \approx
    \widetilde Q\widetilde K^\top .
    \label{eq:amm_qk}
\end{equation}
This is consistent with classical matrix-sketching and AMM results,
where a common lower-dimensional map is applied along the shared
dimension of the two operands
\citep{woodruff2014sketching,cohen2016optimal}. Detailed explanation is given in \ref{app:amm}.

This view motivates two design principles. First, \emph{queries and keys should be represented in a shared low-dimensional subspace}, since their
interaction is evaluated along the same feature dimension. We therefore
seek a common reduction $P_{QK}$ for both operands. Second, when this
subspace is chosen from calibration data, \emph{its selection criterion should
reflect both matrices whose product is being approximated}. Because the
target interaction depends jointly on $\widetilde Q$ and $\widetilde K$,
we construct $P_{QK}$ from statistics of both query and key activations
rather than either operand in isolation as in the concurrent work, RAP\citep{rap} or ThinKeys\citep{thinkeys}. This is validated empirically in Fig.\ref{fig:error}.



These observations specify how the Q/K interaction should be reduced, but not yet whether the reduction can be executed natively before RoPE to get the desired systems benefits. Applying \(P_{QK}\) only after RoPE would still require full-dimensional query and key projections and full-dimensional rotary encoding, reducing the KV cache width without fully reducing the attention computation as in reconstruction-based methods. \emph{Native execution therefore requires the shared reduction to be realizable before RoPE}. Section \ref{sec:amm_motivation} establishes how the query-key interaction should be reduced, but not whether the shared reduction can be realized before RoPE. Section \ref{sec:rope_qk} characterizes the additional structural condition under which this is possible.

\subsection{RoPE-Compatible Query--Key Reduction}
\label{sec:rope_qk}

Section~\ref{sec:amm_motivation} motivates representing queries and keys
through a shared low-dimensional map. To apply this reduction
\emph{before} RoPE, it must preserve the corresponding blocks of the
pretrained rotary transformation as detailed in \ref{app:rope_details}. 

Let $R_t
    =
    \operatorname{diag}(R_{1,t},\ldots,R_{M,t})
    \in\mathbb{R}^{d_k\times d_k}$
be the pretrained RoPE operator at position $t$, where each
$R_{m,t}\in\mathbb{R}^{2\times2}$ acts on one rotary pair. For a retained
pair set $\mathcal S\subseteq\{1,\ldots,M\}$, define $R_t^{(\mathcal S)}
    =
    \operatorname{diag}(R_{m,t}:m\in\mathcal S)
    \in\mathbb{R}^{r_k\times r_k}.
$
Thus, $R_t^{(\mathcal S)}$ is the restriction of the original pretrained
operator to the retained pairs. We say that a reduction
$P_{QK}\in\mathbb{R}^{d_k\times r_k}$ is RoPE-compatible if
\begin{equation}
    R_tP_{QK}
    =
    P_{QK}R_t^{(\mathcal S)}
    \qquad\text{for all }t.
    \label{eq:rope_compatibility}
\end{equation}
This condition ensures that applying the reduction before RoPE is equivalent to applying full-dimensional RoPE and then restricting to the retained coordinates. Consequently, query/key projection, rotary encoding, and attention can all be performed directly in dimension \(r_k\). A simple admissible family of $P_{QK}$ is obtained by retaining complete RoPE pairs. The following theorem shows that this family exactly characterizes RoPE-compatible coordinate-selection reductions.


\begin{theorem}[RoPE-compatible coordinate selection]
\label{thm:rope_compatibility}
Suppose for every rotary pair $m$, there exists a position $t$ such that $\sin \phi_{m,t} \neq 0$. A coordinate-selection matrix $P_{QK}\in\mathbb{R}^{d_k\times r_k}$
satisfies Eq.~\ref{eq:rope_compatibility} for all $t$ if and only if
it selects exactly the coordinates belonging to a union of complete
rotary pairs indexed by $\mathcal S\subseteq\{1,\ldots,M\}$,
with the selected coordinates ordered consistently with
$R_t^{(\mathcal S)}$.
\end{theorem}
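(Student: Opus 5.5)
We prove the two directions separately, writing a coordinate-selection matrix as $P_{QK}=[e_{i_1},\ldots,e_{i_{r_k}}]$ with distinct indices $i_1,\ldots,i_{r_k}\in\{1,\ldots,d_k\}$. Column $j$ of $R_tP_{QK}$ is $R_te_{i_j}$, and column $j$ of $P_{QK}R_t^{(\mathcal S)}$ is $\sum_l (R_t^{(\mathcal S)})_{lj}e_{i_l}$. We use the block form $R_{m,t}=\begin{pmatrix}\cos\phi_{m,t} & -\sin\phi_{m,t}\\ \sin\phi_{m,t} & \cos\phi_{m,t}\end{pmatrix}$ acting on coordinates $(2m-1,2m)$. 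If the paper's RoPE uses the interleaved or half-split convention, only the indexing of pairs changes.

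\emph{Sufficiency.} Suppose the selected indices are, in order, $(2m_1-1,2m_1,\ldots,2m_s-1,2m_s)$ with $\mathcal S=\{m_1,\ldots,m_s\}$ ordered as in $R_t^{(\mathcal S)}$. Since $R_t$ is block diagonal, $R_t e_{2m-1}$ and $R_t e_{2m}$ lie in $\operatorname{span}\{e_{2m-1},e_{2m}\}$, with coefficients given by the columns of $R_{m,t}$. Hence $R_tP_{QK}$ places the block $R_{m,t}$ in the rows of pair $m$ and the columns of its slot. This is exactly $P_{QK}R_t^{(\mathcal S)}$, column by column.

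\emph{Necessity.} Assume Eq.~\ref{eq:rope_compatibility} holds for all $t$. Suppose some selected index $i_j$ belongs to pair $m$, and let $i'$ be its partner coordinate. Choose $t$ with $\sin\phi_{m,t}\neq0$. Then $R_te_{i_j}=\cos\phi_{m,t}\,e_{i_j}\pm\sin\phi_{m,t}\,e_{i'}$ has a nonzero entry in row $i'$. The right-hand side $P_{QK}R_t^{(\mathcal S)}$ has support only on selected rows, so $i'$ must be selected. This shows the selected set is a union of complete pairs, indexed by some $\mathcal S$.

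For the ordering, compare entries. Row $i_l$ of $P_{QK}R_t^{(\mathcal S)}e_j$ equals $(R_t^{(\mathcal S)})_{lj}$, and row $i_l$ of $R_te_{i_j}$ equals $(R_t)_{i_l i_j}$. Hence $(R_t^{(\mathcal S)})_{lj}=(R_t)_{i_li_j}$ for all $l,j,t$, i.e. $R_t^{(\mathcal S)}=P_{QK}^\top R_tP_{QK}$.

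\emph{Main obstacle: deriving the ordering from this identity.} $R_t^{(\mathcal S)}$ is block diagonal with consecutive $2\times2$ blocks in its fixed order. Taking $t$ with $\sin\phi_{m,t}\ne0$ for a given pair forces two things. First, the off-diagonal nonzero entry $(l,j)$ of each block must correspond to $i_l,i_j$ being partners in the same pair. Second, the sign pattern ($-\sin$ above the diagonal) must match, which fixes the within-pair order $(2m-1,2m)$ rather than the swapped order. Reversing the order would replace $\sin$ by $-\sin$, a contradiction since $\sin\phi_{m,t}\neq0$.

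A subtlety remains: if two pairs have identical frequencies for all $t$, permuting those blocks is also consistent. We interpret ``ordered consistently with $R_t^{(\mathcal S)}$'' as exactly the statement $P_{QK}^\top R_tP_{QK}=R_t^{(\mathcal S)}$ blockwise, which is what the entry comparison gives.
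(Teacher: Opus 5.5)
Your proof is correct and follows the paper's own argument. Sufficiency comes from the block-diagonal structure of $R_t$. Necessity comes from noting that $P_{QK}R_t^{(\mathcal S)}$ is supported only on the selected rows (the paper phrases this as $R_t$-invariance of $\operatorname{col}(P_{QK})$), so any $t$ with $\sin\phi_{m,t}\neq 0$ forces the partner coordinate to be selected. Your extra step through $P_{QK}^\top R_tP_{QK}=R_t^{(\mathcal S)}$ and the sign comparison goes beyond the paper, which proves only the complete-pair part of the converse. The paper leaves the ordering clause, and the degenerate equal-frequency case you flag, unaddressed.
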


The proof is provided in ~\ref{app:proofs}. The remaining question is which admissible pairs should be retained. Section \ref{sec:pair_selection} introduces a calibration criterion for ranking the admissible pairs. 

\subsection{Calibration-Guided Selection Within the RoPE-Compatible Family}
\label{sec:pair_selection}

Section \ref{sec:rope_qk} characterizes a family of structurally valid native reductions by restricting selection to complete RoPE pairs. What remains is no longer a compatibility question, but a ranking question over this admissible pairs to determine what pairs should be retained. Because the approximation target in Section \ref{sec:amm_motivation} depends on both queries and keys, we rank candidate pairs using a joint calibration statistic over the two operands.

Let $\mathcal{P}_{\mathrm{rope}}
    =
    \{(i_m,j_m)\}_{m=1}^{M}$
denote the RoPE pairs of an attention head, and let $Q_m = Q_{:,\{i_m,j_m\}},
    K_m = K_{:,\{i_m,j_m\}}$.
We assign pair $m$ the joint query-key energy
\begin{equation}
    E_m
    =
    \|Q_m\|_F^2
    +
    \|K_m\|_F^2.
    \label{eq:pair_energy}
\end{equation}
For target dimension $r_k$, we retain the $r_k/2$ pairs with the
largest energies, $\mathcal{S}
    =
    \operatorname{TopK}_{m\in[M]}
    \left(E_m,\frac{r_k}{2}\right)$ 
and construct $P_{QK}
    =
    [e_i]_{i\in\mathcal{I}_{QK}}$ where $
    \mathcal{I}_{QK}
    =
    \bigcup_{m\in\mathcal{S}}\{i_m,j_m\}.
$

The following result provides a calibration-dependent justification for
this selection rule.

\begin{proposition}[Pair-energy approximation bound]
\label{prop:pair_energy_bound}
Let
$\widetilde Q=\operatorname{RoPE}(Q)$ and
$\widetilde K=\operatorname{RoPE}(K)$, and let
$\widetilde Q_m,\widetilde K_m$ denote the coordinates associated with
RoPE pair $m$. For any retained pair set
$\mathcal{S}$,
\begin{equation}
\left\|
    \widetilde Q\widetilde K^\top
    -
    \sum_{m\in\mathcal{S}}
    \widetilde Q_m\widetilde K_m^\top
\right\|_F
\leq 
\sum_{m\notin\mathcal{S}} \left\|\widetilde Q_m\right\|_F \left\|\widetilde K_m\right\|_F
\leq
\frac{1}{2}
\sum_{m\notin\mathcal{S}} E_m.
\label{eq:pair_energy_bound}
\end{equation}

\end{proposition}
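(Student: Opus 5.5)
The plan is to decompose the post-RoPE product along rotary pairs, then bound the discarded terms with the triangle inequality, submultiplicativity of the Frobenius norm, and AM--GM. The last step uses the fact that RoPE acts by orthogonal $2\times 2$ rotations on each pair, so per-pair Frobenius norms are unchanged by RoPE.

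\emph{Step 1: exact pair decomposition.} The coordinates $\{1,\ldots,d_k\}$ are partitioned into the $M$ rotary pairs $\{i_m,j_m\}$. For any two matrices with $d_k$ columns, $\widetilde Q\widetilde K^\top=\sum_{j=1}^{d_k}\widetilde Q_{:,j}\widetilde K_{:,j}^\top$. Grouping the columns by pairs gives $\widetilde Q\widetilde K^\top=\sum_{m=1}^{M}\widetilde Q_m\widetilde K_m^\top$. Hence the residual in Eq.~\ref{eq:pair_energy_bound} equals exactly $\sum_{m\notin\mathcal S}\widetilde Q_m\widetilde K_m^\top$.

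\emph{Step 2: first inequality.} I apply the triangle inequality to this sum. Each summand satisfies $\|\widetilde Q_m\widetilde K_m^\top\|_F\le\|\widetilde Q_m\|_F\|\widetilde K_m^\top\|_2\le\|\widetilde Q_m\|_F\|\widetilde K_m\|_F$, which is standard submultiplicativity. This yields the middle expression.

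\emph{Step 3: second inequality.} Row $s$ of $\widetilde Q_m$ equals row $s$ of $Q_m$ multiplied by $R_{m,t_s}^\top$, where $t_s$ is the position of token $s$. Since $R_{m,t_s}$ is a rotation, each row norm is preserved, so $\|\widetilde Q_m\|_F=\|Q_m\|_F$. The same argument gives $\|\widetilde K_m\|_F=\|K_m\|_F$. The only point needing care is that the rotation angle varies with the row (token position), so the invariance must be applied row by row rather than as a single right-multiplication. By AM--GM, $ab\le\tfrac12(a^2+b^2)$, so each term satisfies $\|Q_m\|_F\|K_m\|_F\le\tfrac12 E_m$. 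Summing over $m\notin\mathcal S$ completes the chain.

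None of the steps is a real obstacle; the argument is bookkeeping. The main thing to get right is the rowwise norm invariance in Step 3, together with checking that the pair blocks $\widetilde Q_m$ are taken in the same coordinate ordering as $R_t$, which is the block-diagonal structure used in Theorem~\ref{thm:rope_compatibility}.
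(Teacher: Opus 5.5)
Your proposal is correct and follows the paper's proof essentially step for step. Both decompose $\widetilde Q\widetilde K^\top$ pairwise, apply the triangle inequality, use $\|AB^\top\|_F\le\|A\|_F\|B\|_F$, and finish with AM--GM, together with the identity $\|\widetilde Q_m\|_F=\|Q_m\|_F$, $\|\widetilde K_m\|_F=\|K_m\|_F$ that the paper states in a separate remark after the proof. Your row-by-row handling of the position-dependent rotation is a slightly more careful version of the paper's one-line justification of that invariance.
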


Consequently, among all subsets containing the same number of complete
RoPE pairs, selecting the pairs with the largest $E_m$ minimizes the separable upper bound in Eq.~\ref{eq:pair_energy_bound}.  Proof of this proposition is given in \ref{app:proofs}. Proposition~\ref{prop:pair_energy_bound} does not imply that the selected
pair set minimizes the exact post-RoPE query-key product error. Rather, it shows
that the proposed top-energy rule exactly minimizes a worst-case, separable calibration-dependent upper bound in the right-hand side of 
Eq.~\ref{eq:pair_energy_bound}. We use this result as a tractable forward-only criterion for choosing within the RoPE-compatible family. 
 We evaluate the robustness of this
calibration-dependent selection empirically in
Section~\ref{sec:ablation1}. 


\paragraph{Value compression.}
Values are not subject to RoPE, so we follow the standard low-rank activation projection used in prior head-dimension compression methods~\citep{saxena2024eigen}. Given calibration activations
$V\in\mathbb{R}^{S_{\mathrm{cal}}\times d_v}$, we retain the top-$r_v$ right singular vectors
$P_V\in\mathbb{R}^{d_v\times r_v}$ and compute $
V' = VP_V.
$
\begin{algorithm}[t]
\caption{RACE: Post-Training RoPE-Aligned Attention Compression}
\label{alg:method}
\begin{algorithmic}[1]
\REQUIRE Pretrained model $\mathcal M$, calibration set $\mathcal D_{\mathrm{cal}}$,
target dimensions $r_k,r_v$
\FOR{each attention layer}
    \STATE Collect post-RoPE $Q,K,V$ activations on $\mathcal D_{\mathrm{cal}}$
    \STATE For RoPE pairs, compute $E_m
    =
    \|Q_m\|_F^2
    +
    \|K_m\|_F^2$ 
    \STATE Construct $P_{QK}$ that selects the top-$r_k/2$ pairs based on $E_m$
    \STATE Fold in $P_{QK}$. $W_Q^{\mathrm{fold}} = W_Q P_{QK};
    W_K^{\mathrm{fold}} = W_K P_{QK}$
    \STATE Compute the top-$r_v$ right singular vectors of V to get $P_V$
  \STATE Fold in $P_{V}$. $W_V^{\mathrm{fold}} = W_V P_{V};
    W_O^{\mathrm{fold}} = P_{V}^\top W_O$
\ENDFOR
\STATE Optional: recover performance of $\mathcal M'$ from $\mathcal M$ 
\RETURN Native attention-compressed model $\mathcal M'$

\end{algorithmic}
\end{algorithm}

\subsection{Folding into Native Reduced-Dimensional Attention}
\label{sec:folding}

Having obtained the RoPE-compatible query-key selector $P_{QK}$ and the value subspace $P_V$, we fold both transformations directly into
the pretrained attention projections. For queries and keys,
\begin{equation}
Q' = XW_Q^{\mathrm{fold}},
    K' = XW_K^{\mathrm{fold}} \qquad
\text{where,}\qquad
    W_Q^{\mathrm{fold}} = W_Q P_{QK},
    W_K^{\mathrm{fold}} = W_K P_{QK}.
\end{equation}
$Q'$ and $K'$ are
$r_k$-dimensional queries and keys to which the restricted RoPE operator from
Section~\ref{sec:rope_qk} is applied. Hence, neither the discarded
query/key coordinates nor a full-dimensional RoPE representation is
materialized during inference.

For values, we similarly fold the calibrated subspace into the value
and output projections - $W_V^{\mathrm{fold}} = W_V P_V,
    W_O^{\mathrm{fold}} = P_V^\top W_O$ and $V' = XW_V^{\mathrm{fold}}, Y' = (A'V')W_O^{\mathrm{fold}}$
where $A'$ is computed entirely in the reduced dimension
$r_k$ using the retained RoPE pairs.

\begin{figure}[t]
    \centering

    \begin{subfigure}[t]{0.32\textwidth}
        \centering
        \includegraphics[width=\linewidth]{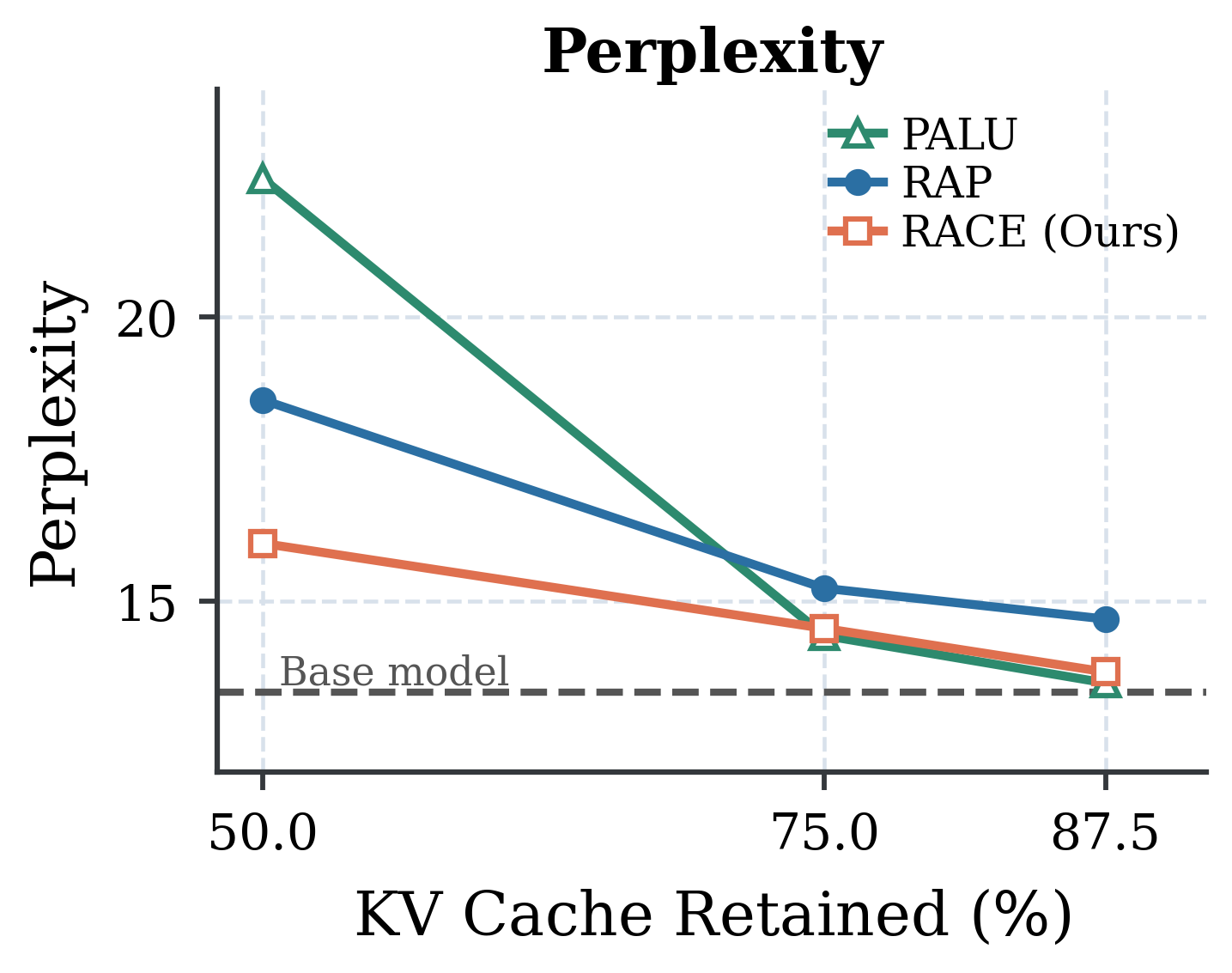}
        \caption{Llama-3.2-3B-Instruct Perplexity}
        \label{fig:result_a}
    \end{subfigure}
    \hfill
    \begin{subfigure}[t]{0.32\textwidth}
        \centering
        \includegraphics[width=\linewidth]{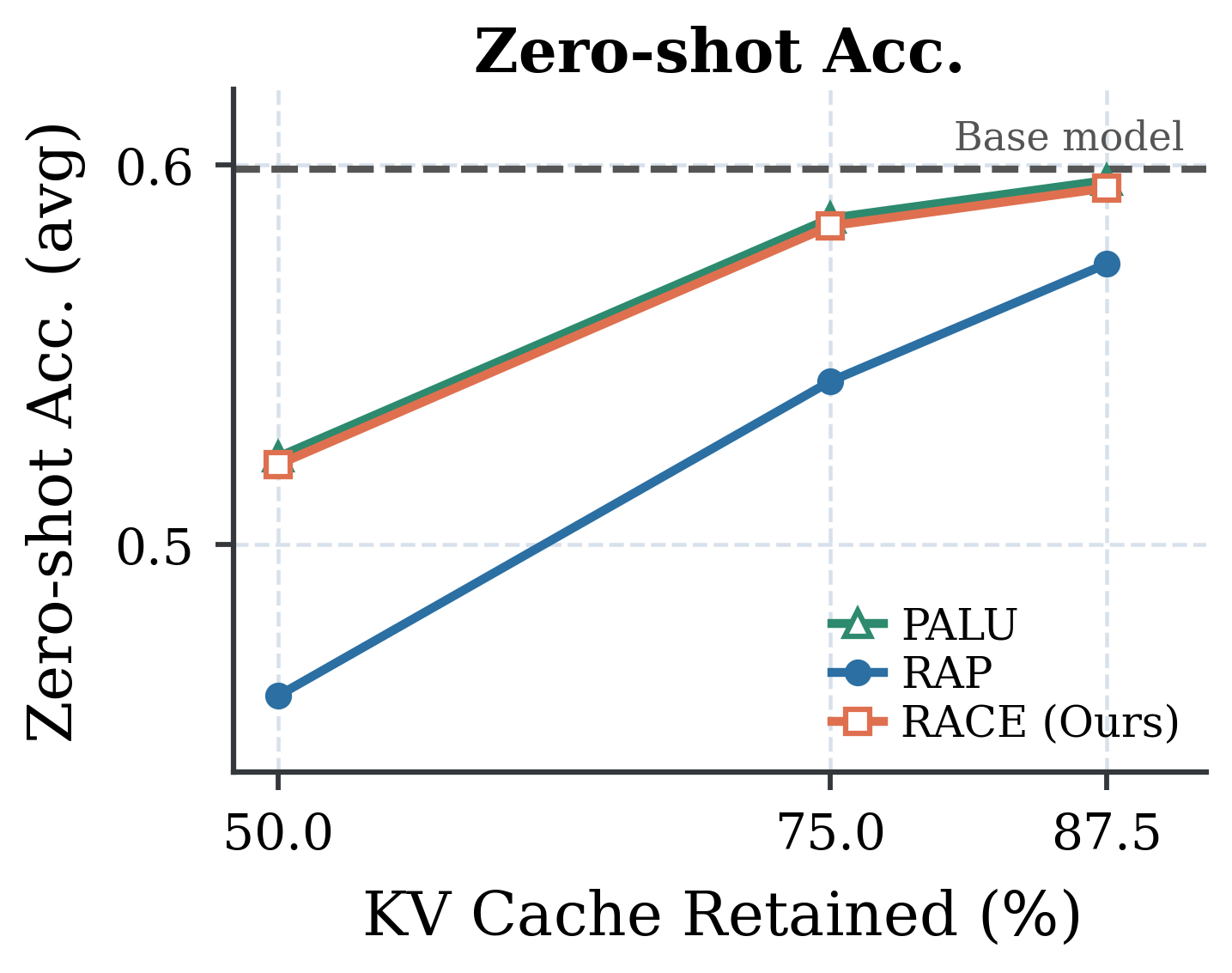}
        \caption{Llama-3.2-3B-Instruct Average zero-shot accuracy}
        \label{fig:result_b}
    \end{subfigure}
    \hfill
    \begin{subfigure}[t]{0.32\textwidth}
        \centering
        \includegraphics[width=\linewidth]{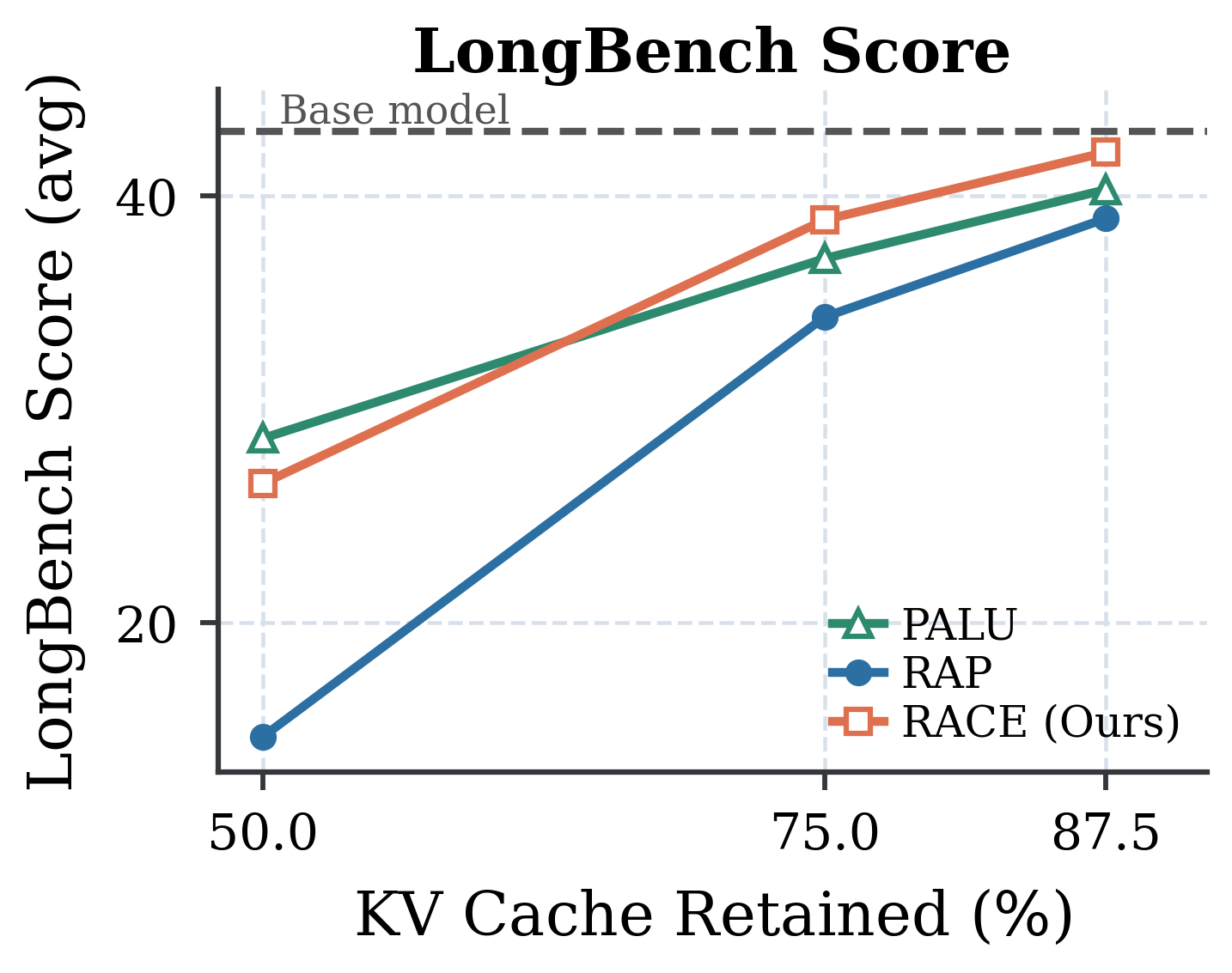}
        \caption{Llama-3.2-3B-Instruct LongBench Score}
        \label{fig:result_c}
    \end{subfigure} \\
    \begin{subfigure}[t]{0.32\textwidth}
        \centering
        \includegraphics[width=\linewidth]{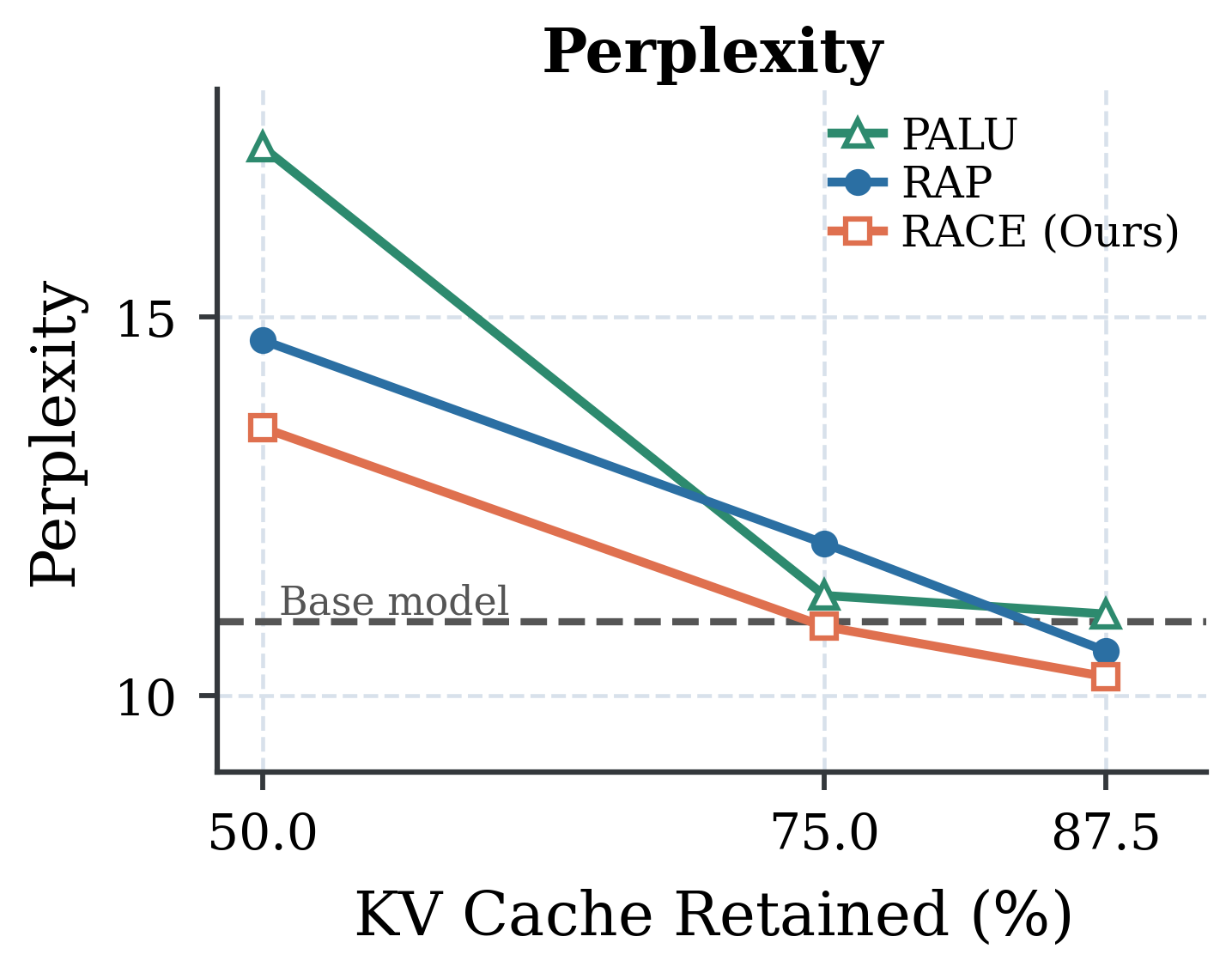}
        \caption{Qwen2.5-7B Perplexity}
        \label{fig:result_a}
    \end{subfigure}
    \hfill
    \begin{subfigure}[t]{0.32\textwidth}
        \centering
        \includegraphics[width=\linewidth]{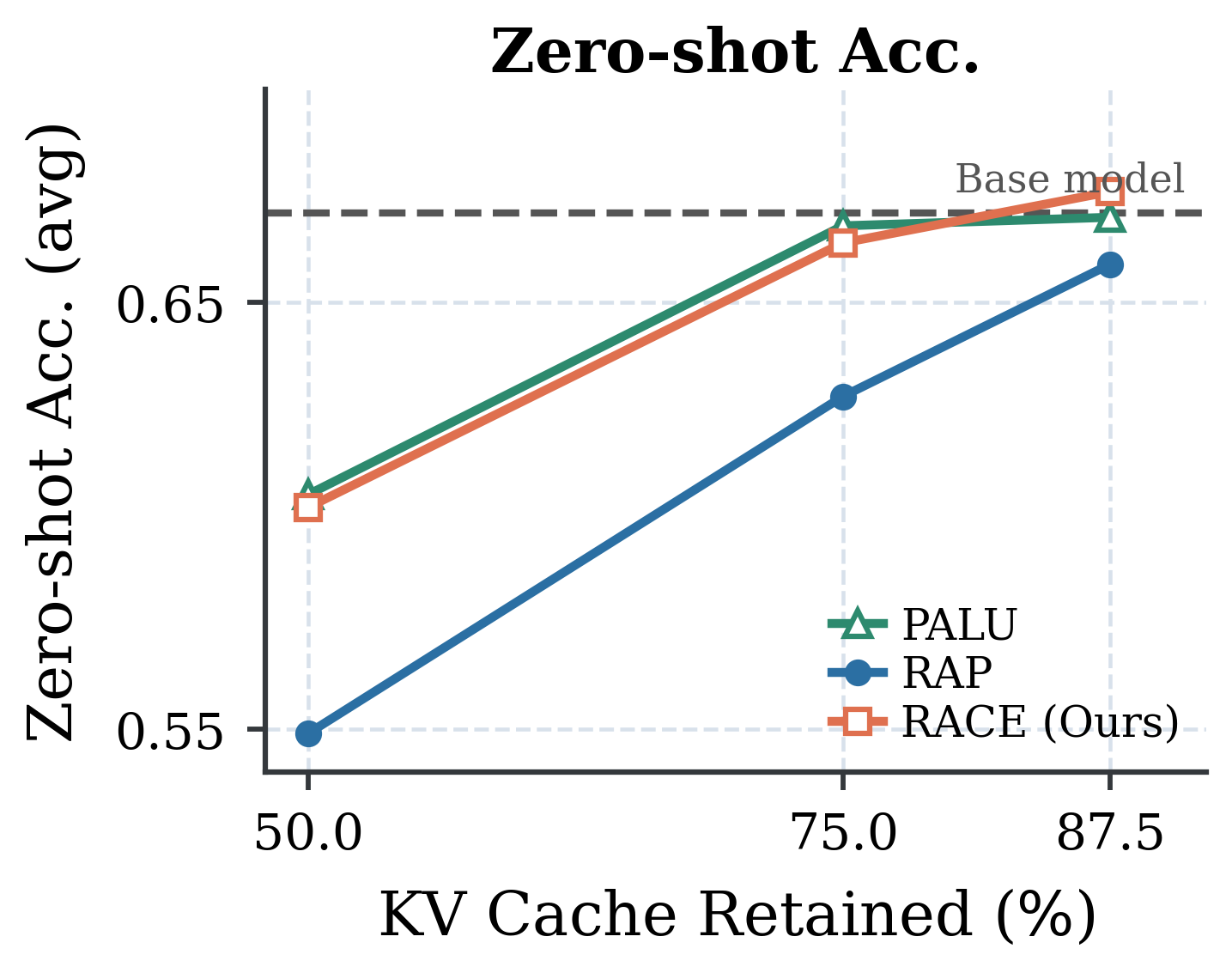}
        \caption{Qwen2.5-7B Average zero-shot accuracy}
        \label{fig:result_b}
    \end{subfigure}
    \hfill
    \begin{subfigure}[t]{0.32\textwidth}
        \centering
        \includegraphics[width=\linewidth]{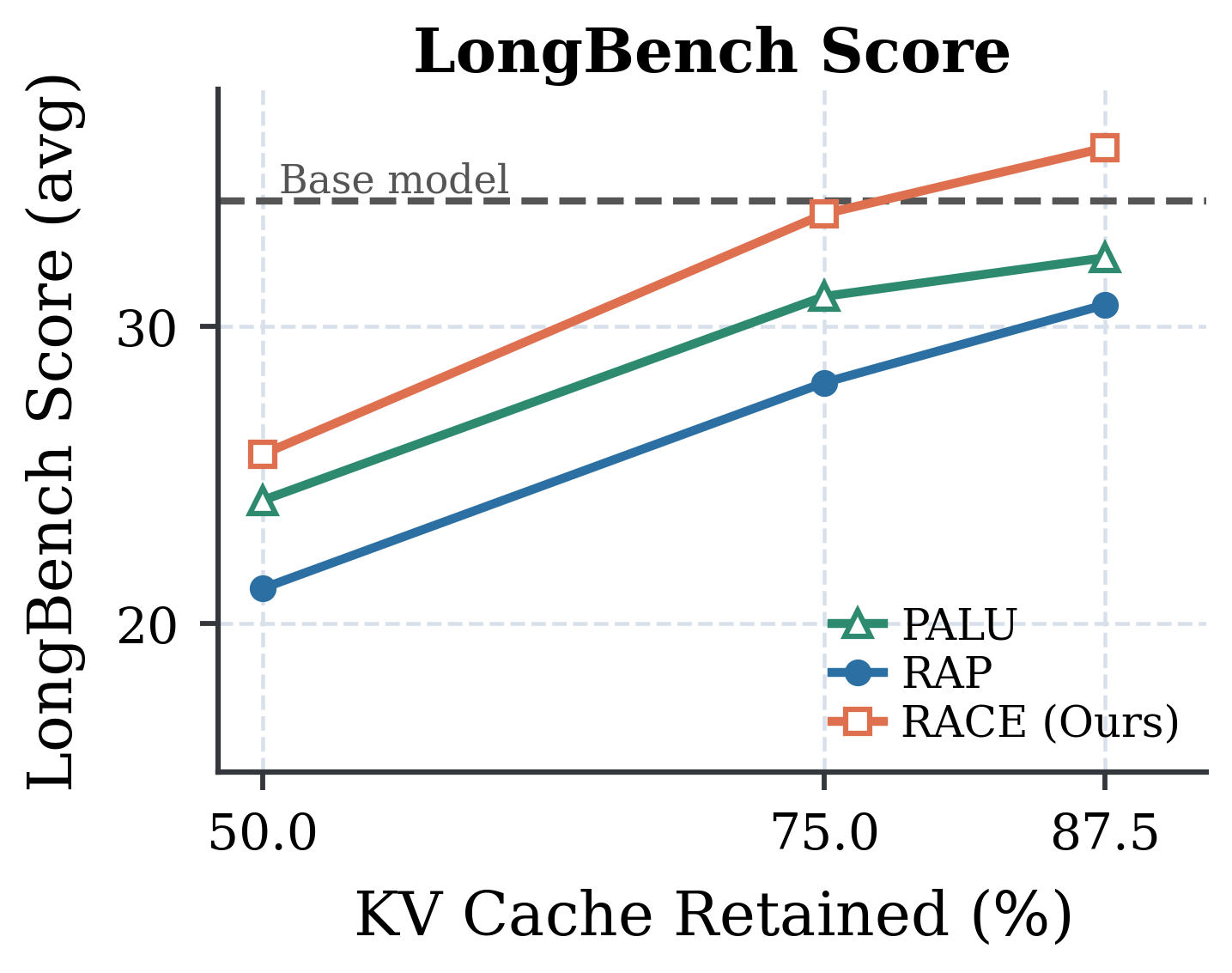}
        \caption{Qwen2.5-7B LongBench Score}
        \label{fig:result_c}
    \end{subfigure}

    \caption{\textbf{Quality across KV-cache retention ratios on
    Llama-3.2-3B-Instruct (Top) and Qwen2.5-7B (Bottom).} WikiText perplexity (lower is better), and mean accuracy across 15 zero-shot tasks and mean LongBench score on 16 english tasks  (higher is better). }
    \label{fig:KV_vary}
    \vspace{-0.6cm} 
\end{figure}

This folding converts the original pretrained attention layer into a
native architecture with query/key dimension $r_k$ and value dimension
$r_v$. The resulting model stores only reduced-dimensional keys and
values and performs RoPE, query-key comparison, value aggregation, and
output projection without reconstructing the discarded dimensions. Consequently, the reduction decreases KV-cache storage, attention
arithmetic, and attention projection parameters simultaneously.

\paragraph{Recovery.} For aggressive compression, we recommend a short one-time post-fold recovery stage using KL distillation from the uncompressed model (Fig. \ref{fig:FT_vary}). We update the reduced attention parameters $W_Q^{\mathrm{fold}}, W_K^{\mathrm{fold}}, W_V^{\mathrm{fold}}, W_O^{\mathrm{fold}}$ while keeping the rest of the model frozen. Recovery is performed directly on the natively compressed checkpoint using 1M FineWeb-Edu tokens. Its effect is modest at upto 75\% KV retention but substantial at 50\%, where it restores much of the performance. Therefore, while moderate compression requires little or no adaptation; for 50\% retention, we recommend a short one-time recovery stage for RACE and RAP.

\section{Experiments and Results}
\label{sec:results}

\paragraph{Implementation Details.}
\label{sec:implementation_details}
We implement RACE in PyTorch and Hugging Face Transformers. Calibration
uses 32,768 FineWeb-Edu tokens, with statistics and SVDs computed in
FP32 during calibration and inference is performed in BF16. We respect each architecture's
physical rotary-pair layout and retain the original frequency indices
and RoPE scaling rules; selected pairs are not re-indexed as consecutive
frequencies. Compressed checkpoints 
resize the attention projections and cache dimensions while maintaining the $\frac{1}{\sqrt{d_k}}$ scaling, and run using
standard layers and PyTorch attention operations. Thus,
RACE requires no specialized CUDA/Triton kernel. While all the results in Sec.\ref{sec:results} were reported with SDPA attention backend, Tab.\ref{tab:kernel_analysis} shows systems results with eager and flash attention backends. Although different kernels optimize for wall-clock time, the number of FLOPs remian the same. Although optional, all the results of RACE and the concurrent work, RAP involve recovery with
KL-distillation over approximately 1M FineWeb-Edu tokens as detailed in \ref{app:recovery}.

\subsection{Quality at Matched KV-Cache Budgets}
\label{sec:quality_cache_tradeoff}

We evaluate Qwen2.5-7B,
Llama-3.2-3B-Instruct, and  Granite-4.1-3B against RoPE-compatible implementations of
PALU-GLRD~\citep{chang2025palu}, EigenAttention\citep{saxena2024eigen}, Thin Keys\citep{thinkeys} and concurrent work,
RAP~\citep{rap}. The full results are shown in Table \ref{tab:ZS_full} in the Appendix. Fig.\ref{fig:KV_vary} shows the same on Qwen2.5-7B and
Llama-3.2-3B-Instruct for RACE, the concurrent work and structured pruning baseline: RAP, and the reconstruction-based baseline: PALU. We finetune reduced attention weights of both RAP and RACE for 1M tokens of FineWeb-Edu using KL distillation from the respective uncompressed model for fair comparison. We report
WikiText perplexity, mean accuracy across 15 zero-shot tasks (Tab.\ref{tab:ZS_full}), and LongBench score average over 16 English tasks (Tab.\ref{tab:lb_full}). At 75\% KV-cache retention, RACE preserves 98.5\%, 96.5\%, and 97.5\%
of the uncompressed zero-shot accuracy on Qwen, Granite, and Llama,
respectively. Across compression
levels, on Llama and Qwen models (Figure~\ref{fig:KV_vary}), RACE closely matches PALU on
zero-shot accuracy and LongBench without runtime reconstruction.  RACE consistently outperforms RAP. RACE also achieves the lowest perplexity at 50\%
and 75\% retention.

\subsection{Measured Systems Benefits}

\begin{figure}[t]
    \centering

    \begin{subfigure}[t]{0.32\textwidth}
        \centering
        \includegraphics[width=\linewidth]{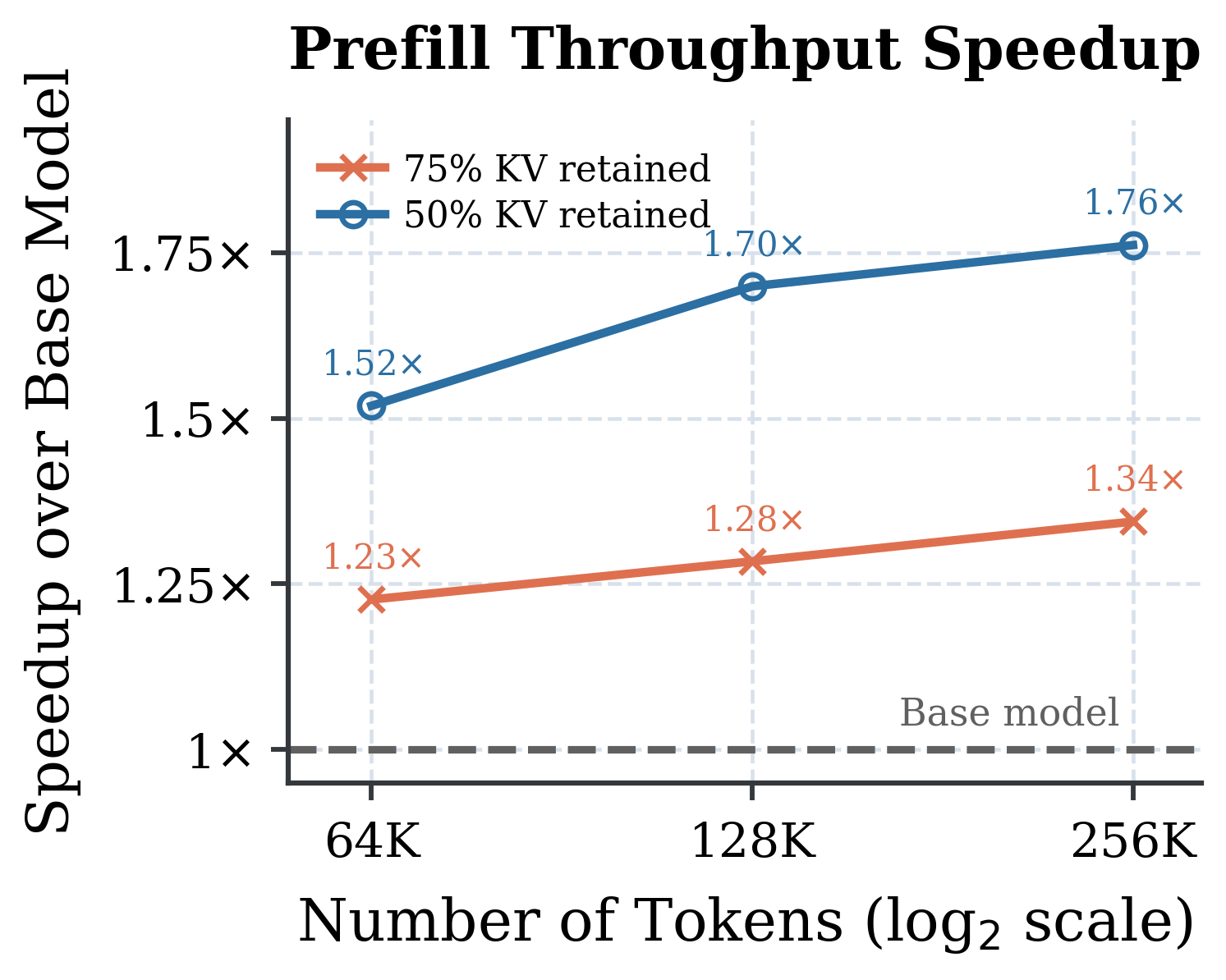}
        \caption{Prefill throughput speedup}
        \label{fig:prefill_speedup}
    \end{subfigure}
    \hfill
    \begin{subfigure}[t]{0.32\textwidth}
        \centering
        \includegraphics[width=\linewidth]{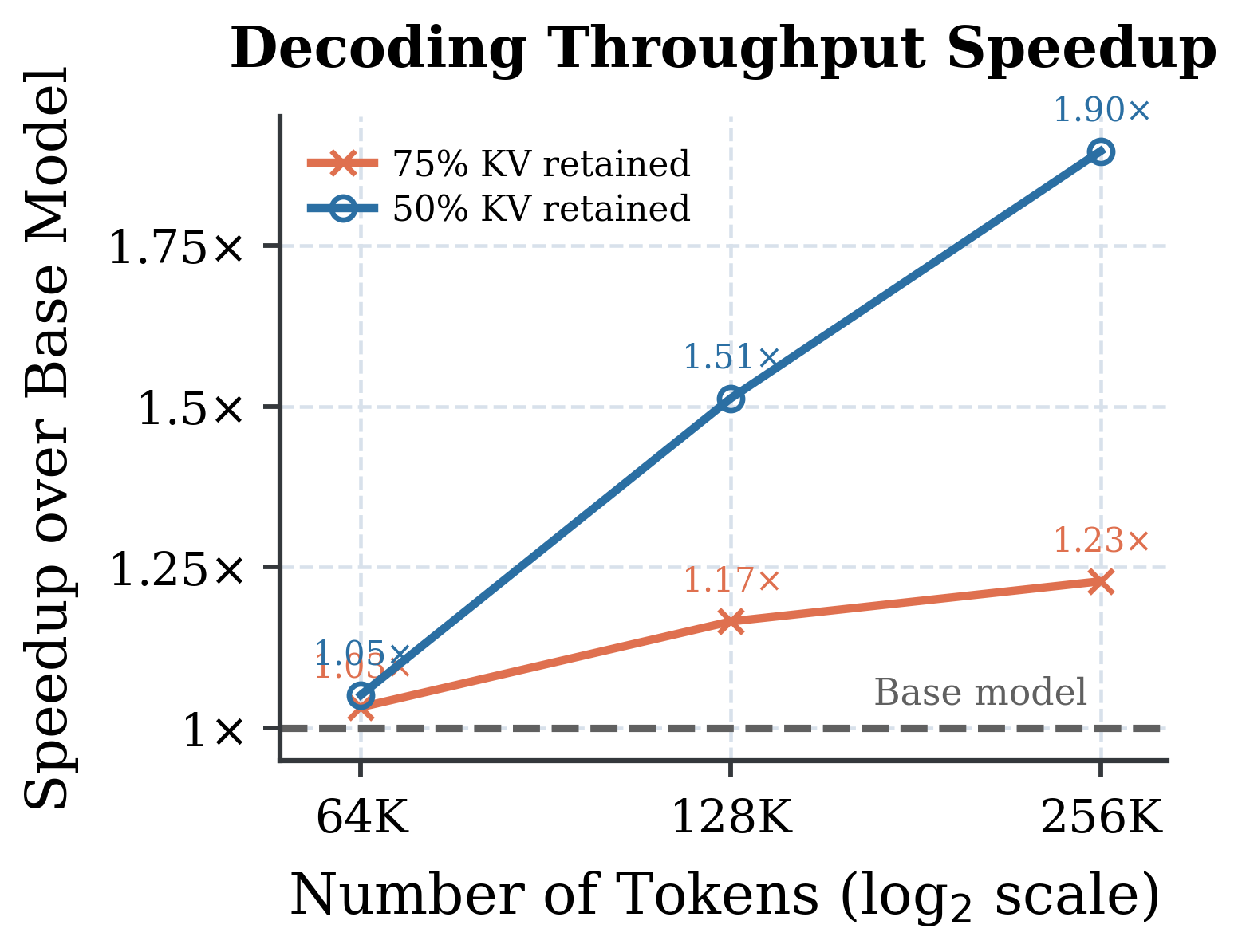}
        \caption{Decode throughput speedup}
        \label{fig:decode_speedup}
    \end{subfigure}
    \hfill
    \begin{subfigure}[t]{0.32\textwidth}
        \centering
        \includegraphics[width=\linewidth]{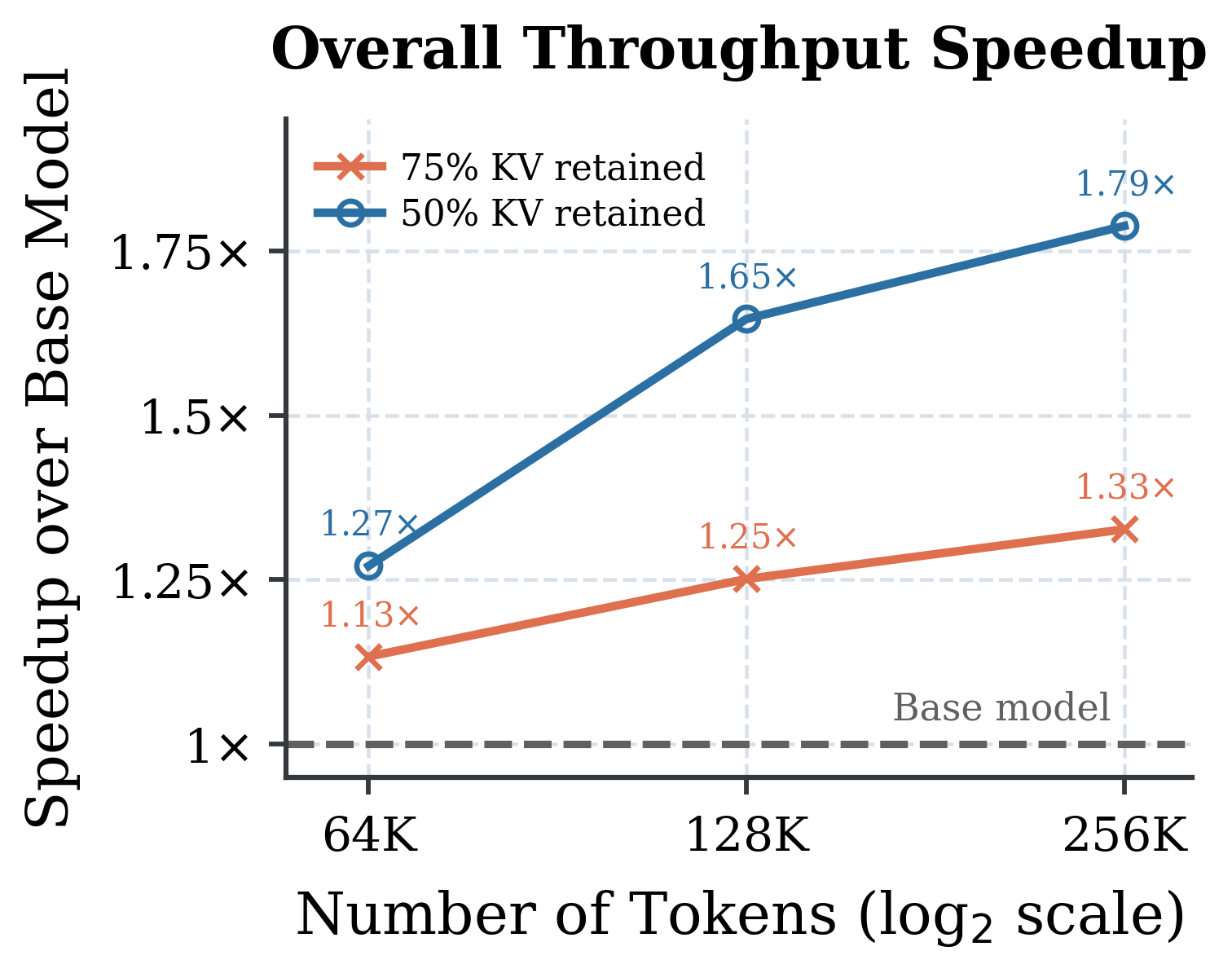}
        \caption{Overall throughput speedup}
        \label{fig:overall_speedup}
    \end{subfigure}

    \vspace{0.6em}

    \begin{subfigure}[t]{0.32\textwidth}
        \centering
        \includegraphics[width=\linewidth]{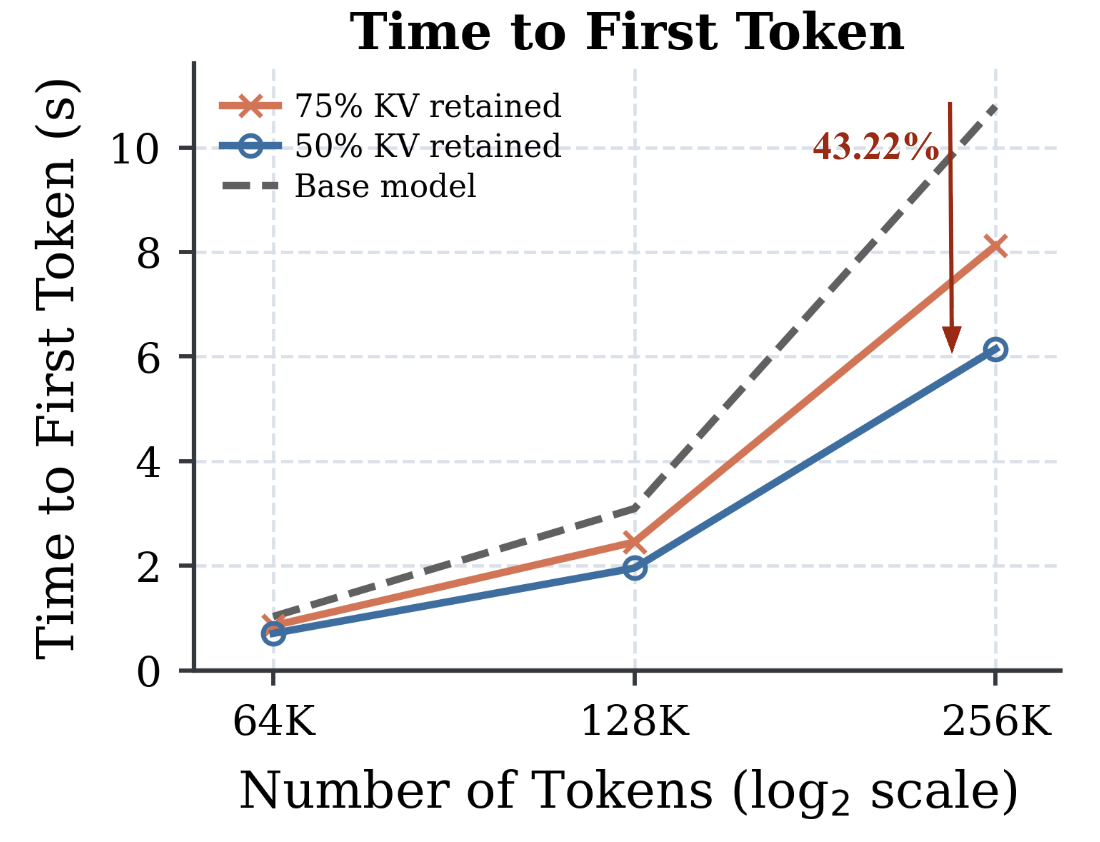}
        \caption{Time to first token}
        \label{fig:ttft}
    \end{subfigure}
    \hfill
    \begin{subfigure}[t]{0.32\textwidth}
        \centering
        \includegraphics[width=\linewidth]{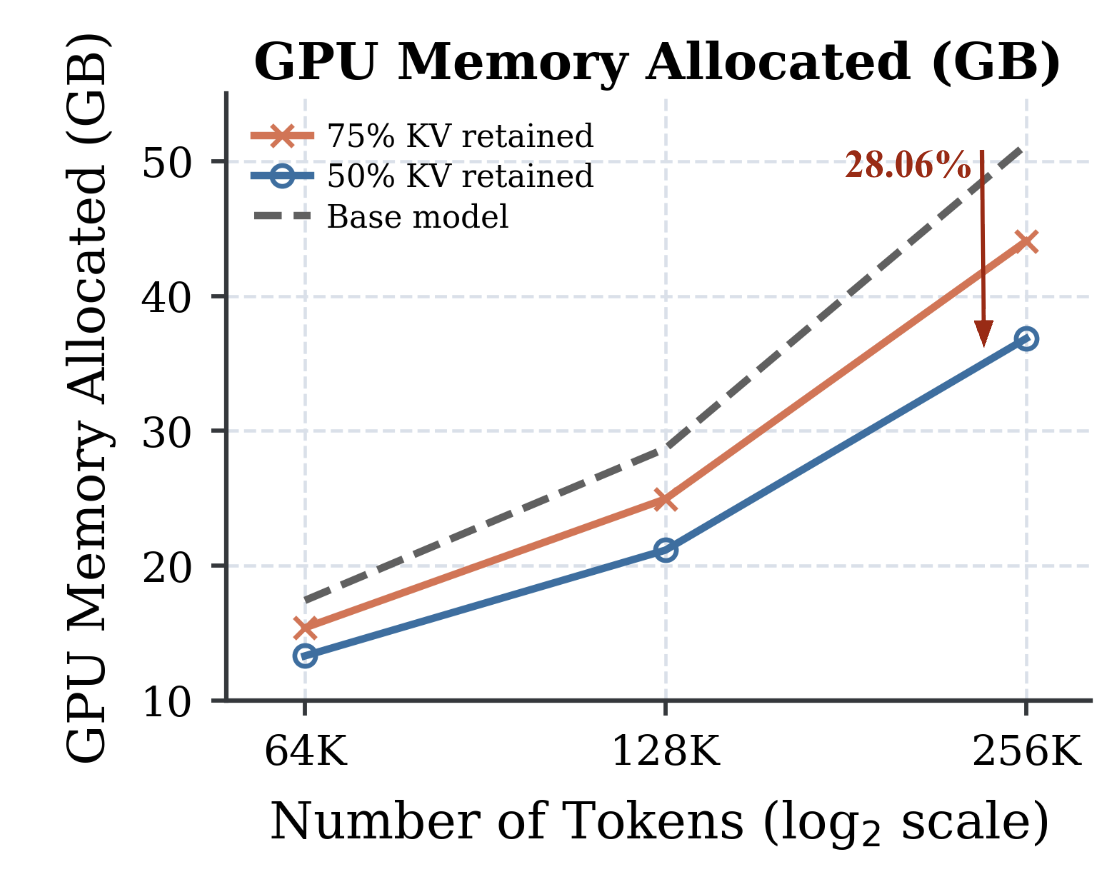}
        \caption{GPU allocated memory}
        \label{fig:gpu_memory}
    \end{subfigure}
    \hfill
    \begin{subfigure}[t]{0.32\textwidth}
        \centering
        \includegraphics[width=\linewidth]{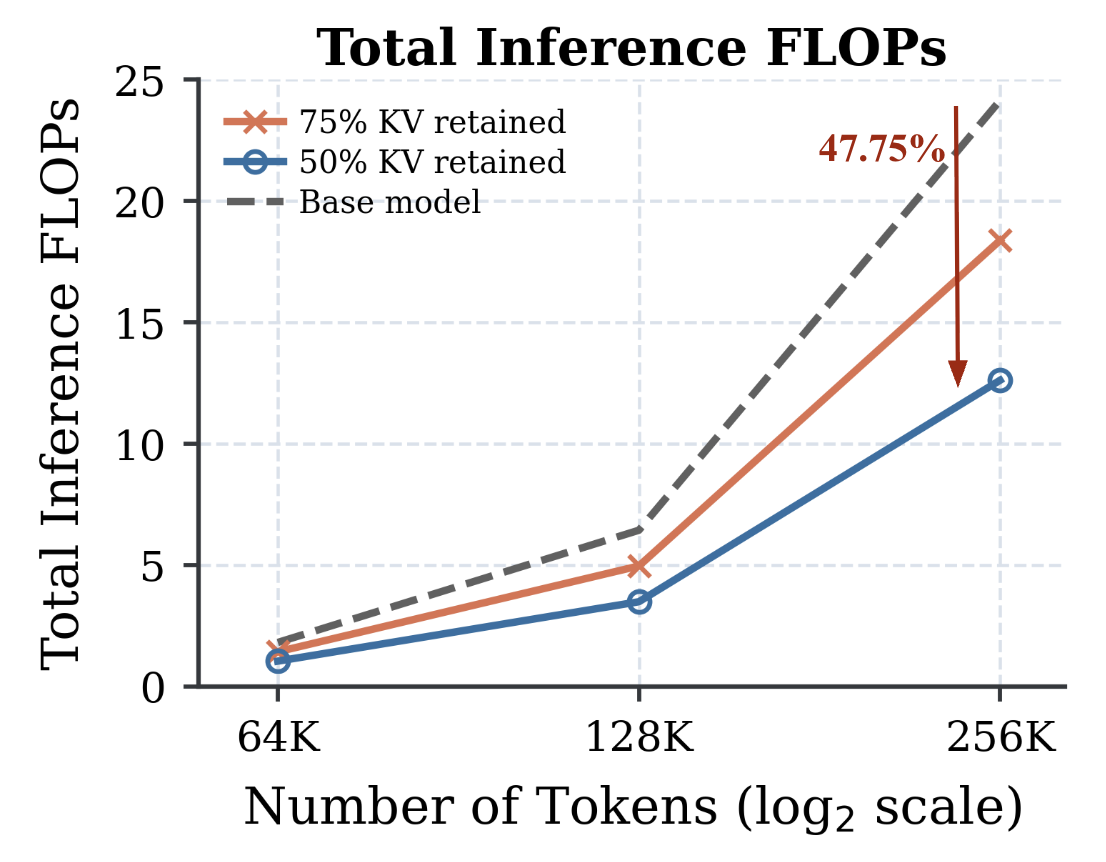}
        \caption{Total inference FLOPs}
        \label{fig:inference_flops}
    \end{subfigure}

    \caption{
        \textbf{Long-context inference efficiency on
        Llama-3.2-3B-Instruct.}
        The top row reports prefill, decoding, and overall throughput
        speedups over the uncompressed model (higher is better). The bottom row reports time to first token, allocated GPU memory, and total inference
        FLOPs (lower is better). Results are shown at 75\% and 50\% KV-cache retention
        for prompt lengths of 64K, 128K, and 256K tokens, with the
        output length fixed at 128 tokens. These results are average over 10 runs.
    }
    \label{fig:long_context_efficiency}
    \vspace{-0.6cm}
\end{figure}

Reducing KV-cache storage does not automatically reduce inference cost. As discussed in Section \ref{sec:intro}, this requires native reduced attention. Such native reduced attention methods like \emph{RACE eliminates reconstruction at the architectural level, exposing standard reduced-dimensional attention operations that can be supported by general-purpose backends}.  Figure~\ref{fig:long_context_efficiency} shows that these measured gains become larger with prompt length as attention accounts for a greater fraction of inference cost. At 256K tokens, 50\% retention achieves $1.76\times$, $1.90\times$, and $1.79\times$ prefill, decoding, and overall throughput speedups, respectively. It also reduces TTFT by 43.22\%, allocated GPU memory by 28.06\%, and total inference FLOPs by 47.75\% relative to the uncompressed model. The end-to-end memory reduction is smaller than the 50\% KV-cache reduction because total allocated memory also includes model parameters, activations, and other components unaffected by the proposed method. Figure.\ref{fig:mems} shows the memory reduction with compression on total model parameters, attention parameters and KV cache. While parameter memory is unaffected by sequence length, KV Cache memory grows with it.

\subsection{Compatibility with Token-Dimension Compression and Quantization}
\label{sec:complementary}
RACE reduces the KV-cache feature dimension or head dimension and can therefore be combined with methods that compress its token dimension or numerical precision. We apply 75\% RACE retention before H$2$O and SnapKV token compression and KIVI quantization, each with a 50\% cache footprint, yielding 37.5\% combined retention. Figure~\ref{fig:compression_complementary} shows that applying RACE to an already compressed model causes greater degradation than applying it to the uncompressed model, indicating that the effects of the two compression stages are not independent. Nevertheless, the combinations provide an additional 25\% reduction in cache size and feature-dependent attention FLOPs as well as attention parameters. The degradation is smaller with KIVI, particularly in perplexity, suggesting that RACE is especially complementary to quantization. Overall, RACE can augment existing KV-cache methods with further memory, bandwidth, and, for token compression, decoding-FLOP reductions.

\begin{figure}[t]
    \centering
    \begin{minipage}[t]{0.62\textwidth}
        \centering
        \includegraphics[width=0.98\linewidth]{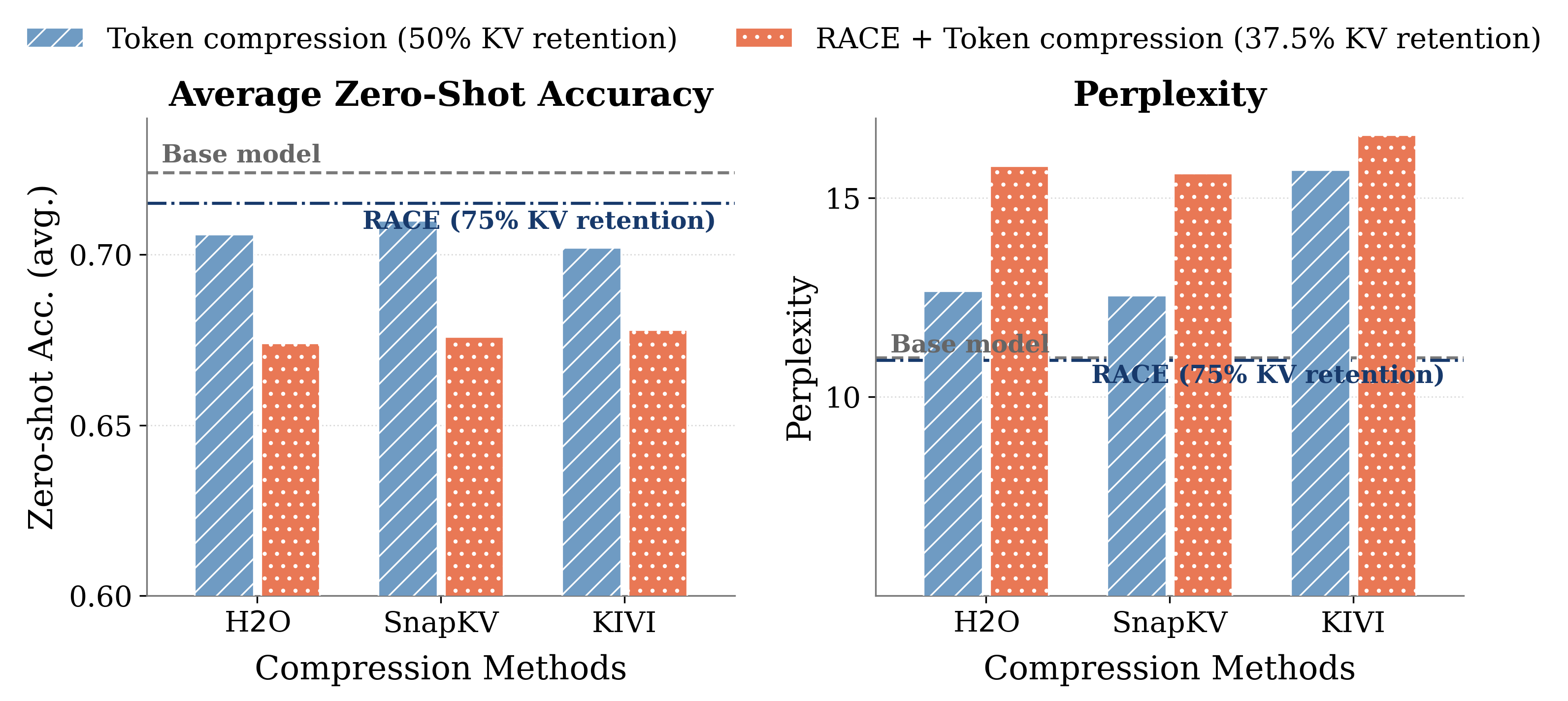}
        \vspace{-0.3cm}
    \caption{
    \textbf{Composing RACE with token-level compression and quantization.}
Combining RACE with H$_2$O, SnapKV, and KIVI further reduces total KV-cache
retention. We report average zero-shot accuracy (left; higher is better) over 5 tasks as described in \ref{tab:complementary}
 and WikiText perplexity (right; lower is
better).
    }
        \label{fig:compression_complementary}
    \end{minipage}
    \hfill
    \begin{minipage}[t]{0.35\textwidth}
        \centering
        \includegraphics[width=0.9\linewidth]{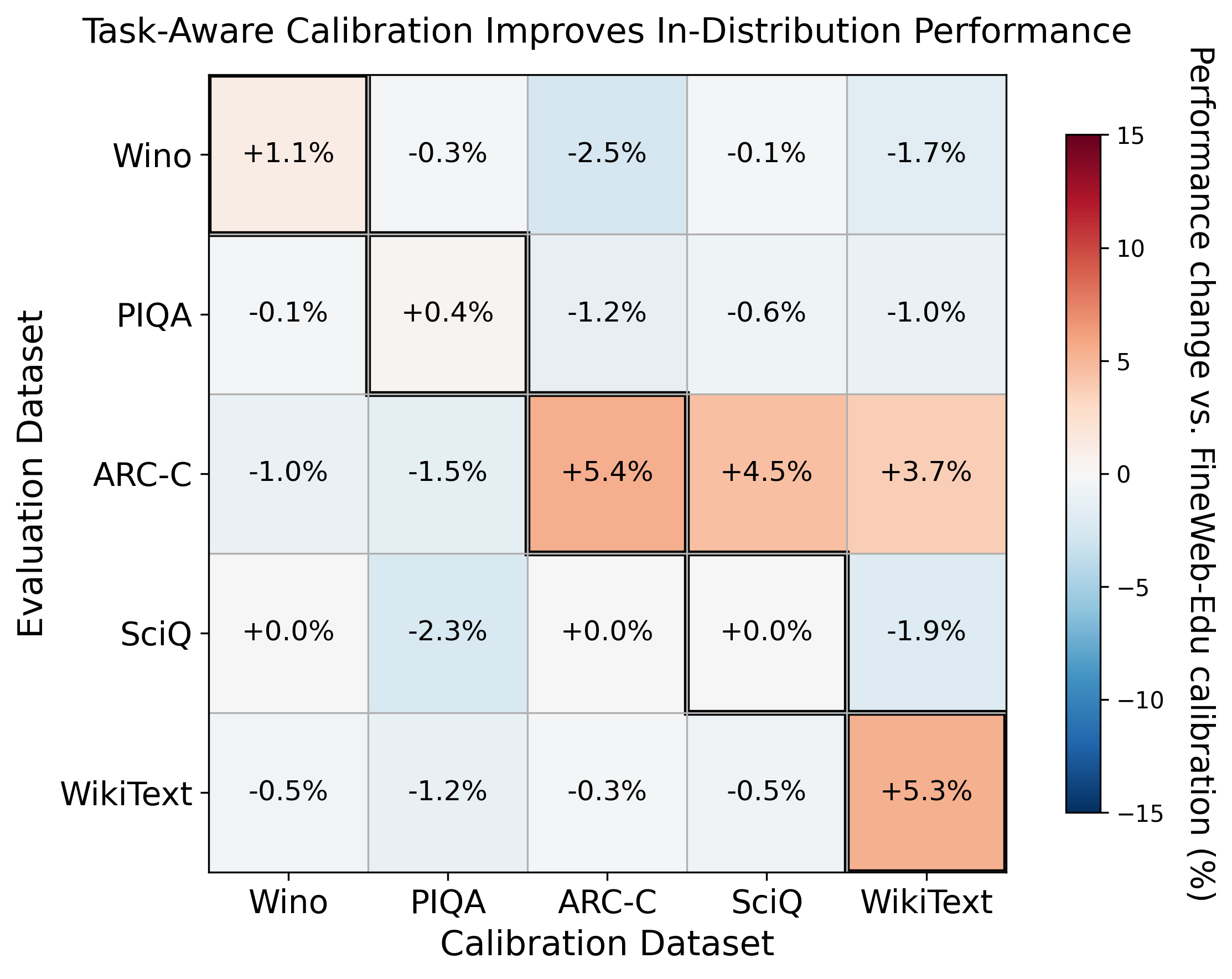}
        \vspace{-0.3cm}
    \caption{
    \textbf{Effect of calibration data on compressed-model quality.}
    Values show change on evaluation with the respective calibration data relative to FineWeb-Edu calibration. 
    }
    \label{fig:calibration_heatmap}
    \end{minipage}
    \vspace{-0.5cm}
\end{figure}

\subsection{Effect of Calibration Data}

Our selection criterion explained in Section \ref{sec:pair_selection} is derived from a calibration-dependent upper bound on the query-key approximation error, so the retained subspace is expected to reflect the calibration distribution. Figure \ref{fig:calibration_heatmap} evaluates this dependence by varying the calibration dataset and reporting performance relative to FineWeb-Edu calibration. Note that for WikiText, the perplexity change is sign-inverted so that positive values consistently denote improvement. Using the same dataset as the downstream task generally gives the strongest results - upto +5.4\% on the evaluated datasets with respect to calibration on FineWen-Edu, and calibration on a related task family may also improve performance as observed in evaluation on arc-c with calibration on SciQ. However, the differences are modest $(<6\%)$ for most tasks. Therefore, \emph{FineWeb-Edu remains a strong generic calibration source, especially when in-distribution task-specific samples are unavailable or the downstream task family is unknown.}

\subsection{Ablation: Selection Criteria}
\label{sec:ablation1}






        

\begin{figure*}[t]
    \centering

    \begin{subfigure}[t]{0.35\textwidth}
        \centering
        \small
        \vspace{-2.5cm}
        \resizebox{\linewidth}{!}{

\begin{tabular}{ccccc}
\hline
\rowcolor[HTML]{EFEFEF} 
                                                                              & \textbf{\begin{tabular}[c]{@{}c@{}}Energy\\ (Ours)\end{tabular}} & \textbf{Magnitude}         & \textbf{\begin{tabular}[c]{@{}c@{}}Attn.\\ Product\end{tabular}} & \textbf{\begin{tabular}[c]{@{}c@{}}Grad.\\ Fisher\end{tabular}} \\ \hline
\textbf{Backprop-free}                                                        & \ding{51}                                       & \ding{51} & \ding{51}                                       & \ding{55}                                      \\
\rowcolor[HTML]{EFEFEF} 
\textbf{Task-loss-free}                                                       & \ding{51}                                       & \ding{51} & \ding{51}                                       & \ding{55}                                      \\
\textbf{\begin{tabular}[c]{@{}c@{}}Explicit \\ $O(S^2)$ scoring\end{tabular}} & \ding{55}                                       & \ding{55} & \ding{51}                                       & \ding{55}                                      \\
\rowcolor[HTML]{EFEFEF} 
\textbf{Time (sec)}    & 2.717                                                            & 2.925                      & 11.616                                                           & 8.290                                                           \\
\textbf{\begin{tabular}[c]{@{}c@{}}Throughput\\  (tok./sec)\end{tabular}}     & 12061.6                                                          & 11202.4                    & 2821                                                             & 3952.7                                                          \\
\rowcolor[HTML]{EFEFEF} 
\textbf{Memory (GB)}          & 6.142                                                            & 6.142                      & 10.592                                                           & 9.218                                                           \\ \hline
\end{tabular}

        }
        \caption{Calibration cost of alternative data-dependent RoPE-pair scoring criteria.}
        \label{fig:score_cost}
    \end{subfigure}
    \hfill
    \begin{subfigure}[t]{0.31\textwidth}
        \centering
        \includegraphics[width=\linewidth]{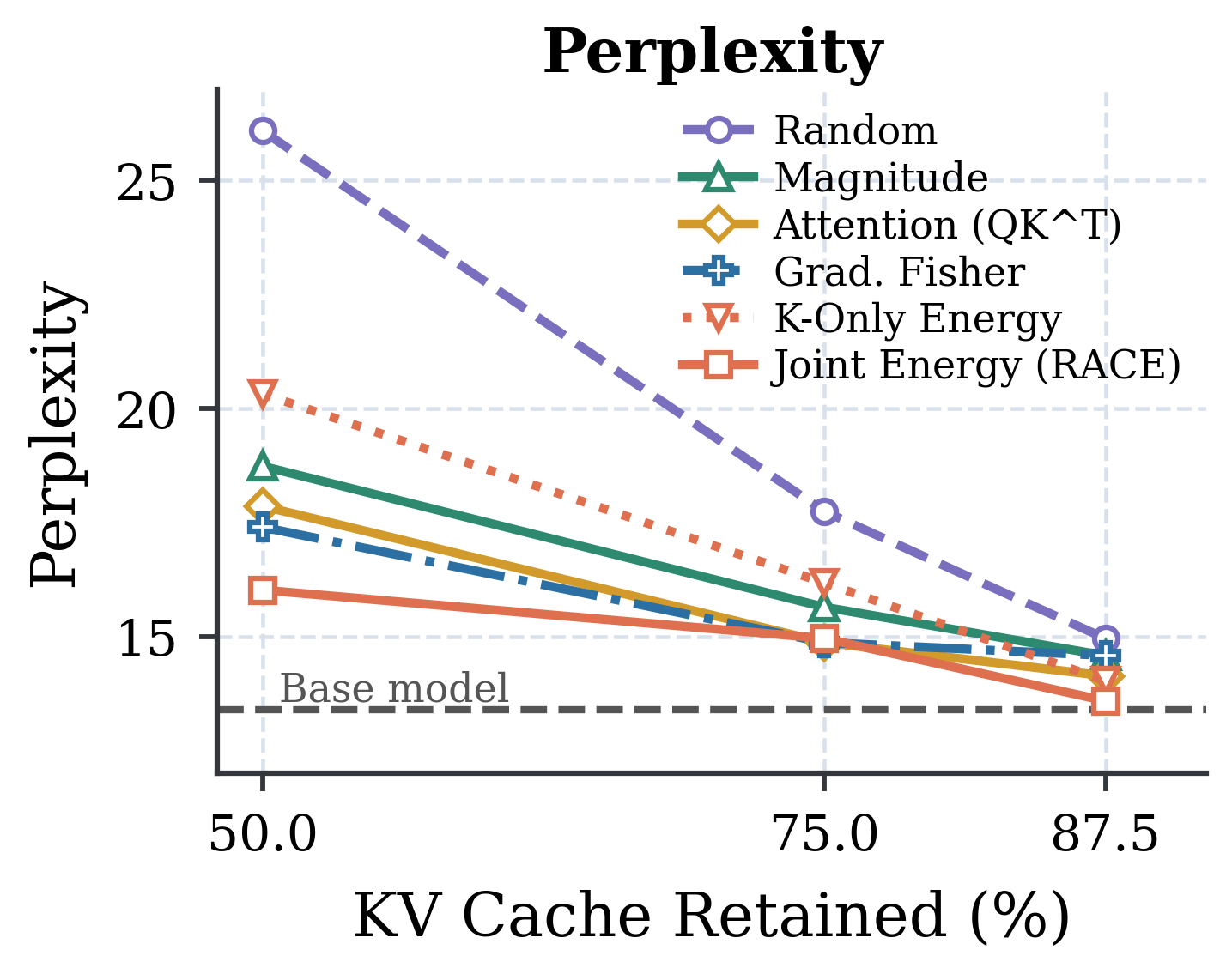}
        \caption{Perplexity under alternative pair-selection criteria.}
        \label{fig:diff_metrics_ppl}
    \end{subfigure}
    \hfill
    \begin{subfigure}[t]{0.31\textwidth}
        \centering
        \includegraphics[width=\linewidth]{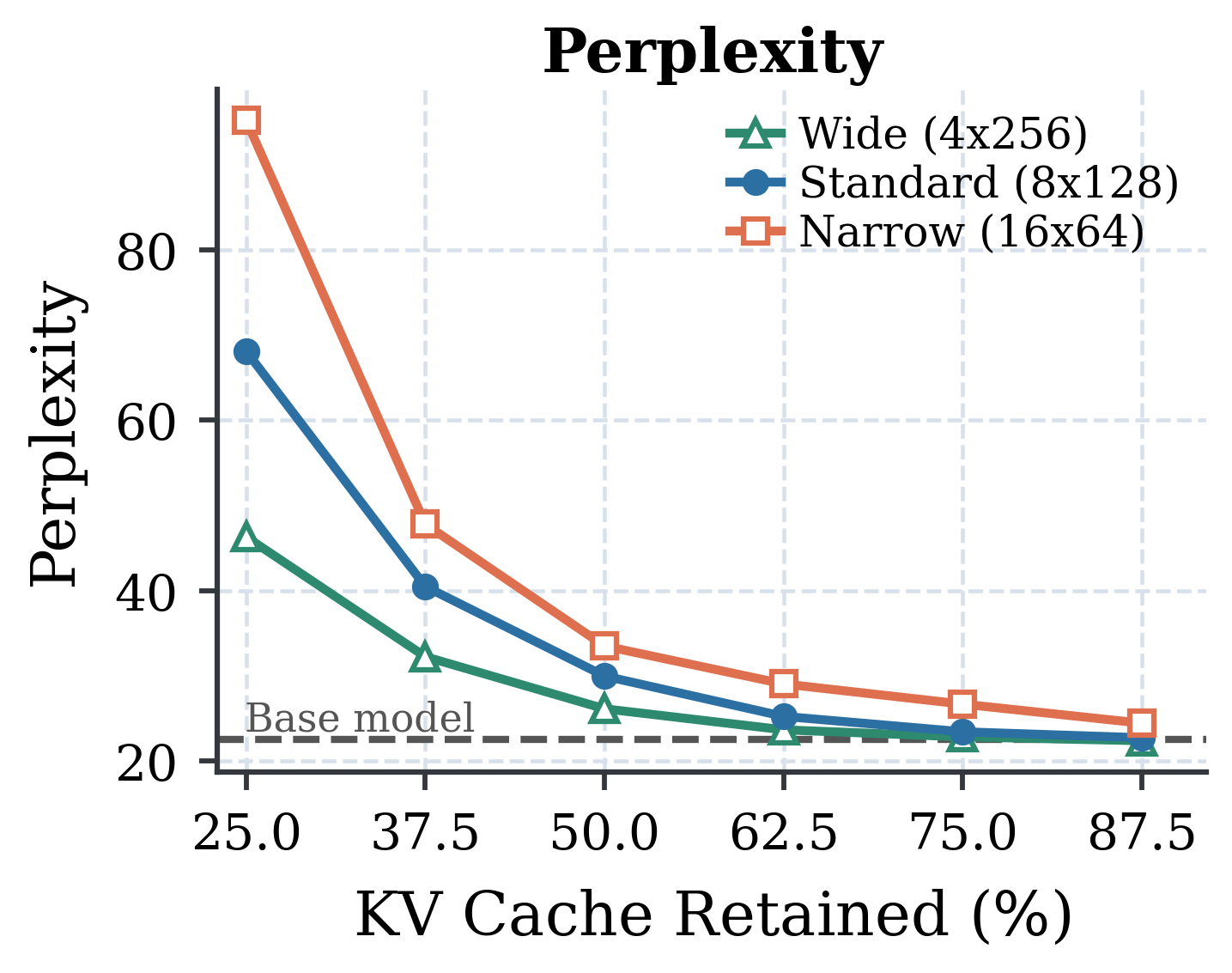}
        \caption{Compression behavior of wide, standard, \& narrow heads.}
        \label{fig:head_width}
    \end{subfigure}
\vspace{-0.2cm}
    \caption{
    \textbf{Analysis of RoPE-pair selection and compression structure.}
    \textbf{(a)} Computational requirements of RoPE-compatible scoring
    criteria. 
    \textbf{(b)} Perplexity under alternative pair-selection criteria.
    All methods retain complete RoPE pairs.
    \textbf{(c)} Effect of attention-head width at matched model size across
    compression ratios.
    }
    \label{fig:selection_and_structure}
    \vspace{-0.7cm}
\end{figure*}




The shared Q/K reduction and Rope-compatibility constraint define a family of native coordinate reductions; the remaining options are on how to rank candidate rotary pairs. We first compare RACE's joint Q/K
activation energy with otherwise identical Q-only and K-only selectors. As
shown in Fig.~\ref{fig:selection_and_structure}(b), K-only selection degrades
perplexity more strongly, while joint Q/K selection performs best overall. A similar result was observed for Q-only (Table \ref{tab:ablations}). This
is consistent with preservation of the query-key interaction: the average
normalized error
$\|QK^\top-Q_{\mathcal S}K_{\mathcal S}^\top\|_F/\|QK^\top\|_F$
is $0.1159$, $0.1120$, and $0.1051$ for Q-only, K-only, and joint Q/K
selection, respectively.

We further compare RACE with random, magnitude, attention-product, and
gradient-Fisher scoring. Random selection is data-oblivious; energy(RACE), magnitude and
attention-product use joint Q/K forward statistics; and gradient-Fisher uses
joint Q/K gradients from backward-pass on the calibration data. RACE achieves the lowest perplexity across the evaluated
retention ratios among these selectors. Table ~\ref{fig:selection_and_structure}(a) further compares calibration cost of methods that uses Rope-compatible shared joint Q/K statistics for selection.
Attention-product scoring explicitly incurs $O(S^2)$ computation and
gradient-Fisher requires backpropagation and a task loss. Accordingly,
athese methods incur substantially higher calibration
time and peak memory. Joint QK energy selection therefore offers a favorable
trade-off between perfromance and calibration cost.

\vspace{-0.2cm}
\subsection{Towards Models with Fewer and Wider Heads}
\label{sec:head_width}
If post-training attention compression is anticipated at deployment time,
larger head dimensions may be a useful architectural choice during
pretraining. We study three matched $\sim 245$ M-parameter GPT-NeoX-style
models pretrained from scratch on FineWeb-Edu for 2.5B tokens:
\emph{wide} ($4\times256$), \emph{standard} ($8\times128$), and
\emph{narrow} ($16\times64$). All three reach comparable uncompressed
WikiText validation perplexity. Figure~\ref{fig:selection_and_structure}(c) shows that wider-head models
retain perplexity more effectively under aggressive RACE compression. At each
retention ratio, the models have approximately matched aggregate retained
attention dimension, KV-cache memory, attention FLOPs, and compressed
attention parameter count; the primary difference is how this capacity is
partitioned across heads. Wider heads expose more RoPE pairs from which RACE
can select, providing greater flexibility in structured dimensional reduction.
Thus,\emph{ when post-training attention compression is a foreseeable deployment
requirement, allocating attention width across fewer, wider heads during
pretraining may yield a more favorable quality-efficiency trade-off.}

\vspace{-0.2cm}
\section{Conclusion}

In this work, we study when and how pretrained RoPE attention can be converted into a natively
lower-dimensional form that reduces KV-cache storage, attention computation,
and parameters without runtime reconstruction. Viewing Q/K compression as
constrained approximate matrix multiplication motivates a shared reduction
derived jointly from queries and keys. This reduction has to be RoPE-compatible to enable reconstruction-free compression. We further show that, for
coordinate-selection reductions family, RoPE compatibility holds if and only if
complete rotary pairs are retained. RACE selects within this admissible family using joint Q/K activation energy,
a forward-only criterion that minimizes a separable calibration-dependent upper
bound on post-RoPE product error. The resulting reductions are folded directly
into the projection weights, enabling attention to execute entirely in the
compressed dimensions without specialized kernels. Across multiple model
families and compression levels, RACE outperforms evaluated reconstruction-free baselines and remains competitive with
reconstruction-based methods while reducing KV-cache memory, attention cost,
and latency.

\subsection*{AI use statement}

We used generative AI tools to assist with editing portions of the manuscript, retrieving related works, generating and debugging experimental and plotting code, and formatting figures and tables. We also used AI tools to discuss and refine aspects of theoretical presentation and interpretation of experimental results. All AI-assisted technical suggestions, mathematical arguments, code, and interpretations were independently reviewed and verified by the authors against the underlying derivations, implementations, and experimental outputs. No AI generated datasets were used in this work. We take responsibility for the final content of this work, including all text, claims, code, and artifacts produced with AI assistance.

\subsection*{Ethics statement}

The proposed method is intended to reduce the memory and computational cost of long-context inference, which may improve the accessibility and efficiency of deploying language models. As with other efficiency techniques, however, reducing inference cost can also lower the barrier to large-scale model deployment; the downstream societal effects therefore depend on how the resulting models are used. Our work does not modify model objectives, safety mechanisms, or training data, and we do not make claims about changes to model safety, fairness, or robustness beyond the evaluated performance metrics

\subsection*{Reproducibility statement}

We provide sufficient implementation and experimental details to reproduce the results in this work. \ref{app:implementation} specifies the RACE calibration and folding procedure, RoPE-pair handling, value compression, optional KL-based recovery, model and benchmark configurations, long-context evaluation settings, hardware/software environment, precision, number of tokens, kernels, and analytical FLOP derivations. Detailed per-task results, selector ablations, recovery settings, and systems measurements are also provided in the \ref{app:results}.

\bibliography{iclr2027_conference}
\bibliographystyle{iclr2027_conference}

\newpage

\appendix
\section{Appendix}

\subsection{Axes of KV-Cache Compression}
\label{app:cache_axes}

\begin{figure}[b]
    \centering
   \includegraphics[width=\columnwidth]{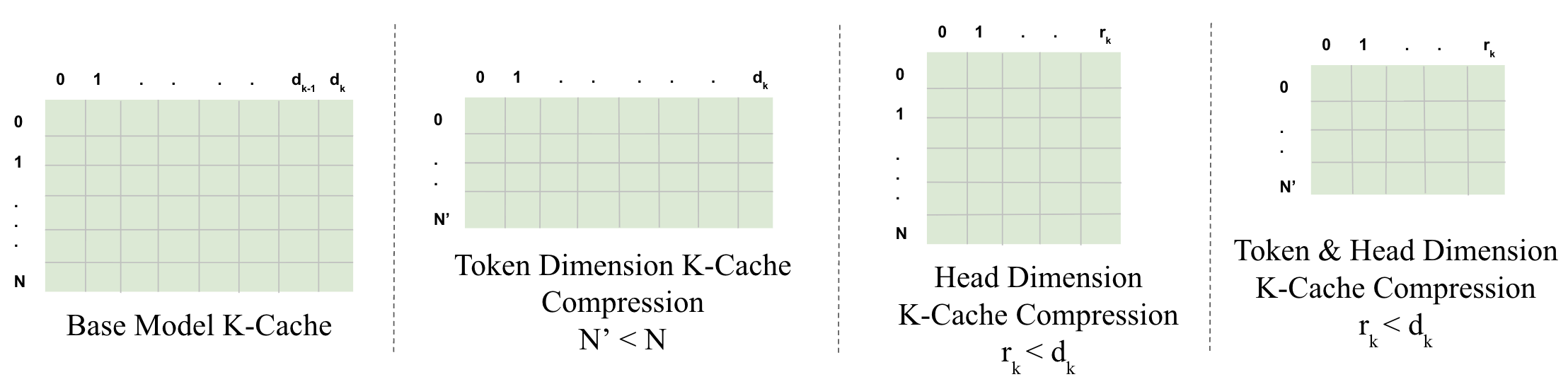}
    \caption{Token- and head- dimension KV-cache compression. Token-level methods reduce the number of retained cache entries N, whereas head-dimension methods reduce the dimensionality of each cached representation. These approaches act on orthogonal axes of the KV cache and can be combined.}
    \label{fig:cache_axes}
\end{figure}

For a model with $L$ layers, batch size $B$, context length $S$, $H_{kv}$
key--value heads, key dimension $d_k$, value dimension $d_v$, and $b$ bytes per
element, the KV-cache memory is
\[
M_{\mathrm{KV}}
    = BLSH_{kv}(d_k+d_v)b.
\]
This expression exposes three independent factors that can be targeted by
compression.

Token-level methods reduce $S$ by evicting, retaining, or consolidating cached
positions. Head-dimension methods reduce $d_k$ and $d_v$ for each retained
position. Quantization reduces $b$, the storage precision of each element.
As illustrated in Figure~\ref{fig:cache_axes}, these approaches operate on
different factors of the cache size and can therefore be applied independently
or jointly. In particular, combining token and head-dimension compression
changes the cache size to
\[
M_{\mathrm{KV}}'
    = BLS'H_{kv}(r_k+r_v)b,
\]
where $S'\leq S$, $r_k\leq d_k$, and $r_v\leq d_v$.

\subsection{Attention and KV caching.}
For hidden states $X\in\mathbb{R}^{S\times d_{\mathrm{model}}}$ and attention weights $W_Q,W_K \in \mathbb{R}^{d_{\mathrm{model}} \times d_k}$ and $W_V \in \mathbb{R}^{d_{\mathrm{model}} \times d_v}$, an attention
layer computes $Q=XW_Q,\qquad K=XW_K,\qquad V=XW_V$.
For each head, let $Q_h,K_h\in\mathbb{R}^{S\times d_k}$ and
$V_h\in\mathbb{R}^{S\times d_v}$. Its output is $O_h = A_hV_h = 
\operatorname{softmax}\!\left(
\frac{Q_hK_h^\top}{\sqrt{d_k}}
\right)V_h. $
During autoregressive decoding, previously computed keys and values are stored
in the KV cache. For $L$ layers, batch size $B$, context length $S$, $H_{kv}$
KV heads, $H_{q}$
Q heads, and $b$ bytes per element, its memory is
\begin{equation}
 M_{\mathrm{KV}}
= BLSH_{kv}(d_k+d_v)b. 
\label{eq1}
\end{equation}
Thus, reducing $d_k$ and $d_v$ decreases cache memory without removing cached
tokens. It also reduces the query-key and attention-value products, whose
context-dependent decoding costs scale as
$\mathcal{O}(BSH_qd_k)$ and $\mathcal{O}(BSH_qd_v)$, respectively. Their prefill costs scale as $\mathcal{O}(BS^2H_qd_k)$ and $\mathcal{O}(BS^2H_qd_v)$, respectively.  Complete
FLOP expressions are provided in ~\ref{app:flops_analysis}.

\subsection{RoPE-Compatible Reduction}
\label{app:rope_details}

\paragraph{Rotary positional embeddings.}
RoPE partitions the rotary query and key dimensions into coordinate pairs and
applies a position-dependent two-dimensional rotation to each pair
\citep{roformer}. For the pair associated with frequency $\theta_i$ at
position $t$,
\[
R_i(t)=
\begin{bmatrix}
\cos(t\theta_i) & -\sin(t\theta_i)\\
\sin(t\theta_i) &  \cos(t\theta_i)
\end{bmatrix}.
\]
Although implementations may store paired coordinates consecutively or across
two halves of the head dimension, the two coordinates remain coupled by the
rotation. A coordinate-selection method can therefore preserve the pretrained
RoPE operation without reconstruction only if it retains or removes complete
rotary pairs. Architectures with partial RoPE additionally contain non-rotary
coordinates, which need not satisfy this pairwise constraint. 

\paragraph{Logical pairs and physical coordinate layouts.}
Let $d_{\mathrm{rot}}\leq d_k$ denote the number of query and key coordinates
to which RoPE is applied, and let $M=d_{\mathrm{rot}}/2$. We represent the
logical rotary pairs as
\[
\mathcal{P}_{\mathrm{rope}}
    = \{(i_m,j_m)\}_{m=1}^{M},
\]
where $(i_m,j_m)$ identifies the two physical coordinates rotated using the
$m$-th rotary frequency. The mapping from logical pairs to physical coordinates
depends on the model implementation. With one-based indexing, common layouts
include
\[
(i_m,j_m)=
\begin{cases}
(2m-1,\,2m), & \text{adjacent-pair layout},\\
(m,\,m+M),   & \text{split-half layout}.
\end{cases}
\]
The former is used by GPT-J-style rotary implementations, whereas the latter
is used by LLaMA-style implementations. Our selection operates on these
logical pairs rather than on neighboring memory locations, so the same
procedure applies to either layout after constructing the appropriate pair
map.

\paragraph{Restriction of the original RoPE operator.}
Let $\mathcal{S}\subseteq[M]$ be the retained set of rotary pairs and let
$P_{\mathcal{S}}\in\{0,1\}^{d_k\times 2|\mathcal{S}|}$ contain the standard
basis vectors corresponding to both coordinates of every pair in
$\mathcal{S}$. The reduced RoPE operator is formed by retaining the original
rotation block associated with each selected pair:
\[
R_t^{(\mathcal{S})}
    = \bigoplus_{m\in\mathcal{S}} R_{t,m}.
\]
Provided that the columns of $P_{\mathcal{S}}$ and the blocks of
$R_t^{(\mathcal{S})}$ use the same pair ordering,
\[
R_tP_{\mathcal{S}}
    = P_{\mathcal{S}}R_t^{(\mathcal{S})}.
\]
Consequently, for row-wise query activations,
\[
(QR_t)P_{\mathcal{S}}
    = (QP_{\mathcal{S}})R_t^{(\mathcal{S})},
\]
and analogously for keys. Thus, selecting coordinates before RoPE and applying
the restricted operator produces exactly the selected coordinates of the
original RoPE output. This is an algebraic restriction of the pretrained
operator, not an approximation introduced by the reduced-dimensional
implementation.

\paragraph{Preserving pretrained rotary frequencies.}
Reducing the query--key dimension must not cause the rotary frequencies to be
recomputed from the new dimension $r_k$. Standard RoPE implementations derive
frequencies from their original rotary dimension; reinitializing RoPE with
$r_k$ would generally assign different angles to the retained coordinates and
would therefore alter the pretrained positional encoding. Instead, for every
retained pair $m\in\mathcal{S}$, we copy its original frequency $\theta_m$:
\[
\boldsymbol{\theta}^{(\mathcal{S})}
    = [\theta_m]_{m\in\mathcal{S}}.
\]
The selected pairs may be stored in score order or original-coordinate order,
provided that the queries, keys, and frequencies use the same ordering. In our
implementation, we retain the original pair order after selection to make the
folded checkpoints deterministic.

For models using scaled RoPE, the same principle applies: the model's original
position-dependent frequency transformation and any associated attention
scaling are preserved, and only the entries corresponding to selected pairs
are gathered. This avoids silently replacing the pretrained positional
encoding with a new reduced-dimensional frequency schedule.

\paragraph{Partial RoPE.}
Some architectures apply RoPE to only
$d_{\mathrm{rot}}<d_k$ coordinates. Let
$\mathcal{I}_{\mathrm{NoPE}}$ denote the complementary non-rotary coordinates.
The complete-pair constraint is imposed only on
$\mathcal{P}_{\mathrm{rope}}$; non-rotary coordinates may be selected
individually. If $\mathcal{J}\subseteq\mathcal{I}_{\mathrm{NoPE}}$ is the
retained non-rotary set, the complete selector is
\[
P_{QK}
    =
    \left[
        P_{\mathcal{S}}\;
        P_{\mathcal{J}}
    \right],
    \qquad
r_k=2|\mathcal{S}|+|\mathcal{J}|,
\]
and the corresponding reduced positional operator is
\[
R_t^{(\mathcal{S},\mathcal{J})}
    =
    R_t^{(\mathcal{S})}\oplus I_{|\mathcal{J}|}.
\]
The identity block leaves the retained non-rotary coordinates unchanged.
Hence, the same commutation relation holds for architectures with full or
partial RoPE.

\paragraph{Grouped-query attention.}
Under grouped-query attention, several query heads attend to a shared key head.
For each KV head $g$, let $\mathcal{G}_g$ denote its associated query heads. We
compute the Q/K pair statistics jointly over the key activations of head $g$
and the query activations from all heads in $\mathcal{G}_g$. A single pair set
$\mathcal{S}_g$ is then applied to the shared key head and to every query head
in its group. This ensures that each query and the key cache against which it
is evaluated use identical retained coordinates and rotary frequencies. Pair
sets may differ across KV heads, but they cannot differ among query heads that
interact with the same cached key representation.

\paragraph{Implementation consequence.}
After selection, the columns indexed by $P_{QK}$ are folded into $W_Q$ and
$W_K$, and the RoPE module is restricted to the corresponding original
frequencies. The resulting layer produces $r_k$-dimensional queries and keys
directly. Neither the discarded coordinates nor a full-dimensional RoPE output
is materialized during prefill or decoding.

\subsection{Inference FLOPs Analysis}
\label{app:flops_analysis}

We analyze the attention-related floating-point operations (FLOPs) incurred during \emph{prefill} and \emph{autoregressive decoding}. We count a multiply-add as two FLOPs and omit element-wise operations such as softmax, normalization, and RoPE, which are lower-order relative to the matrix multiplications considered here.

Let $B$ denote the batch size, $S$ the context length, $d$ the model hidden dimension, $H_q$ the number of query heads, and $H_{\mathrm{kv}}$ the number of key-value heads. The original query/key and value head dimensions are $d_k$ and $d_v$, respectively. After compression, their corresponding dimensions are
$r_k$ and $r_v$.

For prefill, projections are applied to all $S$ input tokens and attention is computed over an $S\times S$ attention matrix. For decoding, we report the FLOPs for generating one new token given a cache of $S$ previous tokens.

\paragraph{Full Attention}

For standard attention, the query, key, and value projections during prefill cost
\begin{equation}
F_{\mathrm{proj}}^{\mathrm{pre}}
=
2BSd
\left(
H_q d_k + H_{\mathrm{kv}} d_k + H_{\mathrm{kv}} d_v
\right).
\end{equation}

The query-key product and attention-value product cost
\begin{align}
F_{QK}^{\mathrm{pre}} &= 2BS^2 H_q d_k, \\
F_{AV}^{\mathrm{pre}} &= 2BS^2 H_q d_v,
\end{align}
and the output projection costs
\begin{equation}
F_O^{\mathrm{pre}}
=
2BS H_q d_v d.
\end{equation}

Thus,
\begin{equation}
F_{\mathrm{Full}}^{\mathrm{pre}}
=
2BSd
\left(
H_qd_k + H_{\mathrm{kv}}d_k + H_{\mathrm{kv}}d_v
\right)
+
2BS^2H_q(d_k+d_v)
+
2BSH_qd_vd.
\end{equation}

During decoding, projections are required only for the newly generated token:
\begin{equation}
F_{\mathrm{proj}}^{\mathrm{dec}}
=
2Bd
\left(
H_qd_k + H_{\mathrm{kv}}d_k + H_{\mathrm{kv}}d_v
\right).
\end{equation}

The query attends to all $S$ cached keys and values, giving
\begin{align}
F_{QK}^{\mathrm{dec}} &= 2BSH_qd_k,\\
F_{AV}^{\mathrm{dec}} &= 2BSH_qd_v,
\end{align}
while the output projection costs
\begin{equation}
F_O^{\mathrm{dec}}=2BH_qd_vd.
\end{equation}

Therefore,
\begin{equation}
F_{\mathrm{Full}}^{\mathrm{dec}}
=
2Bd
\left(
H_qd_k + H_{\mathrm{kv}}d_k + H_{\mathrm{kv}}d_v
\right)
+
2BSH_q(d_k+d_v)
+
2BH_qd_vd.
\end{equation}

For long contexts, the cache-length-dependent term dominates:
\begin{equation}
F_{\mathrm{Full,LC}}^{\mathrm{dec}}
\simeq
2BSH_q(d_k+d_v).
\end{equation}

\paragraph{EigenAttention with RoPE}

EigenAttention projects attention representations K and V into learned low-dimensional subspaces. In its RoPE-compatible formulation, compressed keys must be decompressed into the original dimension, which is RoPE-compatible, before query-key attention.

During prefill, the projection costs
\begin{equation}
F_{proj}^{\mathrm{pre}}
=
2BSd(H_qd_k + H_{kv}r_k + H_{kv}r_V).
\end{equation}

Transforming the compressed keys to the original key space costs
\begin{equation}
F_{K,\mathrm{rec}}^{\mathrm{pre}}
=
2BSH_{\mathrm{kv}}r_kd_k.
\end{equation}

Since queries and reconstructed keys interact in the original key dimension,
the query-key product is the same as full attention. The value representation remains compressed to dimension $r_v$, giving
\begin{equation}
F_{AV}^{\mathrm{pre}}
=
2BS^2H_qr_v \qquad
F_{O}^{\mathrm{pre}}
=
2BSH_qr_vd.
\end{equation}

Thus, the prefill cost is
\begin{equation}
F_{\mathrm{Eigen}}^{\mathrm{pre}}
= 2BS \left[d(H_qd_k + H_{kv}r_k + H_{kv}r_V) +
H_{\mathrm{kv}}r_kd_k
+
SH_q(d_k+r_v) + H_qr_vd \right]
\end{equation}

During decoding, all $S$ cached compressed keys must be reconstructed before
RoPE, giving
\begin{equation}
F_{proj}^{\mathrm{dec}}
=
2Bd(H_qd_k + H_{kv}r_k + H_{kv}r_V).
\qquad
F_{K,\mathrm{rec}}^{\mathrm{dec}}
=
2BSH_{\mathrm{kv}}r_kd_k.
\end{equation}

The subsequent query-key, attention-value operations, and output cost
\begin{equation}
F_{QK}^{\mathrm{dec}}
=
2BSH_qd_k \qquad
F_{AV}^{\mathrm{dec}}
=
2BSH_qr_v, \qquad
F_{O}^{\mathrm{dec}}
=
2BH_qr_vd.    
\end{equation}

Therefore, the cache-length-dependent cost of generating one token is
\begin{equation}
F_{\mathrm{Eigen,LC}}^{\mathrm{dec}}
=
2BS
\left[
H_{\mathrm{kv}}r_kd_k
+
H_q(d_k+r_v)
\right].
\end{equation}

This additional $O(BSH_{kv}r_kd_k)$ term arises from reconstructing cached keys
into the RoPE-compatible space. 

\paragraph{PALU with G-LRD and RoPE}

PALU~\citep{chang2025palu} applies group-head low-rank decomposition
(G-LRD) to the key and value projection matrices. Let $s$ denote the
number of KV heads in each group and
$G = \frac{H_{\mathrm{kv}}}{s}$
the number of groups. For each key group $g$, PALU factorizes the
concatenated projection of $s$ heads as
\[
W_K^{(g)}
\approx
A_K^{(g)}B_K^{(g)},
\]
where
\[
A_K^{(g)}\in\mathbb{R}^{d\times r_k^{(g)}},
\qquad
B_K^{(g)}\in\mathbb{R}^{r_k^{(g)}\times s d_k}.
\]
The intermediate representation produced by $A_K^{(g)}$ is stored in the KV cache. For RoPE-based attention, the cached key latent is reconstructed to the original $s d_k$-dimensional group representation before applying RoPE and computing attention.

Assuming uniform group ranks
$r_k^{(g)}=r_k^{G}$ and $r_v^{(g)}=r_v^{G}$, the Q/K/V projection cost
during prefill is
\begin{equation}
F_{\mathrm{proj}}^{\mathrm{pre}}
=
2BSd
\left(
H_qd_k
+
G r_k^{G}
+
G r_v^{G}
\right).
\end{equation}

For one decoding step,
\begin{equation}
F_{\mathrm{proj}}^{\mathrm{dec}}
=
2Bd
\left(
H_qd_k
+
G r_k^{G}
+
G r_v^{G}
\right).
\end{equation}

For each group, reconstructing $S$ cached key latents from dimension $r_k^{G}$ to $s d_k$ costs $2BS r_k^{G}s d_k$. Across all $G$ groups,
\begin{align}
F_{K,\mathrm{rec}}^{\mathrm{pre}}
 =
2BSG r_k^{G}s d_k  
 =
2BSH_{\mathrm{kv}}r_k^{G}d_k.
\end{align}
The same cache-wide reconstruction is required for each decoding step:
\begin{equation}
F_{K,\mathrm{rec}}^{\mathrm{dec}}
=
2BSH_{\mathrm{kv}}r_k^{G}d_k.
\end{equation}

Because reconstructed keys have the original head dimension $d_k$,
the query-key products and value products cost
\begin{align}
F_{QK}^{\mathrm{pre}}
=
2BS^2H_qd_k, \qquad
F_{QK}^{\mathrm{dec}}
=
2BSH_qd_k.
\end{align}
\begin{align}
F_{AV}^{\mathrm{pre}}
=
2BS^2Gr_V^G, \qquad
F_{AV}^{\mathrm{dec}}
=
2BSGr_V^G.
\end{align}

PALU folds the value-side reconstruction into the output projection, so attention-value aggregation can be performed in the grouped latent value space. Let the total latent value width across all groups be $d_V^{\mathrm{latent}} = G r_v^{G}$. The precise arithmetic depends on the mapping between query heads and value groups; we therefore retain the value-side latent width explicitly.

The fused output projection maps the concatenated latent value
representation back to the model dimension, giving
\begin{align}
F_O^{\mathrm{pre}} =
2BS\,G r_v^{G}d, \qquad
F_O^{\mathrm{dec}}
= 2B\,G r_v^{G}d.
\end{align}

The prefill cost is 
\begin{equation}
F_{PALU}^{pre} = 
2BS\left[d
\left(
H_qd_k
+
G r_k^{G}
+
G r_v^{G}
\right) + H_{\mathrm{kv}}r_k^{G}d_k + SH_qd_k + Gr_V^G (S+d) \right].
\end{equation}

Ignoring the $S$-independent projection and output terms, the dominant
cache-length-dependent decoding cost is
\begin{equation}
F_{\mathrm{Palu,LC}}^{\mathrm{dec}}
\simeq
2BS
\left(
H_{\mathrm{kv}}r_k^{G}d_k
+
H_qd_k + Gr_v^{G}
\right) .
\end{equation}

\paragraph{Ours}

Our method reduces the architectural query/key and value head dimensions to
$r_k$ and $r_v$, respectively. All compression transformations are folded into
the query, key, value, and output projection matrices. Consequently, neither
key nor value reconstruction is required during inference. The native-reduced dimensional attention FLOPs are as follows

During prefill,
\begin{equation}
F_{\mathrm{Ours}}^{\mathrm{pre}}
=
2BSd
\left(
H_qr_k + H_{\mathrm{kv}}r_k + H_{\mathrm{kv}}r_v
\right)+
2BS^2H_q(r_k+r_v)
+
2BSH_qr_vd.
\end{equation}

During decoding,
\begin{equation}
F_{\mathrm{Ours}}^{\mathrm{dec}}
=
2Bd
\left(
H_qr_k + H_{\mathrm{kv}}r_k + H_{\mathrm{kv}}r_v
\right)+
2BSH_q(r_k+r_v)
+
2BH_qr_vd.
\end{equation}

The cache-length-dependent decoding cost is therefore
\begin{equation}
F_{\mathrm{Ours,LC}}^{\mathrm{dec}}
\simeq
2BSH_q(r_k+r_v).
\end{equation}

For $d_k=d_v$ and $r_k=r_v=d_k/2$, this term is exactly half that of
full attention:
\begin{equation}
\frac{
F_{\mathrm{Ours,LC}}^{\mathrm{dec}}
}{
F_{\mathrm{Full,LC}}^{\mathrm{dec}}
}
=
\frac{r_k+r_v}{d_k+d_v}
=
\frac{1}{2}.
\end{equation}

RAP follows similar compressed attention as Ours and therefore the prefill and decoding FLOPs would be alike. Similarly, Think Keys Full Values also compresses attention while keeping Values in full dimension $r_v = d_v$.

\subsubsection{Derivation of Table~\ref{tab:flops_analysis}}
\label{app:table1_derivation}

Table \ref{tab:flops_analysis} compares methods under the representative grouped-query attention configuration \(d=1536\), \(H_q=6\), \(H_{kv}=2\), and \(d_k=d_v=128\). For RACE, RAP, and EigenAttention, we use \(r_k=r_v=64\), while Thin Keys uses \(r_k=64\) and \(r_v=128\). For PALU G-LRD, the two KV heads form a single group (\(G=1\), \(s=2\)) with group ranks \(r_k^G=r_v^G=128\); these ranks are defined per group rather than per head. We count a multiply–add as two FLOPs and omit elementwise operations. For prefill, we report the coefficient of the dominant \(O(S^2)\) attention computation, while for single-token decoding, we report all operations that scale linearly with the cached sequence length \(S\). Common factors such as \(2BS^2\) for prefill and \(2BS\) for decoding therefore cancel in the normalized ratios. This convention captures the distinction between native dimensional reduction, where attention is performed directly in the compressed representation, and reconstruction-based methods, which incur additional cache-length-dependent work during decoding.

\paragraph{RACE, RAP, and Thin Keys.} Both methods execute attention natively with \(r_k=r_v=64\), without reconstructing the discarded dimensions. Their cache retention is
\[
\rho_{\mathrm{cache}}
=\frac{H_{kv}(r_k+r_v)}{H_{kv}(d_k+d_v)}
=\frac{64+64}{128+128}=0.5.
\]The same dimensional reduction applies to the Q, K, V, and output projections, giving \(P_{\mathrm{native}}/P_{\mathrm{full}}=0.5\). Since both the query–key and attention–value products are evaluated directly in dimensions \(r_k\) and \(r_v\), the normalized dominant prefill and decoding costs are also
\[
\rho_{\mathrm{prefill}}
=\rho_{\mathrm{decode}}
=\frac{H_q(r_k+r_v)}{H_q(d_k+d_v)}
=0.5.
\]

For the Thin Keys baseline, $r_v = 128$ as they do not compress values. This gives $\rho_{cache}=0.75$ and $\rho_{prefill}= \rho_{decode} = 0.75$.

\paragraph{EigenAttention.}
EigenAttention stores keys and values in reduced dimensions
$r_k=r_v=64$, giving 50\% KV-cache retention. Following the
parameter-counting convention in Table~1, we count the compressed
projections together with the learned key and value bases:
\begin{align}
P_{\text{Eigen}}
&=
d\left(
H_q d_k + H_{kv} r_k + H_{kv} r_v + H_q r_v
\right)
+ H_{kv}(r_k d_k + r_v d_v).
\end{align}
For the representative configuration used for parameter counting,
$d=1536$, $H_q=H_{kv}=6$, $d_k=d_v=128$, and
$r_k=r_v=64$, this gives
\[
\frac{P_{\text{Eigen}}}{P_{\text{full}}}
=0.698 
\]

During prefill, key reconstruction scales as $O(S)$, while the
query-key and attention-value products scale as $O(S^2)$. Hence,
asymptotically the dominant prefill cost is
\[
\rho_{\text{prefill}}^{\text{Eigen}}
=
\frac{H_q(d_k+r_v)}
     {H_q(d_k+d_v)}
=
\frac{128+64}{128+128}
=0.75.
\]

For decoding, however, every cached compressed key must be mapped back
to the original key dimension before query-key attention. The
cache-length-dependent cost $
F_{\text{Eigen,LC}}^{\text{dec}}
\simeq
2BS\left[
H_{kv}r_kd_k
+
H_q(d_k+r_v)
\right].
$
Normalizing by full attention,
$
F_{\text{Full,LC}}^{\text{dec}}
=
2BSH_q(d_k+d_v),
$
gives
\[
\rho_{\text{decode}}^{\text{Eigen}}
=
\frac{
H_{kv}r_kd_k + H_q(d_k+r_v)
}{
H_q(d_k+d_v)
}=
11.42.
\]

\paragraph{PALU.}
PALU G-LRD jointly compresses groups of KV heads. For the parameter
comparison in Table~1, we use $H_{kv}=2$ heads divided into
$G=1$ groups of $s=2$ heads, with group ranks
$r_k^G=r_v^G=128$. The resulting cache stores
\[
C_{\text{PALU}}
=
G(r_k^G+r_v^G)
=
3(128+128)
=
768
\]
scalars per token, corresponding to 50\% of the full KV cache.

Following the parameter-counting convention in Table~1, we count the
query projection, both low-rank factor matrices for the grouped key and
value projections, and the fused value-output map:
$
P_{\text{PALU}}
=
dH_qd_k
+
Gr_k^G(d+s d_k)
+
Gr_v^G(d+s d_v)
+
dGr_v^G $
which gives
\[
\frac{P_{\text{PALU}}}{P_{\text{full}}}
=
0.583.
\]

PALU reconstructs cached keys into the original key dimension before
query-key attention, while value aggregation is performed directly in
the grouped latent value space. Consequently, the dominant
cache-length-dependent decoding cost is
\[
F_{\text{PALU,LC}}^{\text{dec}}
\simeq
2BS\left[
H_{kv} r_k^G d_k
+
H_q d_k
+
G r_v^G
\right].
\]
Normalizing by full attention gives
\[
\rho_{\text{decode}}^{\text{PALU}}
=
\frac{
H_{kv} r_k^G d_k + H_q d_k + G r_v^G
}{
H_q(d_k+d_v)
}.
=
21.92.
\]

Fore prefill,
\[\rho_{\text{prefill}}^{\text{PALU}}
=
\frac{H_qd_k+Gr_v^G}
{H_q(d_k+d_v)}. = 0.75\]

\subsection{Approximate Matrix Multiplication View of Attention Compression}
\label{app:amm}

This section formalizes the connection between head-dimension compression and
approximate matrix multiplication (AMM), and clarifies how our
RoPE-constrained coordinate selection relates to existing sampling-based AMM
results.

\paragraph{AMM formulation.}
Consider a single attention head with post-RoPE query and key matrices $ Q,K\in\mathbb{R}^{S\times d_k}$.
The pre-softmax attention interaction is $L=QK^\top$.

Following the standard AMM formulation, let
\[
A=Q^\top\in\mathbb{R}^{d_k\times S},
\qquad
B=K^\top\in\mathbb{R}^{d_k\times S},
\]
so that $A^\top B = QK^\top$.
Given a dimensionality-reducing operator $\Pi\in\mathbb{R}^{r_k\times d_k}$,
AMM replaces this product by
\begin{equation}
(\Pi A)^\top(\Pi B)
=
Q\Pi^\top\Pi K^\top.
\label{eq:app_amm_attention}
\end{equation}
Thus, dimensionality reduction along the shared row dimension of $A$ and $B$
is exactly dimensionality reduction along the head-feature dimension of
$Q$ and $K$.

\paragraph{Stable-rank AMM.}
For conforming matrices $A$ and $B$, ~\citep{cohen2016optimal}
define a $(k,\epsilon)$ spectral-norm AMM property of the form
\begin{equation}
\left\|
(\Pi A)^\top(\Pi B)-A^\top B
\right\|_2
\leq
\epsilon
\sqrt{
\left(
\|A\|_2^2+\frac{\|A\|_F^2}{k}
\right)
\left(
\|B\|_2^2+\frac{\|B\|_F^2}{k}
\right)
}.
\label{eq:app_k_amm}
\end{equation}
They show that suitable oblivious embeddings can satisfy this property using
\[
m=
\mathcal{O}\left(
\frac{k+\log(1/\delta)}{\epsilon^2}
\right)
\]
rows with probability at least $1-\delta$.

The stable rank of a matrix $A$ is
\begin{equation}
\operatorname{sr}(A)
=
\frac{\|A\|_F^2}{\|A\|_2^2}.
\label{eq:app_stable_rank}
\end{equation}
Setting
\[
k
\gtrsim
\max\{\operatorname{sr}(A),\operatorname{sr}(B)\}
\]
makes the Frobenius-to-spectral terms in
Eq.~\eqref{eq:app_k_amm} controlled by the corresponding spectral norms.
Consequently, the required sketch dimension can scale with the effective
spectral dimension of the operands rather than their ambient dimension.
Applied to attention with $A=Q^\top$ and $B=K^\top$, this provides theoretical
motivation for reducing $d_k$ when the query and key activations have low
stable rank.




\paragraph{From row sampling in AMM to feature selection in attention.}
AMM sampling methods act on the shared row dimension of the two operands.
Under the identification $A=\widetilde Q^\top;
    B=\widetilde K^\top$
the rows of $A$ and $B$ correspond to the feature dimensions of
$\widetilde Q$ and $\widetilde K$. Let
\[
    P_{\mathcal I}
    =
    [e_{i_1},\ldots,e_{i_r}]
    \in\{0,1\}^{d_k\times r_k}
\]
select the coordinate set $\mathcal I$, and take
$\Pi=P_{\mathcal I}^{\top}$. The AMM approximation then becomes
\begin{equation}
\begin{aligned}
    (\Pi A)^\top(\Pi B)
    &=
    \widetilde Q
    P_{\mathcal I}P_{\mathcal I}^{\top}
    \widetilde K^\top \\
    &=
    (\widetilde QP_{\mathcal I})
    (\widetilde KP_{\mathcal I})^\top .
\end{aligned}
\label{eq:amm_selection}
\end{equation}
Thus, row selection in AMM is exactly feature selection in attention,
with the \emph{same} selection matrix applied to queries and keys.

Equation~\ref{eq:amm_selection} motivates two design choices. First,
queries and keys should use an identical reduction $P_{\mathcal I}$ or in the context of RACE, $P_{QK}$ rather than
independently constructed maps $P_Q$ and $P_K$. 

Separate maps
would produce $(\widetilde QP_Q)(\widetilde KP_K)^\top
    =
    \widetilde QP_QP_K^\top\widetilde K^\top,
$
which is not the shared-sketch AMM approximation
$\widetilde QP_{QK}P_{QK}^\top\widetilde K^\top$.

Second, because the approximation target is the joint product
$\widetilde Q\widetilde K^\top$, a data-dependent choice of
$P_{QK}$ should account for both operands. Selecting dimensions using
only query or only key statistics can discard features that have low
importance for one operand but materially affect their interaction.

These observations determine the form of the reduction - a common,
jointly informed Q/K selector, but not its precise selection rule.
General AMM results show that appropriately chosen subsets of the shared
dimension can preserve matrix products. RACE further restricts this
family to selectors that are compatible with RoPE and derives an
activation-energy criterion for choosing among them.

\paragraph{Relation to our selector.}

RACE's compression operator is a more restricted member of this general family.
For each attention head, we use an unweighted binary selection matrix
$P_{QK}
=
[e_i]_{i\in\mathcal I_{QK}}$,
where $\mathcal I_{QK}$ is chosen from calibration activations. Unlike the general data-dependent sampling operators covered by AMM theory, our selector must additionally satisfy the structural constraints induced by RoPE. Specifically, if $
\mathcal P_{\mathrm{rope}}
=
\{(i_m,j_m)\}_{m=1}^{M}
$
denotes the rotary coordinate pairs, then
\begin{equation}
\mathcal I_{QK}
=
\bigcup_{m\in\mathcal S}\{i_m,j_m\},
\qquad
|\mathcal S|=\frac{r_k}{2}.
\label{eq:app_rope_pairs}
\end{equation}
The selected set is therefore a union of complete two-dimensional rotary
blocks.

\paragraph{Energy-based selection and scope of the guarantee.}
The AMM results above establish that low-dimensional and sampling-based
representations capable of preserving matrix products exist under appropriate
conditions. They do not imply that every coordinate subset, or specifically
our RoPE-constrained subset, satisfies the same AMM bound.

Our method chooses among the admissible RoPE pairs using calibration energy.
For pair $(i_m,j_m)$, we define
\begin{equation}
E_m
=
\|Q_{:,i_m}\|_2^2
+
\|Q_{:,j_m}\|_2^2
+
\|K_{:,i_m}\|_2^2
+
\|K_{:,j_m}\|_2^2,
\end{equation}
and retain the $r_k/2$ highest-energy pairs. This criterion favors coordinates
that carry substantial query/key activation energy while remaining compatible
with RoPE.

We therefore use stable-rank AMM and sampling-based AMM as theoretical
motivation for reducing the shared query--key dimension, rather than claiming
a direct AMM guarantee for the proposed energy-based selector. The exact
property guaranteed by our construction is structural: selecting complete
rotary pairs permits RoPE and attention to be executed natively in the reduced
space.

\subsection{Proofs}
\label{app:proofs}

\paragraph{Proof of Theorem~\ref{thm:rope_compatibility}.} [RoPE-compatible coordinate selection]
\begin{proof}

Suppose $P_{QK}$ selects exactly the coordinates belonging to the
complete rotary pairs indexed by
$\mathcal S\subseteq\{1,\ldots,M\}$, with its columns ordered
consistently with the blocks of $R_t^{(\mathcal S)}$.
Since
\[
R_t=\operatorname{diag}(R_{1,t},\ldots,R_{M,t})
\]
acts independently on each rotary pair, applying $R_t$ to the selected
coordinates applies precisely the blocks $R_{m,t}$ for $m\in\mathcal S$.
Therefore,
\[
R_tP_{QK}
=
P_{QK}\operatorname{diag}(R_{m,t}:m\in\mathcal S)
=
P_{QK}R_t^{(\mathcal S)}
\]
for every $t$. Hence $P_{QK}$ is RoPE-compatible.

Conversely, suppose
\[
R_tP_{QK}
=
P_{QK}R_t^{(\mathcal S)}
\qquad\text{for all }t.
\]
The right-hand side lies in $\operatorname{col}(P_{QK})$, and therefore
\[
R_t\operatorname{col}(P_{QK})
\subseteq
\operatorname{col}(P_{QK})
\qquad\text{for all }t.
\]
Thus, the coordinate subspace selected by $P_{QK}$ must be invariant
under every $R_t$.

Consider any rotary pair $(i_m,j_m)$. Suppose, for contradiction, that
$P_{QK}$ selects $i_m$ but not $j_m$. Then
$e_{i_m}\in\operatorname{col}(P_{QK})$, whereas
$e_{j_m}\notin\operatorname{col}(P_{QK})$.
For the $m$-th rotary block,
\[
R_{m,t}
=
\begin{bmatrix}
\cos\phi_{m,t} & -\sin\phi_{m,t}\\
\sin\phi_{m,t} & \cos\phi_{m,t}
\end{bmatrix},
\]
so, up to the coordinate convention for the pair,
\[
R_te_{i_m}
=
\cos\phi_{m,t}\,e_{i_m}
+
\sin\phi_{m,t}\,e_{j_m}.
\]
By assumption, pair $m$ undergoes a nontrivial rotation for at least one
position, so there exists $t$ such that
$\sin\phi_{m,t}\neq0$. For this $t$, $R_te_{i_m}$ has a nonzero
component along $e_{j_m}$ and therefore
\[
R_te_{i_m}
\notin
\operatorname{col}(P_{QK}),
\]
contradicting the invariance established above.

The same argument applies if $j_m$ is selected but $i_m$ is not.
Hence, for every rotary pair, either both coordinates are selected or
neither is selected. Therefore the coordinates selected by $P_{QK}$
form a union of complete rotary pairs, indexed by some
$\mathcal S\subseteq\{1,\ldots,M\}$.
    
\end{proof}

\paragraph{Proof of Proposition 1.}
[Error bound for RoPE-pair truncation] 
\begin{proof}
    
Partition the post-RoPE query and key matrices into RoPE-aligned coordinate pairs,
\[
\widetilde Q=[\widetilde Q_1,\ldots,\widetilde Q_M],
\qquad
\widetilde K=[\widetilde K_1,\ldots,\widetilde K_M],
\]
where each
\[
\widetilde Q_m, \widetilde K_m\in\mathbb{R}^{S\times 2}.
\]
Because matrix multiplication along the feature dimension decomposes additively
across these blocks,
\[
\widetilde Q \widetilde K^\top
=
\sum_{m=1}^{M} \widetilde Q_m \widetilde K_m^\top.
\]
For a retained pair set $\mathcal S\subseteq[M]$, the corresponding truncated
product is
\[
\widetilde Q_{\mathcal S} \widetilde K_{\mathcal S}^\top
=
\sum_{m\in\mathcal S} \widetilde Q_m \widetilde K_m^\top.
\]
Hence the truncation error would be,
\begin{equation}
\widetilde Q \widetilde K^\top-\widetilde Q_{\mathcal S} \widetilde K_{\mathcal S}^\top
=
\sum_{m\notin\mathcal S} \widetilde Q_m \widetilde K_m^\top.
\label{eq:pair_error_decomposition}
\end{equation}

Applying the triangle inequality for the Forbenius norm gives
\begin{align}
\left\|
\widetilde Q \widetilde K^\top-\widetilde Q_{\mathcal S}\widetilde K_{\mathcal S}^\top
\right\|_F
&\le
\sum_{m\notin\mathcal S}
\left\|\widetilde Q_m \widetilde K_m^\top\right\|_F.
\end{align}
Since for any conforming matrices $A$ and $B$, $\left\|AB^T\right\|_F \leq \left\|A\right\|_F \left\|B\right\|_F$, the above equation can be written as 

\begin{align}
\left\|
\widetilde Q \widetilde K^\top-\widetilde Q_{\mathcal S} \widetilde K_{\mathcal S}^\top
\right\|_F
&\le
\sum_{m\notin\mathcal S}
\left\|\widetilde Q_m\right\|_F \left\|\widetilde K_m\right\|_F.
\end{align}

Finally, by the arithmetic--geometric mean inequality, we obtain
\begin{equation}
\|\widetilde Q_m\|_F\|\widetilde K_m\|_F
\le
\frac{1}{2}
\left(
\|\widetilde Q_m\|_F^2+\|\widetilde K_m\|_F^2
\right)
=
\frac{1}{2}E_m.
\end{equation}
Combining the above inequalities yields
\begin{equation}
\boxed{
\left\|
\widetilde Q \widetilde K^\top-\widetilde Q_{\mathcal S} \widetilde K_{\mathcal S}^\top
\right\|_2
\le
\sum_{m\notin\mathcal S}
\left\|\widetilde Q_m\right\|_F \left\|\widetilde K_m\right\|_F
\le
\frac{1}{2}
\sum_{m\notin\mathcal S}E_m.
}
\end{equation}

For a fixed budget $|\mathcal S|=k$, minimizing the right-hand side is
equivalent to maximizing
\[
\sum_{m\in\mathcal S}E_m.
\]
Therefore, the subset consisting of the $k$ largest-energy RoPE pairs
minimizes this upper bound among all subsets of $k$ pairs.
\hfill$\square$

\paragraph{Evaluation in the pre- or post-RoPE space}
Because each RoPE pair is transformed by an orthogonal $ 2\times2\ $ rotation,

$$\left\|\widetilde Q_m\right\|_F = \left\|Q_m\right\|_F \qquad \left\|\widetilde K_m\right\|_F = \left\|K_m\right\|_F
$$

Hence,
$$E_m = \left\|\widetilde Q_m\right\|_F ^2 + \left\|\widetilde K_m\right\|_F ^2 = \left\|Q_m\right\|_F ^2 + \left\| K_m\right\|_F ^2$$
\end{proof}
The pair-energy criterion can therefore be evaluated equivalently in the pre- or post-RoPE space. The implementation accumulates the joint Q/K second moment and ranks each pair by the sum of its two diagonal entries, which is proportional to this $E_m$. In pour implementations we use the post-RoPE values for the calibration. 

\paragraph{Connection to attention-output perturbation.}
Let
\[
Z=\frac{\widetilde Q\widetilde K^\top}{\sqrt{d_k}},
\qquad
Z_{\mathcal S}
=
\frac{\widetilde Q_{\mathcal S}\widetilde K_{\mathcal S}^\top}{\sqrt{d_k}}.
\]
The Jacobian of softmax at a probability vector $p$ is
$\operatorname{diag}(p)-pp^\top$, whose spectral norm is at most
$1/2$. Applying this result row-wise gives
\[
\|A-A_{\mathcal S}\|_F
\leq
\frac{1}{2}\|Z-Z_{\mathcal S}\|_F.
\]
Because $Z-Z_{\mathcal S}\in
\mathbb{R}^{S_{\mathrm{cal}}\times S_{\mathrm{cal}}}$,
\[
\begin{aligned}
\|Z-Z_{\mathcal S}\|_F
&\leq
\sqrt{S_{\mathrm{cal}}}\,
\|Z-Z_{\mathcal S}\|_2 \\
&=
\frac{\sqrt{S_{\mathrm{cal}}}}{\sqrt{d_k}}
\left\|
\widetilde Q \widetilde K^\top-\widetilde Q_{\mathcal S} \widetilde K_{\mathcal S}^\top
\right\|_2.
\end{aligned}
\]
For fixed values $V$,
\[
\begin{aligned}
\|AV-A_{\mathcal S}V\|_F
&\leq
\|A-A_{\mathcal S}\|_F\|V\|_2 \\
&\leq
\frac{\sqrt{S_{\mathrm{cal}}}\|V\|_2}
     {4\sqrt{d_k}}
\sum_{m\notin\mathcal S}E_m,
\end{aligned}
\]
where the final inequality follows from
Proposition~\ref{prop:pair_energy_bound}.

More generally, if the downstream calibration loss is locally
$L_{\mathrm{down}}$-Lipschitz with respect to the attention output,
then
\[
\left|
\mathcal L_{\mathrm{down}}(AV)
-
\mathcal L_{\mathrm{down}}(A_{\mathcal S}V)
\right|
\leq
L_{\mathrm{down}}
\frac{\sqrt{S_{\mathrm{cal}}}\|V\|_2}
     {4\sqrt{d_k}}
\sum_{m\notin\mathcal S}E_m.
\]
This is a conditional, calibration-dependent bound and does not imply
that energy selection globally minimizes language-model loss.

\subsection{Implementation Details}
\label{app:implementation}
\paragraph{Software and model execution.}
We implement RACE in PyTorch and Hugging Face Transformers. Our implementation
supports GPT-NeoX, Llama, Qwen2, Pythia, and Granite-style attention modules,
including both multi-head and grouped-query attention. Although all models reported in the main paper use full-RoPE or RoPE applied to all dimensions, our implementation supports partial RoPE as well by applying pairwise selection only within the rotary-encoded subspace while leaving any non-rotary dimensions unconstrained by the RoPE-pair requirement. Pre-trained models are loaded and
evaluated in BF16 on GPU, while calibration statistics and projection
construction use higher-precision accumulation during calibration. The folded compressed model
uses standard attention operations and does not require a custom CUDA or
Triton kernel. Unless otherwise stated, we use PyTorch scaled dot-product
attention (SDPA) for the compressed runtime. The implementation also supports
eager attention and FlashAttention-3 as shown below.

\begin{table}[t]
\caption{
\textbf{Comparison of methods that compress KV Cache along head dimension.}
RACE operates natively in the reduced dimensions without reconstruction.
See ~\ref{app:table1_derivation} for the derivations.
}
\label{tab:kernel_analysis}
\resizebox{\columnwidth}{!}{
\begin{tabular}{ccccccc}
\hline
\rowcolor[HTML]{EFEFEF} 
                                                                     & \textbf{\begin{tabular}[c]{@{}c@{}}Total Inference\\ TFLOPs\end{tabular}} & \textbf{\begin{tabular}[c]{@{}c@{}}TTFT\\ (ms)\end{tabular}} & \textbf{\begin{tabular}[c]{@{}c@{}}Prefill\\ T.put (tok/sec)\end{tabular}} & \textbf{\begin{tabular}[c]{@{}c@{}}Decode\\ T.put (tok/sec)\end{tabular}} & \textbf{\begin{tabular}[c]{@{}c@{}}Total\\ T.put (tok/sec)\end{tabular}} & \textbf{\begin{tabular}[c]{@{}c@{}}GPU\\ Memory (GB)\end{tabular}} \\ \hline
\textbf{Eager}                                                       & 3.973                                                                     & 41.051                                                       & 13937                                                                      & 45.186                                                                    & 224.44                                                                   & 5.884                                                              \\
\rowcolor[HTML]{EFEFEF} 
\textbf{SDPA}                                                        & 3.973                                                                     & 28.854                                                       & 17755.36                                                                   & 56.864                                                                    & 282.904                                                                  & 5.882                                                              \\
\textbf{Flash Attention 3} & 3.973                                                                     & 34.170                                                       & 15054                                                                      & 46.23                                                                     & 230.127                                                                  & 5.882                                                              \\ \hline
\end{tabular}
}
\end{table}

Furthermore, PALU rely on method-specific execution paths for efficient inference, including custom fused kernels. We were unable to reliably reproduce these optimized kernels in our hardware/software environment and therefore avoid presenting potentially misleading wall-clock comparisons. Instead, we compare algorithmic attention FLOPs across methods and measure RACE end-to-end using standard PyTorch attention backends. This separates implementation-specific kernel efficiency from the architectural distinction that RACE performs attention directly in the reduced dimensions without runtime reconstruction.  The original rotary-frequency indices and the model's RoPE scaling rule are
preserved; selected frequencies are not re-indexed as consecutive frequencies.
For grouped-query attention (GQA), all query heads associated with a given KV
head use the same selected key subspace. For partial RoPE models like Pythia models, the RoPE pairs are considered as pairs and the non-RoPE dimesnisons are paired with adjacent dimensions as in interleaved RoPE as shown in Figure \ref{fig:selection}.

\paragraph{Calibration in RACE.}

For each attention layer, we collect query and key activations after the
pretrained RoPE transformation. Our implementation also support calibration with pre-RoPE queries and keys and we also show that they are equivalent in \ref{app:proofs}. We preserve the physical rotary-pair layout of
the original architecture rather than assuming that adjacent tensor
coordinates necessarily form a rotary pair. For each rotary pair $m$, we
accumulate the joint activation-energy score as in Eq.\ref{eq:pair_energy}
and retain the $r_k/2$ highest-scoring pairs. Selection is performed
independently for each layer and attention head (or KV head under grouped-query
attention), subject to retaining both coordinates of every selected RoPE pair.

\begin{wrapfigure}{r}{0.65\linewidth}
    \centering
    \vspace{-0.2cm}
\includegraphics[width=0.98\linewidth]{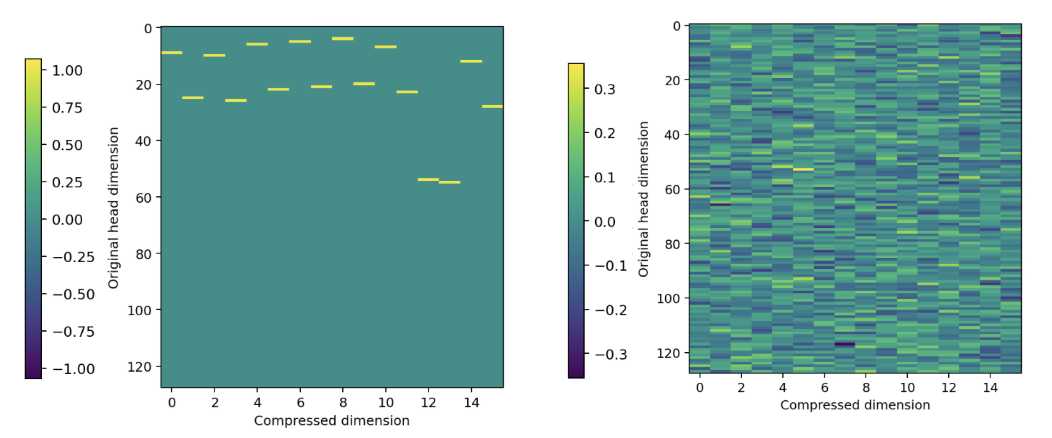}
    \caption{
    \textbf{Representative RACE $P_{QK}$ and $P_V$ matrices before folding.}
Left: Each column corresponds to one compressed Q/K coordinate and each row to an
original head coordinate. Nonzero entries indicate the coordinates retained by
 $P_{QK}$. Right: Each column corresponds to one compressed V coordinate and each row to an
original head coordinate. 
    }
    \label{fig:selection}
    \vspace{-0.3cm}
\end{wrapfigure}

 Note that models like Llama, Qwen, and Granite have full RoPE where all dimensions have RoPE and in such cases the sequential selection shown in \ref{fig:selection} Left will not be present. Such models also have sequential RoPE $[x_0, x_{d/2}], [x_1, x_{d/2+1}], \dots$. Our implementations supports both sequential and interleaved $[x_0, x_{1}], [x_2, x_3], \dots$ RoPE models. The values are not subject to RoPE. We employ $P_v$, the right signular vectors of V for the compression. This do not follow selection as shown in Figure \ref{fig:selection} Right.

\paragraph{Optional post-compression recovery.}
\label{app:recovery}
RACE is a post-training compression method and can be evaluated immediately after constructing and folding the reduced Q/K and V subspaces into the pretrained attention projections. At aggressive compression ratios, however, this discrete change in attention dimensionality can perturb the pretrained representation enough that a short optimization stage is highly beneficial. Although optional, all results in the main paper involve this recovery. Following folding, we use the original uncompressed checkpoint as a frozen teacher and optimize the compressed model using KL distillation on FineWeb data. The implementation uses a temperature of \(T=2\) and minimizes the divergence between the teacher and compressed-model output distributions. We train for 250 optimizer steps with sequences of length 512, a micro-batch size of 2, and gradient accumulation over 4 micro-batches, corresponding to approximately \(1.0\)M training tokens.  

Figure \ref{fig:FT_vary} separates the effect of this recovery from the compression procedure itself by reporting both the directly folded models (“w/o FT”) and their recovered counterparts. At 75\% and 87.5\% KV-cache retention, the directly compressed models are already relatively close to their recovered versions, indicating that moderate dimensionality reduction introduces limited disruption. In contrast, at 50\% retention, both RACE and RAP experience substantially larger degradation without recovery, particularly in perplexity. The optimization stage is not necessary to define or execute the compressed model. RACE produces a valid reconstruction-free reduced-dimensional attention layer immediately after folding. Rather, recovery provides an optional mechanism for adapting the pretrained weights when the dimensionality change becomes sufficiently aggressive. Moreover, the same recovery procedure also substantially improves RAP, suggesting that much of the recoverable degradation at aggressive compression reflects adaptation to the compressed representation rather than a method-specific training advantage. 

\begin{figure}[t]
    \centering

    \begin{subfigure}[t]{0.24\textwidth}
        \centering
        \includegraphics[width=\linewidth]{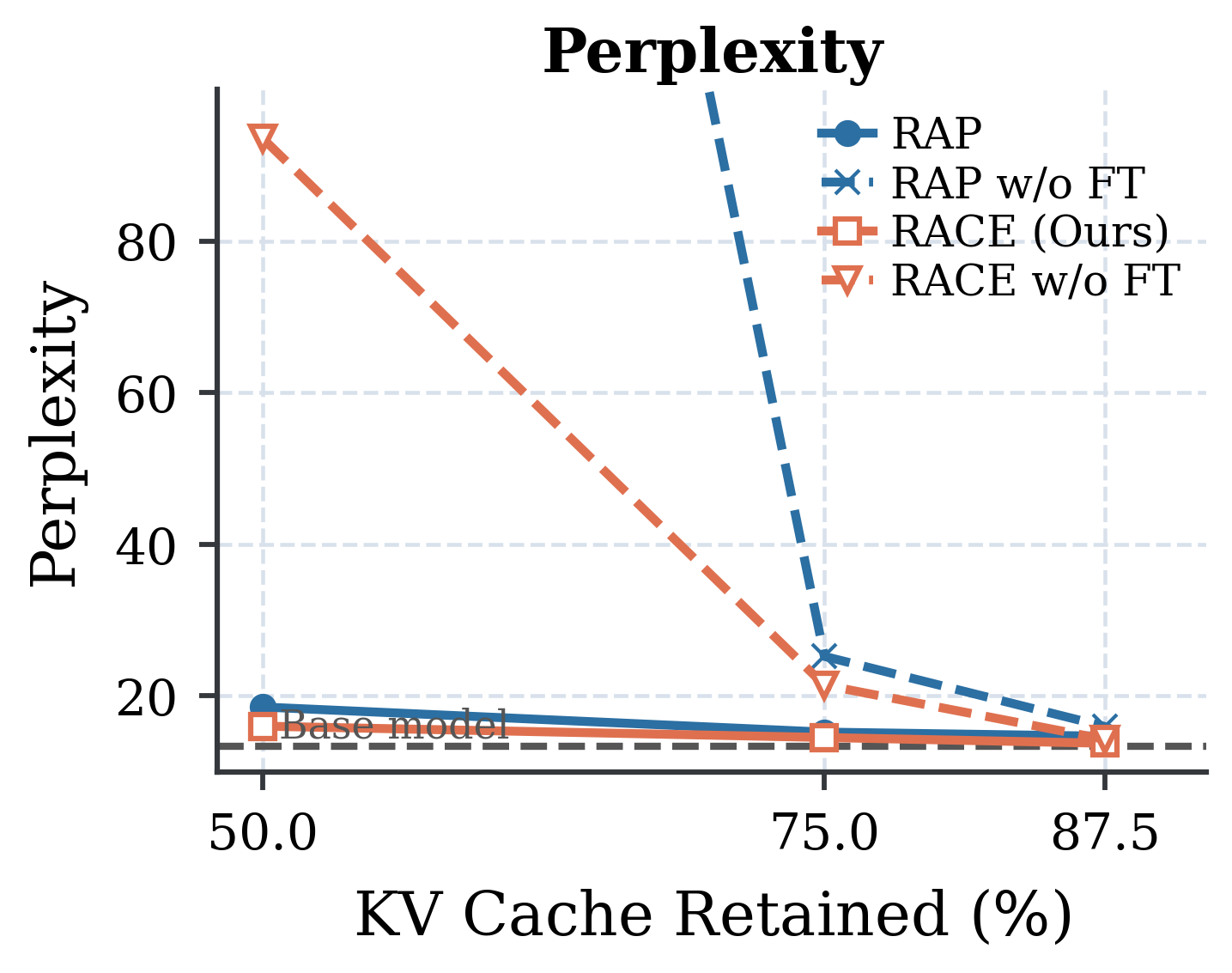}
        \caption{Llama-3.2-3B-Instruct Perplexity}
        \label{fig:result_a}
    \end{subfigure}
    \hfill
    \begin{subfigure}[t]{0.24\textwidth}
        \centering
        \includegraphics[width=\linewidth]{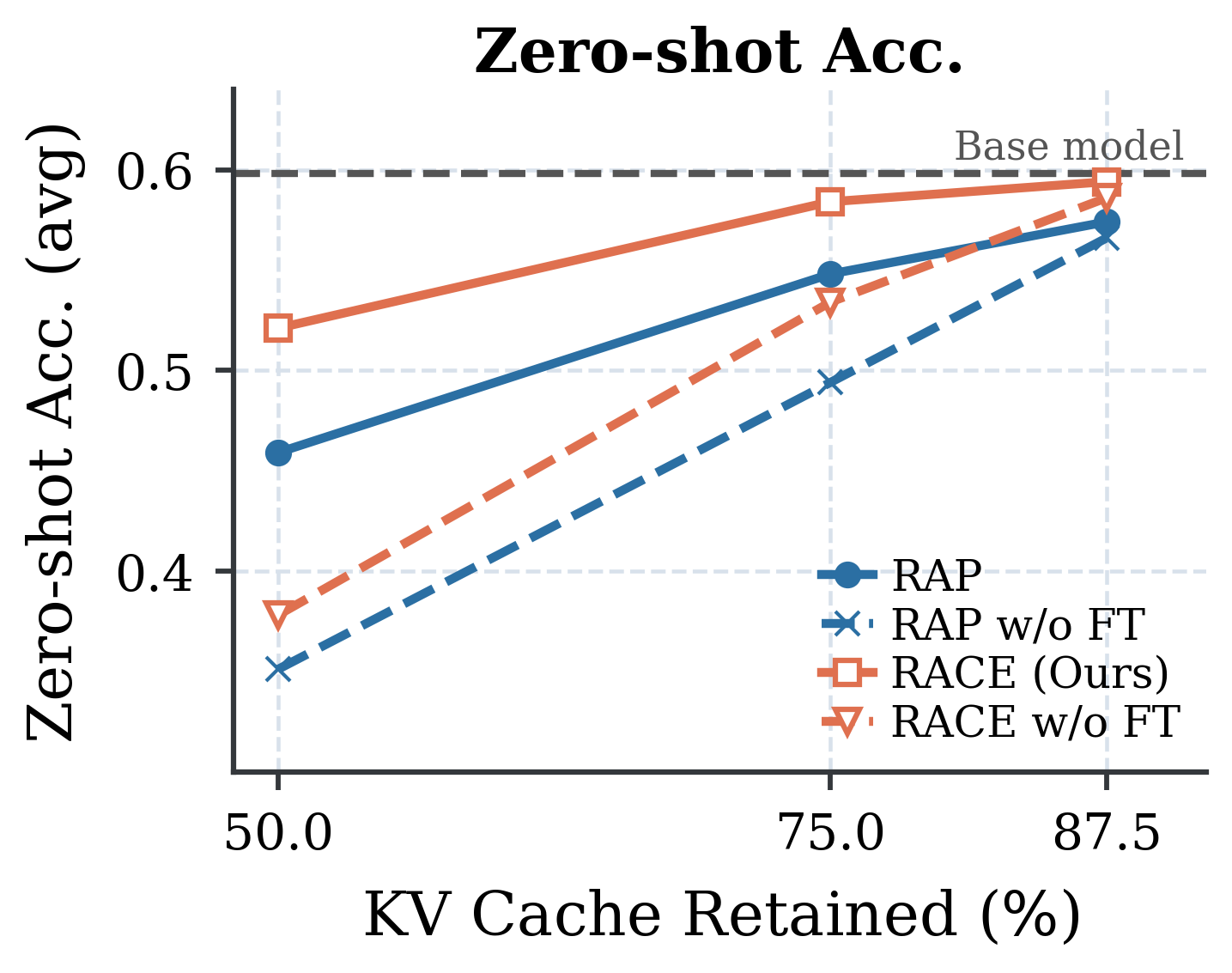}
        \caption{Llama-3.2-3B-Instruct Average zero-shot accuracy}
        \label{fig:result_b}
    \end{subfigure}
    \hfill
    \begin{subfigure}[t]{0.24\textwidth}
        \centering
        \includegraphics[width=\linewidth]{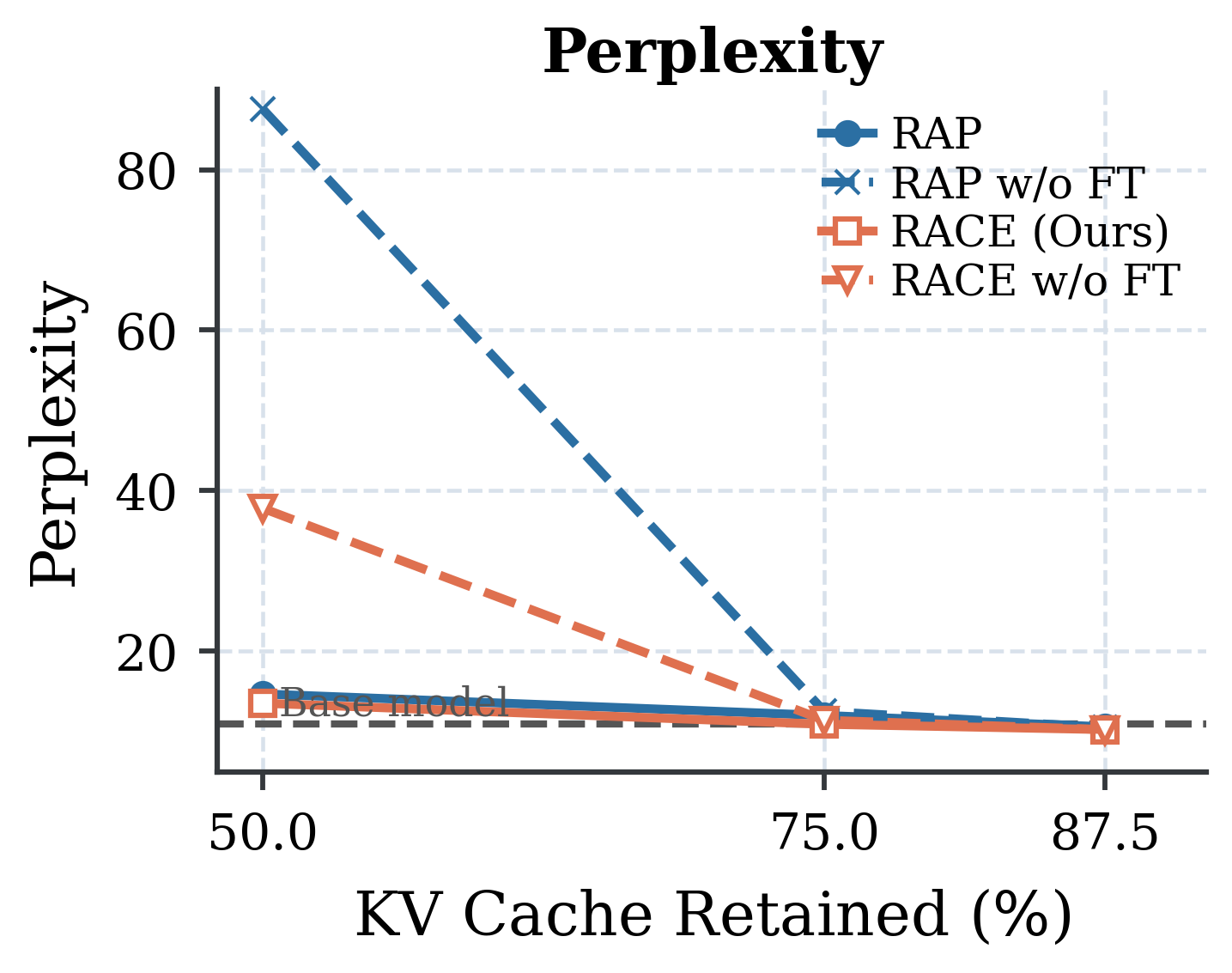}
        \caption{Qwen2.5 7B Perplexity}
        \label{fig:result_c}
    \end{subfigure} 
    \hfill
    \begin{subfigure}[t]{0.24\textwidth}
        \centering
        \includegraphics[width=\linewidth]{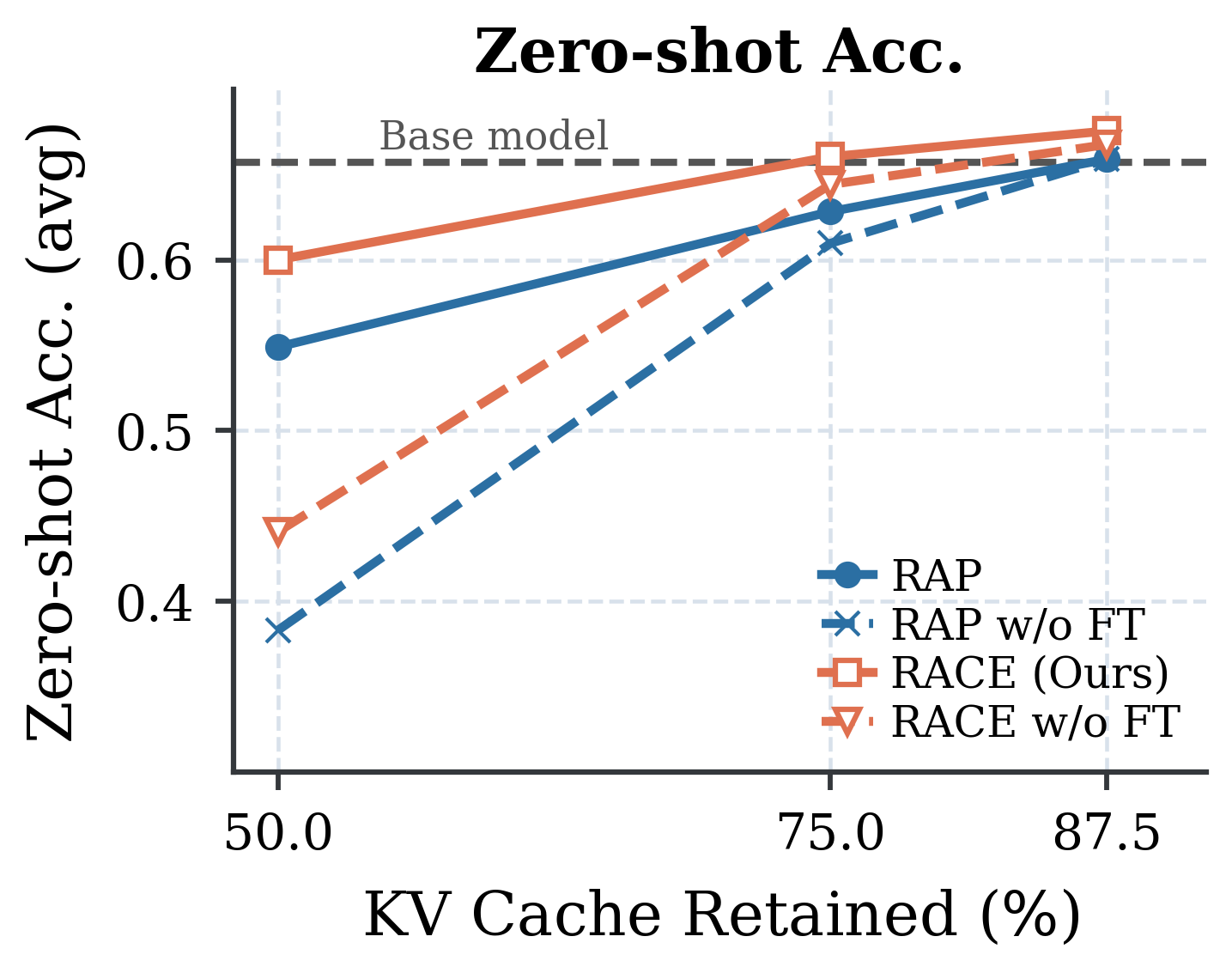}
        \caption{Qwen2.5 7B Average zero-shot accuracy}
        \label{fig:result_c}
    \end{subfigure} 
    \caption{\textbf{Effect of optional post-compression recovery.} Perplexity and average zero-shot accuracy before and after a short recovery stage for RACE and RAP at different KV-cache retention ratios. “w/o FT” denotes the folded compressed checkpoint evaluated directly, while the solid curves apply post-fold KL distillation from the uncompressed model. Recovery has little effect at mild compression but substantially restores performance at 50\% retention, particularly in perplexity.}
    \label{fig:FT_vary}

\end{figure}

\paragraph{Systems Details}
All systems experiments were conducted on a single NVIDIA H100 80GB HBM3 GPU. The machine used NVIDIA driver 590.48.01 with CUDA 13.1, and all reported memory, latency, throughput, and FLOP-related measurements were collected on this hardware configuration. Unless otherwise noted, experiments were run on a single GPU without model parallelism using SDPA kernels, so comparisons between the base model and compressed variants reflect changes induced by the compression method rather than differences in hardware allocation.

\begin{figure}[t]
    \centering

    \begin{subfigure}[t]{0.32\textwidth}
        \centering
        \includegraphics[width=\linewidth]{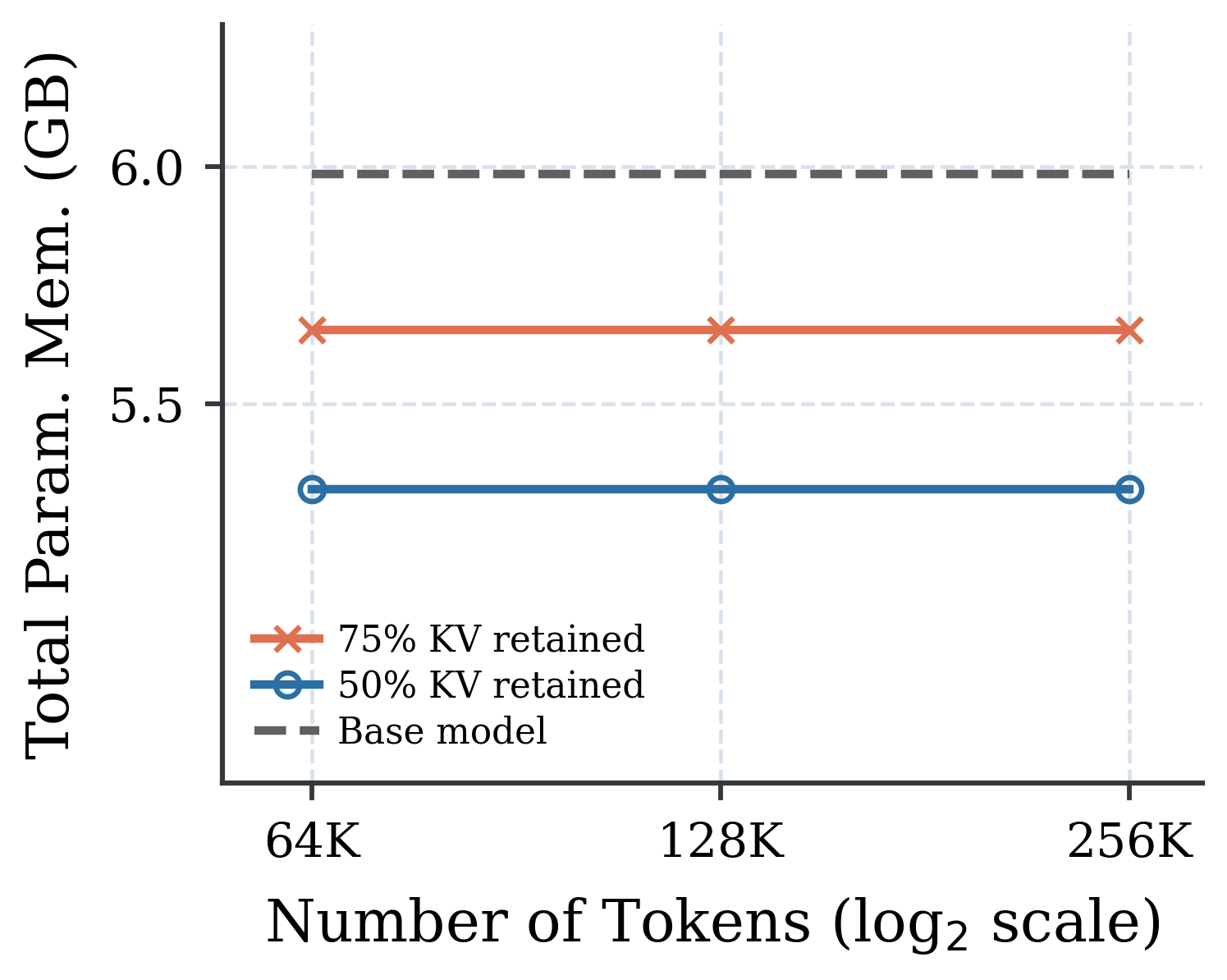}
        \caption{Total Parameter Memory}
        \label{fig:result_a}
    \end{subfigure}
    \hfill
    \begin{subfigure}[t]{0.32\textwidth}
        \centering
        \includegraphics[width=\linewidth]{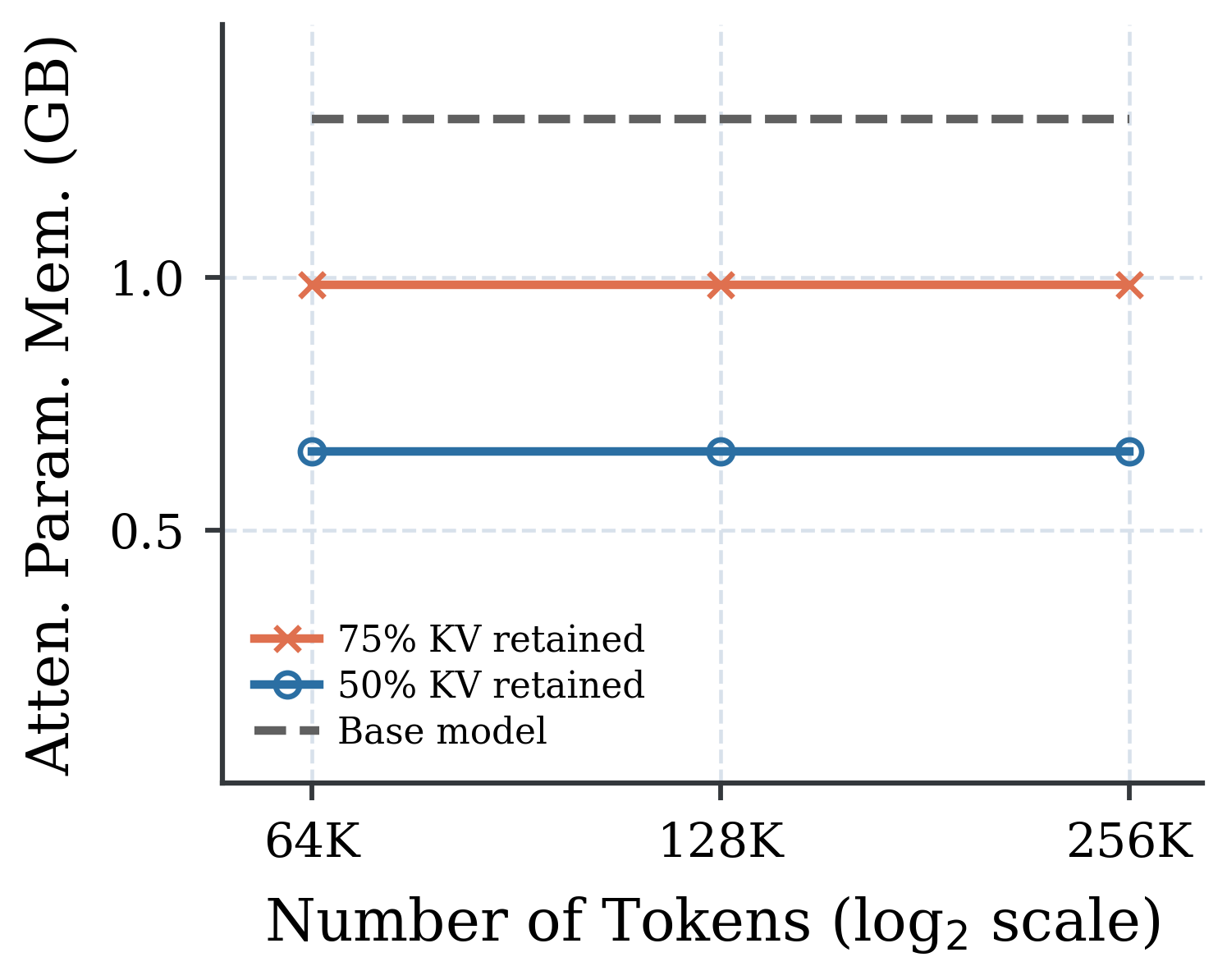}
        \caption{Attention Parameter Memory}
        \label{fig:result_b}
    \end{subfigure}
    \hfill
    \begin{subfigure}[t]{0.32\textwidth}
        \centering
        \includegraphics[width=\linewidth]{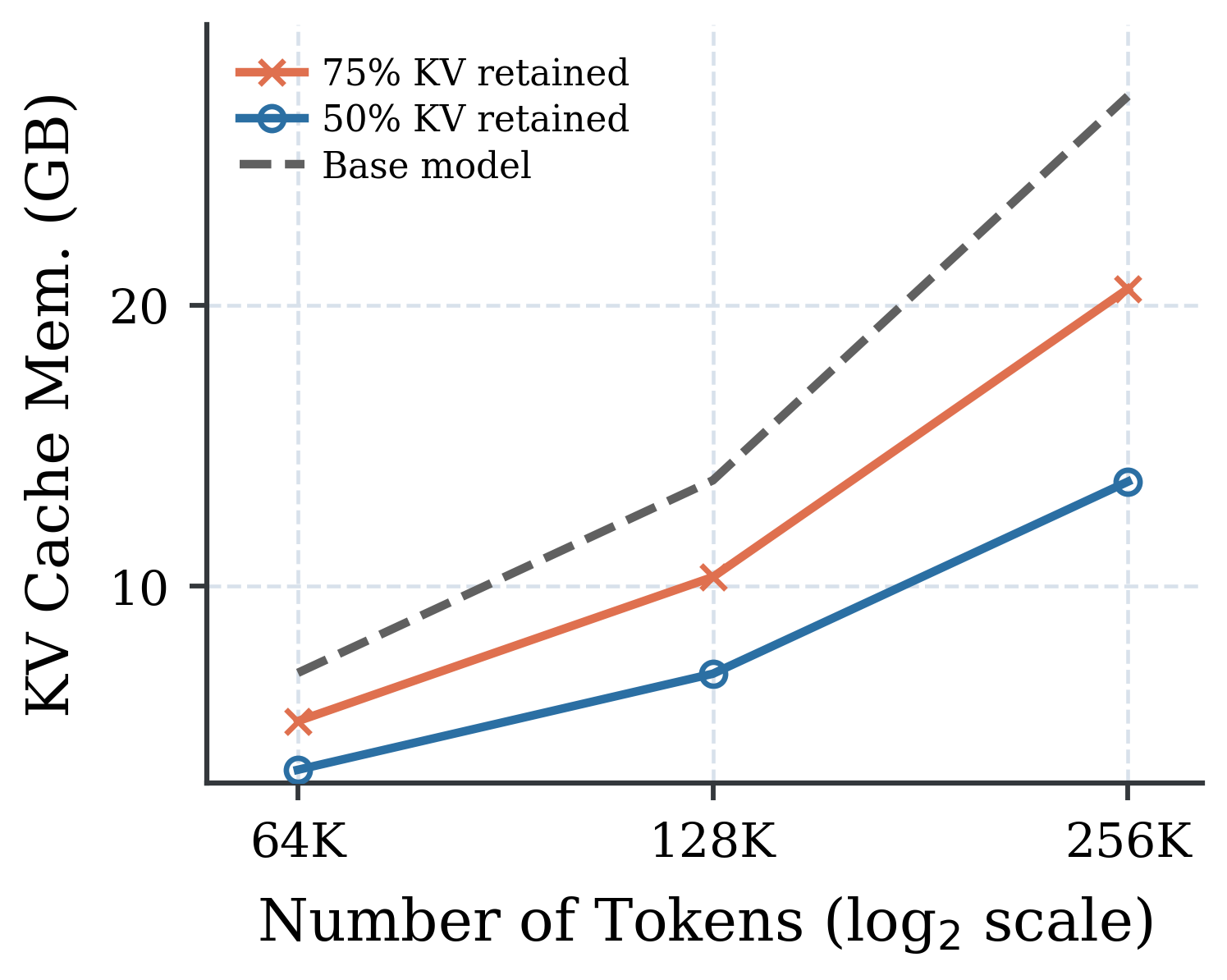}
        \caption{KV Cache Memory}
        \label{fig:result_c}
    \end{subfigure} 

    \caption{Memory scaling of RACE with context length. Total model parameter memory (left), attention-parameter memory (center), and KV-cache memory (right) for the base model and RACE at 75\% and 50\% KV retention across 64K–256K-token contexts. Parameter memory is independent of sequence length and decreases after folding the reduced-dimensional projections, while KV-cache memory scales  with context length; RACE correspondingly reduces both the fixed attention-parameter footprint and the context-dependent cache footprint.}
    \label{fig:mems}

\end{figure}

\subsection{Results}
\label{app:results}
\begin{table}[h]
\caption{Detailed quality results for head-dimension compression.
We report WikiText perplexity and per-task zero-shot accuracy across the
evaluated model families and KV-cache retention ratios. The final column reports
the average zero-shot accuracy across tasks. For each KV Cache compression level per model, we highlight the best performing perplexity and average zero-shot accuracy. RACE is comparable to reconstruction-based methods while not requiring the reconstruction or the additional FLOPs.}
\label{tab:ZS_full}
\resizebox{\columnwidth}{!}{%
\begin{tabular}{ccccccccccccccccccccc}
\hline
\rowcolor[HTML]{EFEFEF} 
\cellcolor[HTML]{EFEFEF}                                                                   & \cellcolor[HTML]{EFEFEF}                                                                                              & \cellcolor[HTML]{EFEFEF}                                                                             & \cellcolor[HTML]{EFEFEF}                                                                            & \cellcolor[HTML]{EFEFEF}                                                                                & \textbf{Completion}           & \multicolumn{4}{c}{\cellcolor[HTML]{EFEFEF}\textbf{Reasoning}}                                                                & \multicolumn{3}{c}{\cellcolor[HTML]{EFEFEF}\textbf{Knowledge QA}}                             & \multicolumn{2}{c}{\cellcolor[HTML]{EFEFEF}\textbf{Understanding}} & \textbf{Truth}                                                 & \multicolumn{4}{c}{\cellcolor[HTML]{EFEFEF}\textbf{Specialized Knowledge}}                                                                                                                                                                                         & \cellcolor[HTML]{EFEFEF}                               \\
\rowcolor[HTML]{EFEFEF} 
\multirow{-2}{*}{\cellcolor[HTML]{EFEFEF}\textbf{Method}}                                  & \multirow{-2}{*}{\cellcolor[HTML]{EFEFEF}\textbf{\begin{tabular}[c]{@{}c@{}}KV \\ Cache\end{tabular}}} & \multirow{-2}{*}{\cellcolor[HTML]{EFEFEF}\textbf{\begin{tabular}[c]{@{}c@{}}QK\\ Dim.\end{tabular}}} & \multirow{-2}{*}{\cellcolor[HTML]{EFEFEF}\textbf{\begin{tabular}[c]{@{}c@{}}V\\ Dim.\end{tabular}}} & \multirow{-2}{*}{\cellcolor[HTML]{EFEFEF}\textbf{\begin{tabular}[c]{@{}c@{}}Wiki\\ (ppl)\end{tabular}}} & \textbf{lambada}              & \textbf{hella}                & \textbf{piqa}                 & \textbf{wino}                 & \textbf{arc\_e}               & \textbf{sciq}                 & \textbf{obqa}                 & \textbf{arc\_c}               & \textbf{boolq}                   & \textbf{race}                   & \textbf{\begin{tabular}[c]{@{}c@{}}truth\\ fulqa\end{tabular}} & \textbf{\begin{tabular}[c]{@{}c@{}}college\\ \_CS\end{tabular}} & \textbf{\begin{tabular}[c]{@{}c@{}}college\_\\ Phy\end{tabular}} & \textbf{\begin{tabular}[c]{@{}c@{}}prof\\ \_law\end{tabular}} & \textbf{\begin{tabular}[c]{@{}c@{}}HS\_\\ psych\end{tabular}} & \multirow{-2}{*}{\cellcolor[HTML]{EFEFEF}\textbf{Avg}} \\ \hline
\multicolumn{21}{c}{\textbf{Qwen2.5 7B}}                                                                                                                                                                                                                                                                                                                                                                                                                                                                                                                                                                                                                                                                                                                                                                                                                                                                                                                                                                                                                                                                                                                                                                                                                                      \\ \hline
\textbf{Base}                                                                              & \cellcolor[HTML]{EFEFEF}100\%                                                                                         & \cellcolor[HTML]{EFEFEF}128                                                                      & \cellcolor[HTML]{EFEFEF}128                                                                     & \cellcolor[HTML]{EFEFEF}10.976                                                                          & \cellcolor[HTML]{EFEFEF}0.706 & \cellcolor[HTML]{EFEFEF}0.795 & \cellcolor[HTML]{EFEFEF}0.784 & \cellcolor[HTML]{EFEFEF}0.707 & \cellcolor[HTML]{EFEFEF}0.759 & \cellcolor[HTML]{EFEFEF}0.960 & \cellcolor[HTML]{EFEFEF}0.460 & \cellcolor[HTML]{EFEFEF}0.495 & \cellcolor[HTML]{EFEFEF}0.845    & \cellcolor[HTML]{EFEFEF}0.418   & \cellcolor[HTML]{EFEFEF}0.524                                  & \cellcolor[HTML]{EFEFEF}0.700                                   & \cellcolor[HTML]{EFEFEF}0.480                                    & \cellcolor[HTML]{EFEFEF}0.529                                 & \cellcolor[HTML]{EFEFEF}0.897                                 & \cellcolor[HTML]{EFEFEF}0.671                          \\ \hline
                                                                                           & 87.50\%                                                                                                               & 112                                                                                              & 112                                                                                             & 11.438                                                                                                  & 0.696                         & 0.791                         & 0.771                         & 0.703                         & 0.772                         & 0.954                         & 0.456                         & 0.504                         & 0.839                            & 0.421                           & 0.457                                                          & 0.710                                                           & 0.500                                                            & 0.511                                                         & 0.881                                                         & 0.664                                                  \\
                                                                                           & \cellcolor[HTML]{EFEFEF}87.50\%                                                                                       & \cellcolor[HTML]{EFEFEF}96                                                                       & \cellcolor[HTML]{EFEFEF}128                                                                     & \cellcolor[HTML]{EFEFEF}11.080                                                                          & \cellcolor[HTML]{EFEFEF}0.693 & \cellcolor[HTML]{EFEFEF}0.797 & \cellcolor[HTML]{EFEFEF}0.780 & \cellcolor[HTML]{EFEFEF}0.711 & \cellcolor[HTML]{EFEFEF}0.782 & \cellcolor[HTML]{EFEFEF}0.956 & \cellcolor[HTML]{EFEFEF}0.462 & \cellcolor[HTML]{EFEFEF}0.514 & \cellcolor[HTML]{EFEFEF}0.844    & \cellcolor[HTML]{EFEFEF}0.414   & \cellcolor[HTML]{EFEFEF}0.516                                  & \cellcolor[HTML]{EFEFEF}0.700                                   & \cellcolor[HTML]{EFEFEF}0.490                                    & \cellcolor[HTML]{EFEFEF}0.512                                 & \cellcolor[HTML]{EFEFEF}0.883                                 & \cellcolor[HTML]{EFEFEF}0.670                          \\
                                                                                           & 75\%                                                                                                                  & 96                                                                                               & 96                                                                                              & 12.290                                                                                                  & 0.687                         & 0.782                         & 0.756                         & 0.703                         & 0.777                         & 0.953                         & 0.460                         & 0.508                         & 0.838                            & 0.414                           & 0.411                                                          & 0.660                                                           & 0.451                                                            & 0.486                                                         & 0.868                                                         & 0.650                                                  \\
                                                                                           & \cellcolor[HTML]{EFEFEF}75\%                                                                                          & \cellcolor[HTML]{EFEFEF}64                                                                       & \cellcolor[HTML]{EFEFEF}128                                                                     & \cellcolor[HTML]{EFEFEF}11.325                                                                          & \cellcolor[HTML]{EFEFEF}0.695 & \cellcolor[HTML]{EFEFEF}0.793 & \cellcolor[HTML]{EFEFEF}0.774 & \cellcolor[HTML]{EFEFEF}0.706 & \cellcolor[HTML]{EFEFEF}0.790 & \cellcolor[HTML]{EFEFEF}0.964 & \cellcolor[HTML]{EFEFEF}0.470 & \cellcolor[HTML]{EFEFEF}0.525 & \cellcolor[HTML]{EFEFEF}0.853    & \cellcolor[HTML]{EFEFEF}0.410   & \cellcolor[HTML]{EFEFEF}0.532                                  & \cellcolor[HTML]{EFEFEF}0.660                                   & \cellcolor[HTML]{EFEFEF}0.471                                    & \cellcolor[HTML]{EFEFEF}0.521                                 & \cellcolor[HTML]{EFEFEF}0.862                                 & \cellcolor[HTML]{EFEFEF}\textbf{0.668}                          \\
\multirow{-5}{*}{\textbf{PALU}}                                                            & 50\%                                                                                                                  & 64                                                                                               & 64                                                                                              & 17.240                                                                                                  & 0.639                         & 0.746                         & 0.662                         & 0.683                         & 0.703                         & 0.941                         & 0.392                         & 0.452                         & 0.789                            & 0.393                           & 0.430                                                          & 0.600                                                           & 0.402                                                            & 0.408                                                         & 0.828                                                         & \textbf{0.605}                                                  \\ \hline
                                                                                           & \cellcolor[HTML]{EFEFEF}87.50\%                                                                                       & \cellcolor[HTML]{EFEFEF}112                                                                      & \cellcolor[HTML]{EFEFEF}112                                                                     & \cellcolor[HTML]{EFEFEF}10.178                                                                          & \cellcolor[HTML]{EFEFEF}0.702 & \cellcolor[HTML]{EFEFEF}0.797 & \cellcolor[HTML]{EFEFEF}0.783 & \cellcolor[HTML]{EFEFEF}0.723 & \cellcolor[HTML]{EFEFEF}0.790 & \cellcolor[HTML]{EFEFEF}0.961 & \cellcolor[HTML]{EFEFEF}0.458 & \cellcolor[HTML]{EFEFEF}0.538 & \cellcolor[HTML]{EFEFEF}0.838    & \cellcolor[HTML]{EFEFEF}0.405   & \cellcolor[HTML]{EFEFEF}0.500                                  & \cellcolor[HTML]{EFEFEF}0.630                                   & \cellcolor[HTML]{EFEFEF}0.471                                    & \cellcolor[HTML]{EFEFEF}0.507                                 & \cellcolor[HTML]{EFEFEF}0.899                                 & \cellcolor[HTML]{EFEFEF}0.667                          \\
                                                                                           & 87.50\%                                                                                                               & 96                                                                                               & 128                                                                                             & \textbf{9.602}                                                                                                   & 0.713                         & 0.803                         & 0.792                         & 0.729                         & 0.757                         & 0.951                         & 0.460                         & 0.506                         & 0.843                            & 0.415                           & 0.561                                                          & 0.690                                                           & 0.490                                                            & 0.533                                                         & 0.892                                                         & \textbf{0.676}                                                  \\
                                                                                           & \cellcolor[HTML]{EFEFEF}75\%                                                                                          & \cellcolor[HTML]{EFEFEF}96                                                                       & \cellcolor[HTML]{EFEFEF}96                                                                      & \cellcolor[HTML]{EFEFEF}11.043                                                                          & \cellcolor[HTML]{EFEFEF}0.666 & \cellcolor[HTML]{EFEFEF}0.793 & \cellcolor[HTML]{EFEFEF}0.768 & \cellcolor[HTML]{EFEFEF}0.713 & \cellcolor[HTML]{EFEFEF}0.757 & \cellcolor[HTML]{EFEFEF}0.953 & \cellcolor[HTML]{EFEFEF}0.454 & \cellcolor[HTML]{EFEFEF}0.497 & \cellcolor[HTML]{EFEFEF}0.811    & \cellcolor[HTML]{EFEFEF}0.398   & \cellcolor[HTML]{EFEFEF}0.500                                  & \cellcolor[HTML]{EFEFEF}0.590                                   & \cellcolor[HTML]{EFEFEF}0.461                                    & \cellcolor[HTML]{EFEFEF}0.480                                 & \cellcolor[HTML]{EFEFEF}0.862                                 & \cellcolor[HTML]{EFEFEF}0.647                          \\
                                                                                           & 75\%                                                                                                                  & 64                                                                                               & 128                                                                                             & \textbf{10.080}                                                                                                  & 0.715                         & 0.804                         & 0.783                         & 0.702                         & 0.722                         & 0.949                         & 0.462                         & 0.495                         & 0.829                            & 0.398                           & 0.545                                                          & 0.650                                                           & 0.431                                                            & 0.497                                                         & 0.884                                                         & 0.658                                                  \\
\multirow{-5}{*}{\textbf{\begin{tabular}[c]{@{}c@{}}Eigen\\ Attention\end{tabular}}}       & \cellcolor[HTML]{EFEFEF}50\%                                                                                          & \cellcolor[HTML]{EFEFEF}64                                                                       & \cellcolor[HTML]{EFEFEF}64                                                                      & \cellcolor[HTML]{EFEFEF}21.050                                                                          & \cellcolor[HTML]{EFEFEF}0.376 & \cellcolor[HTML]{EFEFEF}0.747 & \cellcolor[HTML]{EFEFEF}0.647 & \cellcolor[HTML]{EFEFEF}0.619 & \cellcolor[HTML]{EFEFEF}0.617 & \cellcolor[HTML]{EFEFEF}0.897 & \cellcolor[HTML]{EFEFEF}0.410 & \cellcolor[HTML]{EFEFEF}0.398 & \cellcolor[HTML]{EFEFEF}0.636    & \cellcolor[HTML]{EFEFEF}0.369   & \cellcolor[HTML]{EFEFEF}0.429                                  & \cellcolor[HTML]{EFEFEF}0.330                                   & \cellcolor[HTML]{EFEFEF}0.275                                    & \cellcolor[HTML]{EFEFEF}0.374                                 & \cellcolor[HTML]{EFEFEF}0.659                                 & \cellcolor[HTML]{EFEFEF}0.519                          \\ \hline
                                                                                           & 87.50\%                                                                                                               & 96                                                                                               & 128                                                                                             & 30.547                                                                                                  & 0.294                         & 0.733                         & 0.634                         & 0.547                         & 0.669                         & 0.847                         & 0.330                         & 0.416                         & 0.650                            & 0.321                           & 0.390                                                          & 0.210                                                           & 0.314                                                            & 0.275                                                         & 0.428                                                         & 0.470                                                  \\
\multirow{-2}{*}{\textbf{\begin{tabular}[c]{@{}c@{}}Thin Keys\\ Full Values\end{tabular}}} & \cellcolor[HTML]{EFEFEF}75\%                                                                                          & \cellcolor[HTML]{EFEFEF}64                                                                       & \cellcolor[HTML]{EFEFEF}128                                                                     & \cellcolor[HTML]{EFEFEF}50.975                                                                          & \cellcolor[HTML]{EFEFEF}0.284 & \cellcolor[HTML]{EFEFEF}0.737 & \cellcolor[HTML]{EFEFEF}0.611 & \cellcolor[HTML]{EFEFEF}0.544 & \cellcolor[HTML]{EFEFEF}0.687 & \cellcolor[HTML]{EFEFEF}0.834 & \cellcolor[HTML]{EFEFEF}0.376 & \cellcolor[HTML]{EFEFEF}0.436 & \cellcolor[HTML]{EFEFEF}0.611    & \cellcolor[HTML]{EFEFEF}0.298   & \cellcolor[HTML]{EFEFEF}0.419                                  & \cellcolor[HTML]{EFEFEF}0.250                                   & \cellcolor[HTML]{EFEFEF}0.314                                    & \cellcolor[HTML]{EFEFEF}0.250                                 & \cellcolor[HTML]{EFEFEF}0.363                                 & \cellcolor[HTML]{EFEFEF}0.467                          \\ \hline
                                                                                           & 87.50\%                                                                                                               & 112                                                                                              & 112                                                                                             & 10.743                                                                                                  & 0.691                         & 0.794                         & 0.782                         & 0.708                         & 0.739                         & 0.961                         & 0.450                         & 0.466                         & 0.843                            & 0.393                           & 0.517                                                          & 0.660                                                           & 0.441                                                            & 0.500                                                         & 0.884                                                         & 0.655                                                  \\
                                                                                           & \cellcolor[HTML]{EFEFEF}87.50\%                                                                                       & \cellcolor[HTML]{EFEFEF}96                                                                       & \cellcolor[HTML]{EFEFEF}128                                                                     & \cellcolor[HTML]{EFEFEF}10.594                                                                          & \cellcolor[HTML]{EFEFEF}0.705 & \cellcolor[HTML]{EFEFEF}0.794 & \cellcolor[HTML]{EFEFEF}0.784 & \cellcolor[HTML]{EFEFEF}0.707 & \cellcolor[HTML]{EFEFEF}0.753 & \cellcolor[HTML]{EFEFEF}0.963 & \cellcolor[HTML]{EFEFEF}0.432 & \cellcolor[HTML]{EFEFEF}0.482 & \cellcolor[HTML]{EFEFEF}0.854    & \cellcolor[HTML]{EFEFEF}0.407   & \cellcolor[HTML]{EFEFEF}0.528                                  & \cellcolor[HTML]{EFEFEF}0.680                                   & \cellcolor[HTML]{EFEFEF}0.402                                    & \cellcolor[HTML]{EFEFEF}0.507                                 & \cellcolor[HTML]{EFEFEF}0.888                                 & \cellcolor[HTML]{EFEFEF}0.659                          \\
                                                                                           & 75\%                                                                                                                  & 96                                                                                               & 96                                                                                              & 12.023                                                                                                  & 0.648                         & 0.788                         & 0.769                         & 0.680                         & 0.701                         & 0.963                         & 0.438                         & 0.455                         & 0.806                            & 0.391                           & 0.515                                                          & 0.600                                                           & 0.412                                                            & 0.450                                                         & 0.809                                                         & 0.628                                                  \\
                                                                                           & \cellcolor[HTML]{EFEFEF}75\%                                                                                          & \cellcolor[HTML]{EFEFEF}64                                                                       & \cellcolor[HTML]{EFEFEF}128                                                                     & \cellcolor[HTML]{EFEFEF}12.011                                                                          & \cellcolor[HTML]{EFEFEF}0.630 & \cellcolor[HTML]{EFEFEF}0.789 & \cellcolor[HTML]{EFEFEF}0.769 & \cellcolor[HTML]{EFEFEF}0.611 & \cellcolor[HTML]{EFEFEF}0.762 & \cellcolor[HTML]{EFEFEF}0.963 & \cellcolor[HTML]{EFEFEF}0.442 & \cellcolor[HTML]{EFEFEF}0.493 & \cellcolor[HTML]{EFEFEF}0.774    & \cellcolor[HTML]{EFEFEF}0.389   & \cellcolor[HTML]{EFEFEF}0.498                                  & \cellcolor[HTML]{EFEFEF}0.680                                   & \cellcolor[HTML]{EFEFEF}0.401                                    & \cellcolor[HTML]{EFEFEF}0.402                                 & \cellcolor[HTML]{EFEFEF}0.741                                 & \cellcolor[HTML]{EFEFEF}0.623                          \\
\multirow{-5}{*}{\textbf{RAP}}                                                             & 50\%                                                                                                                  & 64                                                                                               & 64                                                                                              & 14.691                                                                                                  & 0.485                         & 0.747                         & 0.707                         & 0.605                         & 0.758                         & 0.948                         & 0.396                         & 0.444                         & 0.758                            & 0.341                           & 0.403                                                          & 0.430                                                           & 0.303                                                            & 0.305                                                         & 0.606                                                         & 0.549                                                  \\ \hline
                                                                                           & \cellcolor[HTML]{EFEFEF}87.50\%                                                                                       & \cellcolor[HTML]{EFEFEF}112                                                                      & \cellcolor[HTML]{EFEFEF}112                                                                     & \cellcolor[HTML]{EFEFEF}10.254                                                                          & \cellcolor[HTML]{EFEFEF}0.704 & \cellcolor[HTML]{EFEFEF}0.794 & \cellcolor[HTML]{EFEFEF}0.777 & \cellcolor[HTML]{EFEFEF}0.724 & \cellcolor[HTML]{EFEFEF}0.817 & \cellcolor[HTML]{EFEFEF}0.967 & \cellcolor[HTML]{EFEFEF}0.462 & \cellcolor[HTML]{EFEFEF}0.538 & \cellcolor[HTML]{EFEFEF}0.849    & \cellcolor[HTML]{EFEFEF}0.411   & \cellcolor[HTML]{EFEFEF}0.507                                  & \cellcolor[HTML]{EFEFEF}0.670                                   & \cellcolor[HTML]{EFEFEF}0.510                                    & \cellcolor[HTML]{EFEFEF}0.518                                 & \cellcolor[HTML]{EFEFEF}0.886                                 & \cellcolor[HTML]{EFEFEF}\textbf{0.676}                          \\
                                                                                           & 87.50\%                                                                                                               & 96                                                                                               & 128                                                                                             & 10.259                                                                                                  & 0.715                         & 0.799                         & 0.781                         & 0.707                         & 0.794                         & 0.961                         & 0.470                         & 0.517                         & 0.846                            & 0.409                           & 0.524                                                          & 0.680                                                           & 0.490                                                            & 0.529                                                         & 0.884                                                         & 0.674                                                  \\
                                                                                           & \cellcolor[HTML]{EFEFEF}75\%                                                                                          & \cellcolor[HTML]{EFEFEF}96                                                                       & \cellcolor[HTML]{EFEFEF}96                                                                      & \cellcolor[HTML]{EFEFEF}10.968                                                                          & \cellcolor[HTML]{EFEFEF}0.668 & \cellcolor[HTML]{EFEFEF}0.794 & \cellcolor[HTML]{EFEFEF}0.765 & \cellcolor[HTML]{EFEFEF}0.722 & \cellcolor[HTML]{EFEFEF}0.838 & \cellcolor[HTML]{EFEFEF}0.963 & \cellcolor[HTML]{EFEFEF}0.456 & \cellcolor[HTML]{EFEFEF}0.562 & \cellcolor[HTML]{EFEFEF}0.825    & \cellcolor[HTML]{EFEFEF}0.392   & \cellcolor[HTML]{EFEFEF}0.480                                  & \cellcolor[HTML]{EFEFEF}0.630                                   & \cellcolor[HTML]{EFEFEF}0.451                                    & \cellcolor[HTML]{EFEFEF}0.481                                 & \cellcolor[HTML]{EFEFEF}0.872                                 & \cellcolor[HTML]{EFEFEF}0.660                          \\
                                                                                           & 75\%                                                                                                                  & 64                                                                                               & 128                                                                                             & 10.916                                                                                                  & 0.690                         & 0.795                         & 0.764                         & 0.687                         & 0.789                         & 0.957                         & 0.454                         & 0.510                         & 0.832                            & 0.394                           & 0.521                                                          & 0.640                                                           & 0.520                                                            & 0.495                                                         & 0.858                                                         & 0.661                                                  \\
\multirow{-5}{*}{\textbf{RACE (Ours)}}                                                            & \cellcolor[HTML]{EFEFEF}50\%                                                                                          & \cellcolor[HTML]{EFEFEF}64                                                                       & \cellcolor[HTML]{EFEFEF}64                                                                      & \cellcolor[HTML]{EFEFEF}\textbf{13.538}                                                                          & \cellcolor[HTML]{EFEFEF}0.562 & \cellcolor[HTML]{EFEFEF}0.785 & \cellcolor[HTML]{EFEFEF}0.713 & \cellcolor[HTML]{EFEFEF}0.656 & \cellcolor[HTML]{EFEFEF}0.748 & \cellcolor[HTML]{EFEFEF}0.944 & \cellcolor[HTML]{EFEFEF}0.438 & \cellcolor[HTML]{EFEFEF}0.475 & \cellcolor[HTML]{EFEFEF}0.769    & \cellcolor[HTML]{EFEFEF}0.385   & \cellcolor[HTML]{EFEFEF}0.464                                  & \cellcolor[HTML]{EFEFEF}0.470                                   & \cellcolor[HTML]{EFEFEF}0.403                                    & \cellcolor[HTML]{EFEFEF}0.419                                 & \cellcolor[HTML]{EFEFEF}0.770                                 & \cellcolor[HTML]{EFEFEF}0.600                          \\ \hline
\multicolumn{21}{c}{\textbf{Granite 4.100 3B}}                                                                                                                                                                                                                                                                                                                                                                                                                                                                                                                                                                                                                                                                                                                                                                                                                                                                                                                                                                                                                                                                                                                                                                                                                                \\ \hline
\textbf{Base}                                                                              & \cellcolor[HTML]{EFEFEF}100\%                                                                                         & \cellcolor[HTML]{EFEFEF}64                                                                       & \cellcolor[HTML]{EFEFEF}64                                                                      & \cellcolor[HTML]{EFEFEF}11.578                                                                          & \cellcolor[HTML]{EFEFEF}0.662 & \cellcolor[HTML]{EFEFEF}0.794 & \cellcolor[HTML]{EFEFEF}0.746 & \cellcolor[HTML]{EFEFEF}0.695 & \cellcolor[HTML]{EFEFEF}0.790 & \cellcolor[HTML]{EFEFEF}0.960 & \cellcolor[HTML]{EFEFEF}0.434 & \cellcolor[HTML]{EFEFEF}0.504 & \cellcolor[HTML]{EFEFEF}0.751    & \cellcolor[HTML]{EFEFEF}0.366   & \cellcolor[HTML]{EFEFEF}0.476                                  & \cellcolor[HTML]{EFEFEF}0.570                                   & \cellcolor[HTML]{EFEFEF}0.461                                    & \cellcolor[HTML]{EFEFEF}0.414                                 & \cellcolor[HTML]{EFEFEF}0.813                                 & \cellcolor[HTML]{EFEFEF}0.629                          \\ \hline
                                                                                           & 87.50\%                                                                                                               & 48                                                                                               & 64                                                                                              & \textbf{11.775}                                                                                                  & 0.671                         & 0.796                         & 0.736                         & 0.693                         & 0.780                         & 0.958                         & 0.426                         & 0.502                         & 0.736                            & 0.380                           & 0.466                                                          & 0.530                                                           & 0.490                                                            & 0.419                                                         & 0.806                                                         & \textbf{0.626}                                                  \\
                                                                                           & \cellcolor[HTML]{EFEFEF}75\%                                                                                          & \cellcolor[HTML]{EFEFEF}48                                                                       & \cellcolor[HTML]{EFEFEF}48                                                                      & \cellcolor[HTML]{EFEFEF}13.615                                                                          & \cellcolor[HTML]{EFEFEF}0.626 & \cellcolor[HTML]{EFEFEF}0.787 & \cellcolor[HTML]{EFEFEF}0.738 & \cellcolor[HTML]{EFEFEF}0.679 & \cellcolor[HTML]{EFEFEF}0.736 & \cellcolor[HTML]{EFEFEF}0.949 & \cellcolor[HTML]{EFEFEF}0.430 & \cellcolor[HTML]{EFEFEF}0.500 & \cellcolor[HTML]{EFEFEF}0.626    & \cellcolor[HTML]{EFEFEF}0.385   & \cellcolor[HTML]{EFEFEF}0.471                                  & \cellcolor[HTML]{EFEFEF}0.520                                   & \cellcolor[HTML]{EFEFEF}0.412                                    & \cellcolor[HTML]{EFEFEF}0.380                                 & \cellcolor[HTML]{EFEFEF}0.772                                 & \cellcolor[HTML]{EFEFEF}0.601                          \\
                                                                                           & 75\%                                                                                                                  & 32                                                                                               & 64                                                                                              & \textbf{11.918}                                                                                                  & 0.675                         & 0.799                         & 0.731                         & 0.693                         & 0.776                         & 0.957                         & 0.426                         & 0.500                         & 0.748                            & 0.376                           & 0.468                                                          & 0.540                                                           & 0.490                                                            & 0.422                                                         & 0.818                                                         & \textbf{0.628}                                                  \\
\multirow{-4}{*}{\textbf{PALU}}                                                            & \cellcolor[HTML]{EFEFEF}50\%                                                                                          & \cellcolor[HTML]{EFEFEF}32                                                                       & \cellcolor[HTML]{EFEFEF}32                                                                      & \cellcolor[HTML]{EFEFEF}17.483                                                                          & \cellcolor[HTML]{EFEFEF}0.539 & \cellcolor[HTML]{EFEFEF}0.760 & \cellcolor[HTML]{EFEFEF}0.646 & \cellcolor[HTML]{EFEFEF}0.628 & \cellcolor[HTML]{EFEFEF}0.652 & \cellcolor[HTML]{EFEFEF}0.914 & \cellcolor[HTML]{EFEFEF}0.412 & \cellcolor[HTML]{EFEFEF}0.445 & \cellcolor[HTML]{EFEFEF}0.708    & \cellcolor[HTML]{EFEFEF}0.362   & \cellcolor[HTML]{EFEFEF}0.456                                  & \cellcolor[HTML]{EFEFEF}0.440                                   & \cellcolor[HTML]{EFEFEF}0.333                                    & \cellcolor[HTML]{EFEFEF}0.332                                 & \cellcolor[HTML]{EFEFEF}0.642                                 & \cellcolor[HTML]{EFEFEF}\textbf{0.551}                          \\ \hline
                                                                                           & \cellcolor[HTML]{EFEFEF}87.50\%                                                                                       & \cellcolor[HTML]{EFEFEF}48                                                                       & \cellcolor[HTML]{EFEFEF}64                                                                      & \cellcolor[HTML]{EFEFEF}\textbf{11.775}                                                                       & \cellcolor[HTML]{EFEFEF}0.671 & \cellcolor[HTML]{EFEFEF}0.796 & \cellcolor[HTML]{EFEFEF}0.736 & \cellcolor[HTML]{EFEFEF}0.693 & \cellcolor[HTML]{EFEFEF}0.779 & \cellcolor[HTML]{EFEFEF}0.958 & \cellcolor[HTML]{EFEFEF}0.426 & \cellcolor[HTML]{EFEFEF}0.502 & \cellcolor[HTML]{EFEFEF}0.736 & \cellcolor[HTML]{EFEFEF}0.379 
                                                                  & \cellcolor[HTML]{EFEFEF}0.466                          &\cellcolor[HTML]{EFEFEF}0.53                               & \cellcolor[HTML]{EFEFEF}0.49                                  & \cellcolor[HTML]{EFEFEF}0.419                                   & \cellcolor[HTML]{EFEFEF}0.806                                                               & \cellcolor[HTML]{EFEFEF}\textbf{0.626}                          \\
                                                                                           & 75\%                                                                                                                  & 48                                                                                               & 48                                                                                              & 13.481                                                                                                  & 0.577                         & 0.785                         & 0.723                         & 0.657                        & 0.757                         & 0.952                         & 0.416                        & 0.513                         & 0.729                           & 0.376                           & 0.440                                                          & 0.56                                                           & 0.421                                                            & 0.360                                                         & 0.761                                                         & 0.601                                                 \\
                                                                                           & \cellcolor[HTML]{EFEFEF}75\%                                                                                          & \cellcolor[HTML]{EFEFEF}32                                                                       & \cellcolor[HTML]{EFEFEF}64                                                                      & \cellcolor[HTML]{EFEFEF}29.25                                                                          & \cellcolor[HTML]{EFEFEF}0.162 & \cellcolor[HTML]{EFEFEF}0.724 & \cellcolor[HTML]{EFEFEF}0.545 & \cellcolor[HTML]{EFEFEF}0.539 & \cellcolor[HTML]{EFEFEF}0.566 & \cellcolor[HTML]{EFEFEF}0.765 & \cellcolor[HTML]{EFEFEF}0.37 & \cellcolor[HTML]{EFEFEF}0.354 & \cellcolor[HTML]{EFEFEF}0.388    & \cellcolor[HTML]{EFEFEF}0.301   & \cellcolor[HTML]{EFEFEF}0.429                                 & \cellcolor[HTML]{EFEFEF}0.35                                  & \cellcolor[HTML]{EFEFEF}0.186                                    & \cellcolor[HTML]{EFEFEF}0.255                                 & \cellcolor[HTML]{EFEFEF}0.196                                & \cellcolor[HTML]{EFEFEF}0.408                          \\
\multirow{-4}{*}{\textbf{\begin{tabular}[c]{@{}c@{}}Eigen\\ Attention\end{tabular}}}       & 50\%                                                                                                                  & 32                                                                                               & 32                                                                                              & 23.361                                                                                                  & 0.313                         & 0.744                      & 0.587                         & 0.588                         & 0.641                         & 0.874                       & 0.388                         & 0.387                         & 0.646                            & 0.343                          & 0.411                                                          & 0.27                                                           & 0.157                                                           & 0.265                                                         & 0.365                                                        & 0.465                                                 \\ \hline
                                                                                           & \cellcolor[HTML]{EFEFEF}87.50\%                                                                                       & \cellcolor[HTML]{EFEFEF}48                                                                       & \cellcolor[HTML]{EFEFEF}64                                                                      & \cellcolor[HTML]{EFEFEF}26.024                                                                          & \cellcolor[HTML]{EFEFEF}0.221 & \cellcolor[HTML]{EFEFEF}0.719 & \cellcolor[HTML]{EFEFEF}0.586 & \cellcolor[HTML]{EFEFEF}0.517 & \cellcolor[HTML]{EFEFEF}0.643 & \cellcolor[HTML]{EFEFEF}0.813 & \cellcolor[HTML]{EFEFEF}0.356 & \cellcolor[HTML]{EFEFEF}0.357 & \cellcolor[HTML]{EFEFEF}0.583    & \cellcolor[HTML]{EFEFEF}0.294   & \cellcolor[HTML]{EFEFEF}0.395                                  & \cellcolor[HTML]{EFEFEF}0.230                                   & \cellcolor[HTML]{EFEFEF}0.235                                    & \cellcolor[HTML]{EFEFEF}0.257                                 & \cellcolor[HTML]{EFEFEF}0.215                                 & \cellcolor[HTML]{EFEFEF}0.428                          \\
\multirow{-2}{*}{\textbf{\begin{tabular}[c]{@{}c@{}}Thin Keys\\ Full Values\end{tabular}}} & 75\%                                                                                                                  & 32                                                                                               & 64                                                                                              & 40.573                                                                                                  & 0.152                         & 0.694                         & 0.544                         & 0.531                         & 0.560                         & 0.760                         & 0.340                         & 0.315                         & 0.603                            & 0.286                           & 0.432                                                          & 0.220                                                           & 0.206                                                            & 0.244                                                         & 0.194                                                         & 0.405                                                  \\ \hline
                                                                                           & \cellcolor[HTML]{EFEFEF}87.50\%                                                                                       & \cellcolor[HTML]{EFEFEF}48                                                                       & \cellcolor[HTML]{EFEFEF}64                                                                      & \cellcolor[HTML]{EFEFEF}11.928                                                                          & \cellcolor[HTML]{EFEFEF}0.642 & \cellcolor[HTML]{EFEFEF}0.791 & \cellcolor[HTML]{EFEFEF}0.732 & \cellcolor[HTML]{EFEFEF}0.678 & \cellcolor[HTML]{EFEFEF}0.784 & \cellcolor[HTML]{EFEFEF}0.942 & \cellcolor[HTML]{EFEFEF}0.430 & \cellcolor[HTML]{EFEFEF}0.478 & \cellcolor[HTML]{EFEFEF}0.777    & \cellcolor[HTML]{EFEFEF}0.382   & \cellcolor[HTML]{EFEFEF}0.469                                  & \cellcolor[HTML]{EFEFEF}0.530                                   & \cellcolor[HTML]{EFEFEF}0.490                                    & \cellcolor[HTML]{EFEFEF}0.396                                 & \cellcolor[HTML]{EFEFEF}0.782                                 & \cellcolor[HTML]{EFEFEF}0.620                          \\
                                                                                           & 75\%                                                                                                                  & 48                                                                                               & 48                                                                                              & 12.795                                                                                                  & 0.575                         & 0.782                         & 0.712                         & 0.662                         & 0.755                         & 0.948                         & 0.424                         & 0.453                         & 0.759                            & 0.373                           & 0.467                                                          & 0.340                                                           & 0.373                                                            & 0.357                                                         & 0.672                                                         & 0.577                                                  \\
                                                                                           & \cellcolor[HTML]{EFEFEF}75\%                                                                                          & \cellcolor[HTML]{EFEFEF}32                                                                       & \cellcolor[HTML]{EFEFEF}64                                                                      & \cellcolor[HTML]{EFEFEF}13.963                                                                          & \cellcolor[HTML]{EFEFEF}0.563 & \cellcolor[HTML]{EFEFEF}0.780 & \cellcolor[HTML]{EFEFEF}0.706 & \cellcolor[HTML]{EFEFEF}0.639 & \cellcolor[HTML]{EFEFEF}0.754 & \cellcolor[HTML]{EFEFEF}0.945 & \cellcolor[HTML]{EFEFEF}0.428 & \cellcolor[HTML]{EFEFEF}0.449 & \cellcolor[HTML]{EFEFEF}0.724    & \cellcolor[HTML]{EFEFEF}0.344   & \cellcolor[HTML]{EFEFEF}0.438                                  & \cellcolor[HTML]{EFEFEF}0.270                                   & \cellcolor[HTML]{EFEFEF}0.275                                    & \cellcolor[HTML]{EFEFEF}0.273                                 & \cellcolor[HTML]{EFEFEF}0.444                                 & \cellcolor[HTML]{EFEFEF}0.535                          \\
\multirow{-4}{*}{\textbf{RAP}}                                                             & 50\%                                                                                                                  & 32                                                                                               & 32                                                                                              & 17.766                                                                                                  & 0.359                         & 0.700                         & 0.641                         & 0.588                         & 0.684                         & 0.914                         & 0.350                         & 0.403                         & 0.605                            & 0.344                           & 0.405                                                          & 0.190                                                           & 0.206                                                            & 0.243                                                         & 0.262                                                         & 0.460                                                  \\ \hline
                                                                                           & \cellcolor[HTML]{EFEFEF}87.50\%                                                                                       & \cellcolor[HTML]{EFEFEF}48                                                                       & \cellcolor[HTML]{EFEFEF}64                                                                      & \cellcolor[HTML]{EFEFEF}12.092                                                                          & \cellcolor[HTML]{EFEFEF}0.633 & \cellcolor[HTML]{EFEFEF}0.789 & \cellcolor[HTML]{EFEFEF}0.733 & \cellcolor[HTML]{EFEFEF}0.672 & \cellcolor[HTML]{EFEFEF}0.783 & \cellcolor[HTML]{EFEFEF}0.961 & \cellcolor[HTML]{EFEFEF}0.436 & \cellcolor[HTML]{EFEFEF}0.491 & \cellcolor[HTML]{EFEFEF}0.752    & \cellcolor[HTML]{EFEFEF}0.348   & \cellcolor[HTML]{EFEFEF}0.456                                  & \cellcolor[HTML]{EFEFEF}0.510                                   & \cellcolor[HTML]{EFEFEF}0.491                                    & \cellcolor[HTML]{EFEFEF}0.438                                 & \cellcolor[HTML]{EFEFEF}0.819                                 & \cellcolor[HTML]{EFEFEF}0.621                          \\
                                                                                           & 75\%                                                                                                                  & 48                                                                                               & 48                                                                                              & 13.148                                                                                                  & 0.611                         & 0.787                         & 0.718                         & 0.670                         & 0.773                         & 0.957                         & 0.432                         & 0.482                         & 0.735                            & 0.355                           & 0.441                                                          & 0.500                                                           & 0.462                                                            & 0.406                                                         & 0.782                                                         & 0.607                                                  \\
                                                                                           & \cellcolor[HTML]{EFEFEF}75\%                                                                                          & \cellcolor[HTML]{EFEFEF}32                                                                       & \cellcolor[HTML]{EFEFEF}64                                                                      & \cellcolor[HTML]{EFEFEF}13.041                                                                          & \cellcolor[HTML]{EFEFEF}0.586 & \cellcolor[HTML]{EFEFEF}0.776 & \cellcolor[HTML]{EFEFEF}0.687 & \cellcolor[HTML]{EFEFEF}0.659 & \cellcolor[HTML]{EFEFEF}0.763 & \cellcolor[HTML]{EFEFEF}0.957 & \cellcolor[HTML]{EFEFEF}0.426 & \cellcolor[HTML]{EFEFEF}0.480 & \cellcolor[HTML]{EFEFEF}0.757    & \cellcolor[HTML]{EFEFEF}0.363   & \cellcolor[HTML]{EFEFEF}0.430                                  & \cellcolor[HTML]{EFEFEF}0.430                                   & \cellcolor[HTML]{EFEFEF}0.373                                    & \cellcolor[HTML]{EFEFEF}0.344                                 & \cellcolor[HTML]{EFEFEF}0.622                                 & \cellcolor[HTML]{EFEFEF}0.577                          \\
\multirow{-4}{*}{\textbf{RACE (Ours)}}                                                            & 50\%                                                                                                                  & 32                                                                                               & 32                                                                                              & \textbf{16.210}                                                                                                  & 0.516                         & 0.757                         & 0.647                         & 0.586                         & 0.686                         & 0.918                         & 0.418                         & 0.417                         & 0.683                            & 0.337                           & 0.462                                                          & 0.380                                                           & 0.295                                                            & 0.322                                                         & 0.624                                                         & 0.537                                                  \\ \hline
\multicolumn{21}{c}{\textbf{Llama 3.200 3B Instruct}}                                                                                                                                                                                                                                                                                                                                                                                                                                                                                                                                                                                                                                                                                                                                                                                                                                                                                                                                                                                                                                                                                                                                                                                                                         \\ \hline
\textbf{Base}                                                                              & 100\%                                                                                                                 & 128                                                                                              & 128                                                                                             & 13.397                                                                                                  & 0.659                         & 0.757                         & 0.706                         & 0.680                         & 0.681                         & 0.932                         & 0.360                         & 0.460                         & 0.784                            & 0.410                           & 0.496                                                          & 0.470                                                           & 0.353                                                            & 0.463                                                         & 0.769                                                         &                   0.599                            \\ \hline
                                                                                           & \cellcolor[HTML]{EFEFEF}87.50\%                                                                                       & \cellcolor[HTML]{EFEFEF}112                                                                      & \cellcolor[HTML]{EFEFEF}112                                                                     & \cellcolor[HTML]{EFEFEF}13.818                                                                          & \cellcolor[HTML]{EFEFEF}0.649 & \cellcolor[HTML]{EFEFEF}0.756 & \cellcolor[HTML]{EFEFEF}0.703 & \cellcolor[HTML]{EFEFEF}0.680 & \cellcolor[HTML]{EFEFEF}0.657 & \cellcolor[HTML]{EFEFEF}0.933 & \cellcolor[HTML]{EFEFEF}0.374 & \cellcolor[HTML]{EFEFEF}0.463 & \cellcolor[HTML]{EFEFEF}0.776    & \cellcolor[HTML]{EFEFEF}0.411   & \cellcolor[HTML]{EFEFEF}0.467                                  & \cellcolor[HTML]{EFEFEF}0.480                                   & \cellcolor[HTML]{EFEFEF}0.333                                    & \cellcolor[HTML]{EFEFEF}0.442                                 & \cellcolor[HTML]{EFEFEF}0.761                                 & \cellcolor[HTML]{EFEFEF}0.592                          \\
                                                                                           & 87.50\%                                                                                                               & 96                                                                                               & 128                                                                                             & 13.561                                                                                                  & 0.656                         & 0.754                         & 0.705                         & 0.680                         & 0.670                         & 0.935                         & 0.370                         & 0.445                         & 0.775                            & 0.409                           & 0.487                                                          & 0.470                                                           & 0.343                                                            & 0.450                                                         & 0.772                                                         & 0.595                                                  \\
                                                                                           & \cellcolor[HTML]{EFEFEF}75\%                                                                                          & \cellcolor[HTML]{EFEFEF}96                                                                       & \cellcolor[HTML]{EFEFEF}96                                                                      & \cellcolor[HTML]{EFEFEF}14.719                                                                          & \cellcolor[HTML]{EFEFEF}0.637 & \cellcolor[HTML]{EFEFEF}0.742 & \cellcolor[HTML]{EFEFEF}0.689 & \cellcolor[HTML]{EFEFEF}0.670 & \cellcolor[HTML]{EFEFEF}0.653 & \cellcolor[HTML]{EFEFEF}0.918 & \cellcolor[HTML]{EFEFEF}0.382 & \cellcolor[HTML]{EFEFEF}0.452 & \cellcolor[HTML]{EFEFEF}0.773    & \cellcolor[HTML]{EFEFEF}0.414   & \cellcolor[HTML]{EFEFEF}0.446                                  & \cellcolor[HTML]{EFEFEF}0.450                                   & \cellcolor[HTML]{EFEFEF}0.382                                    & \cellcolor[HTML]{EFEFEF}0.415                                 & \cellcolor[HTML]{EFEFEF}0.754                                 & \cellcolor[HTML]{EFEFEF}0.585                          \\
                                                                                           & 75\%                                                                                                                  & 64                                                                                               & 128                                                                                             & \textbf{14.395}                                                                                                  & 0.638                         & 0.757                         & 0.688                         & 0.677                         & 0.653                         & 0.933                         & 0.380                         & 0.423                         & 0.757                            & 0.417                           & 0.478                                                          & 0.440                                                           & 0.333                                                            & 0.424                                                         & 0.749                                                         & 0.583                                                  \\
\multirow{-5}{*}{\textbf{PALU}}                                                            & \cellcolor[HTML]{EFEFEF}50\%                                                                                          & \cellcolor[HTML]{EFEFEF}64                                                                       & \cellcolor[HTML]{EFEFEF}64                                                                      & \cellcolor[HTML]{EFEFEF}22.435                                                                          & \cellcolor[HTML]{EFEFEF}0.543 & \cellcolor[HTML]{EFEFEF}0.695 & \cellcolor[HTML]{EFEFEF}0.594 & \cellcolor[HTML]{EFEFEF}0.635 & \cellcolor[HTML]{EFEFEF}0.591 & \cellcolor[HTML]{EFEFEF}0.904 & \cellcolor[HTML]{EFEFEF}0.344 & \cellcolor[HTML]{EFEFEF}0.371 & \cellcolor[HTML]{EFEFEF}0.739    & \cellcolor[HTML]{EFEFEF}0.394   & \cellcolor[HTML]{EFEFEF}0.440                                  & \cellcolor[HTML]{EFEFEF}0.400                                   & \cellcolor[HTML]{EFEFEF}0.235                                    & \cellcolor[HTML]{EFEFEF}0.331                                 & \cellcolor[HTML]{EFEFEF}0.613                                 & \cellcolor[HTML]{EFEFEF}\textbf{0.522}                          \\ \hline
                                                                                           & 87.50\%                                                                                                               & 112                                                                                              & 112                                                                                             & 14.618                                                                                                  & 0.633                         & 0.745                         & 0.701                         & 0.676                         & 0.702                         & 0.924                         & 0.362                         & 0.464                         & 0.769                            & 0.411                           & 0.470                                                          & 0.430                                                           & 0.314                                                            & 0.433                                                         & 0.767                                                         & 0.587                                                  \\
                                                                                           & \cellcolor[HTML]{EFEFEF}87.50\%                                                                                       & \cellcolor[HTML]{EFEFEF}96                                                                       & \cellcolor[HTML]{EFEFEF}128                                                                     & \cellcolor[HTML]{EFEFEF}\textbf{13.557}                                                                          & \cellcolor[HTML]{EFEFEF}0.655 & \cellcolor[HTML]{EFEFEF}0.757 & \cellcolor[HTML]{EFEFEF}0.706 & \cellcolor[HTML]{EFEFEF}0.684 & \cellcolor[HTML]{EFEFEF}0.684 & \cellcolor[HTML]{EFEFEF}0.932 & \cellcolor[HTML]{EFEFEF}0.360 & \cellcolor[HTML]{EFEFEF}0.469 & \cellcolor[HTML]{EFEFEF}0.792    & \cellcolor[HTML]{EFEFEF}0.413   & \cellcolor[HTML]{EFEFEF}0.502                                  & \cellcolor[HTML]{EFEFEF}0.490                                   & \cellcolor[HTML]{EFEFEF}0.333                                    & \cellcolor[HTML]{EFEFEF}0.444                                 & \cellcolor[HTML]{EFEFEF}0.760                                 & \cellcolor[HTML]{EFEFEF}\textbf{0.599}                              \\
                                                                                           & 75\%                                                                                                                  & 96                                                                                               & 96                                                                                              & 16.347                                                                                                  & 0.590                         & 0.743                         & 0.672                         & 0.667                         & 0.701                         & 0.927                         & 0.366                         & 0.446                         & 0.764                            & 0.409                           & 0.447                                                          & 0.440                                                           & 0.324                                                            & 0.409                                                         & 0.736                                                         & 0.576                                                  \\
                                                                                           & \cellcolor[HTML]{EFEFEF}75\%                                                                                          & \cellcolor[HTML]{EFEFEF}64                                                                       & \cellcolor[HTML]{EFEFEF}128                                                                     & \cellcolor[HTML]{EFEFEF}14.552                                                                          & \cellcolor[HTML]{EFEFEF}0.660 & \cellcolor[HTML]{EFEFEF}0.758 & \cellcolor[HTML]{EFEFEF}0.691 & \cellcolor[HTML]{EFEFEF}0.669 & \cellcolor[HTML]{EFEFEF}0.689 & \cellcolor[HTML]{EFEFEF}0.939 & \cellcolor[HTML]{EFEFEF}0.364 & \cellcolor[HTML]{EFEFEF}0.457 & \cellcolor[HTML]{EFEFEF}0.757    & \cellcolor[HTML]{EFEFEF}0.420   & \cellcolor[HTML]{EFEFEF}0.484                                  & \cellcolor[HTML]{EFEFEF}0.400                                   & \cellcolor[HTML]{EFEFEF}0.382                                    & \cellcolor[HTML]{EFEFEF}0.413                                 & \cellcolor[HTML]{EFEFEF}0.756                                 & \cellcolor[HTML]{EFEFEF}\textbf{0.589}                          \\
\multirow{-5}{*}{\textbf{\begin{tabular}[c]{@{}c@{}}Eigen\\ Attention\end{tabular}}}       & 50\%                                                                                                                  & 64                                                                                               & 64                                                                                              & 30.778                                                                                                  & 0.296                         & 0.702                         & 0.547                         & 0.594                         & 0.594                         & 0.888                         & 0.332                         & 0.350                         & 0.645                            & 0.366                           & 0.404                                                          & 0.340                                                           & 0.294                                                            & 0.298                                                         & 0.494                                                         & 0.476                                                  \\ \hline
                                                                                           & \cellcolor[HTML]{EFEFEF}87.50\%                                                                                       & \cellcolor[HTML]{EFEFEF}96                                                                       & \cellcolor[HTML]{EFEFEF}128                                                                     & \cellcolor[HTML]{EFEFEF}56.350                                                                          & \cellcolor[HTML]{EFEFEF}0.084 & \cellcolor[HTML]{EFEFEF}0.587 & \cellcolor[HTML]{EFEFEF}0.399 & \cellcolor[HTML]{EFEFEF}0.514 & \cellcolor[HTML]{EFEFEF}0.381 & \cellcolor[HTML]{EFEFEF}0.586 & \cellcolor[HTML]{EFEFEF}0.294 & \cellcolor[HTML]{EFEFEF}0.266 & \cellcolor[HTML]{EFEFEF}0.616    & \cellcolor[HTML]{EFEFEF}0.271   & \cellcolor[HTML]{EFEFEF}0.447                                  & \cellcolor[HTML]{EFEFEF}0.220                                   & \cellcolor[HTML]{EFEFEF}0.255                                    & \cellcolor[HTML]{EFEFEF}0.257                                 & \cellcolor[HTML]{EFEFEF}0.202                                 & \cellcolor[HTML]{EFEFEF}0.359                          \\
\multirow{-2}{*}{\textbf{\begin{tabular}[c]{@{}c@{}}Thin Keys\\ Full Values\end{tabular}}} & 75\%                                                                                                                  & 64                                                                                               & 128                                                                                             & 84.815                                                                                                  & 0.070                         & 0.593                         & 0.383                         & 0.497                         & 0.372                         & 0.563                         & 0.284                         & 0.258                         & 0.601                            & 0.269                           & 0.444                                                          & 0.250                                                           & 0.275                                                            & 0.252                                                         & 0.217                                                         & 0.355                                                  \\ \hline
                                                                                           & \cellcolor[HTML]{EFEFEF}87.50\%                                                                                       & \cellcolor[HTML]{EFEFEF}112                                                                      & \cellcolor[HTML]{EFEFEF}112                                                                     & \cellcolor[HTML]{EFEFEF}15.339                                                                          & \cellcolor[HTML]{EFEFEF}0.605 & \cellcolor[HTML]{EFEFEF}0.740 & \cellcolor[HTML]{EFEFEF}0.681 & \cellcolor[HTML]{EFEFEF}0.656 & \cellcolor[HTML]{EFEFEF}0.649 & \cellcolor[HTML]{EFEFEF}0.933 & \cellcolor[HTML]{EFEFEF}0.374 & \cellcolor[HTML]{EFEFEF}0.428 & \cellcolor[HTML]{EFEFEF}0.764    & \cellcolor[HTML]{EFEFEF}0.429   & \cellcolor[HTML]{EFEFEF}0.486                                  & \cellcolor[HTML]{EFEFEF}0.430                                   & \cellcolor[HTML]{EFEFEF}0.304                                    & \cellcolor[HTML]{EFEFEF}0.419                                 & \cellcolor[HTML]{EFEFEF}0.712                                 & \cellcolor[HTML]{EFEFEF}0.574                          \\
                                                                                           & 87.50\%                                                                                                               & 96                                                                                               & 128                                                                                             & 14.685                                                                                                  & 0.611                         & 0.750                         & 0.683                         & 0.642                         & 0.672                         & 0.935                         & 0.372                         & 0.438                         & 0.762                            & 0.411                           & 0.480                                                          & 0.380                                                           & 0.353                                                            & 0.413                                                         & 0.675                                                         & 0.572                                                  \\
                                                                                           & \cellcolor[HTML]{EFEFEF}75\%                                                                                          & \cellcolor[HTML]{EFEFEF}96                                                                       & \cellcolor[HTML]{EFEFEF}96                                                                      & \cellcolor[HTML]{EFEFEF}15.226                                                                          & \cellcolor[HTML]{EFEFEF}0.570 & \cellcolor[HTML]{EFEFEF}0.742 & \cellcolor[HTML]{EFEFEF}0.665 & \cellcolor[HTML]{EFEFEF}0.607 & \cellcolor[HTML]{EFEFEF}0.649 & \cellcolor[HTML]{EFEFEF}0.934 & \cellcolor[HTML]{EFEFEF}0.370 & \cellcolor[HTML]{EFEFEF}0.425 & \cellcolor[HTML]{EFEFEF}0.757    & \cellcolor[HTML]{EFEFEF}0.415   & \cellcolor[HTML]{EFEFEF}0.467                                  & \cellcolor[HTML]{EFEFEF}0.360                                   & \cellcolor[HTML]{EFEFEF}0.255                                    & \cellcolor[HTML]{EFEFEF}0.372                                 & \cellcolor[HTML]{EFEFEF}0.639                                 & \cellcolor[HTML]{EFEFEF}0.548                          \\
                                                                                           & 75\%                                                                                                                  & 64                                                                                               & 128                                                                                             & 15.218                                                                                                  & 0.572                         & 0.731                         & 0.654                         & 0.620                         & 0.633                         & 0.927                         & 0.362                         & 0.415                         & 0.728                            & 0.413                           & 0.462                                                          & 0.350                                                           & 0.343                                                            & 0.373                                                         & 0.558                                                         & 0.543                                                  \\
\multirow{-5}{*}{\textbf{RAP}}                                                             & \cellcolor[HTML]{EFEFEF}50\%                                                                                          & \cellcolor[HTML]{EFEFEF}64                                                                       & \cellcolor[HTML]{EFEFEF}64                                                                      & \cellcolor[HTML]{EFEFEF}18.540                                                                          & \cellcolor[HTML]{EFEFEF}0.309 & \cellcolor[HTML]{EFEFEF}0.716 & \cellcolor[HTML]{EFEFEF}0.583 & \cellcolor[HTML]{EFEFEF}0.574 & \cellcolor[HTML]{EFEFEF}0.598 & \cellcolor[HTML]{EFEFEF}0.898 & \cellcolor[HTML]{EFEFEF}0.340 & \cellcolor[HTML]{EFEFEF}0.361 & \cellcolor[HTML]{EFEFEF}0.563    & \cellcolor[HTML]{EFEFEF}0.314   & \cellcolor[HTML]{EFEFEF}0.435                                  & \cellcolor[HTML]{EFEFEF}0.270                                   & \cellcolor[HTML]{EFEFEF}0.275                                    & \cellcolor[HTML]{EFEFEF}0.258                                 & \cellcolor[HTML]{EFEFEF}0.407                                 & \cellcolor[HTML]{EFEFEF}0.460                          \\ \hline
                                                                                           & 87.50\%                                                                                                               & 112                                                                                              & 112                                                                                             & 13.884                                                                                                  & 0.643                         & 0.756                         & 0.696                         & 0.672                         & 0.695                         & 0.925                         & 0.362                         & 0.459                         & 0.768                            & 0.418                           & 0.473                                                          & 0.470                                                           & 0.324                                                            & 0.451                                                         & 0.758                                                         & 0.591                                                  \\
                                                                                           & \cellcolor[HTML]{EFEFEF}87.50\%                                                                                       & \cellcolor[HTML]{EFEFEF}96                                                                       & \cellcolor[HTML]{EFEFEF}128                                                                     & \cellcolor[HTML]{EFEFEF}13.755                                                                          & \cellcolor[HTML]{EFEFEF}0.639 & \cellcolor[HTML]{EFEFEF}0.760 & \cellcolor[HTML]{EFEFEF}0.699 & \cellcolor[HTML]{EFEFEF}0.681 & \cellcolor[HTML]{EFEFEF}0.691 & \cellcolor[HTML]{EFEFEF}0.927 & \cellcolor[HTML]{EFEFEF}0.362 & \cellcolor[HTML]{EFEFEF}0.451 & \cellcolor[HTML]{EFEFEF}0.772    & \cellcolor[HTML]{EFEFEF}0.429   & \cellcolor[HTML]{EFEFEF}0.489                                  & \cellcolor[HTML]{EFEFEF}0.480                                   & \cellcolor[HTML]{EFEFEF}0.343                                    & \cellcolor[HTML]{EFEFEF}0.435                                 & \cellcolor[HTML]{EFEFEF}0.754                                 & \cellcolor[HTML]{EFEFEF}0.594                          \\
                                                                                           & 75\%                                                                                                                  & 96                                                                                               & 96                                                                                              & 14.520                                                                                                  & 0.616                         & 0.753                         & 0.682                         & 0.667                         & 0.700                         & 0.925                         & 0.362                         & 0.447                         & 0.760                            & 0.420                           & 0.457                                                          & 0.480                                                           & 0.333                                                            & 0.421                                                         & 0.738                                                         & 0.584                                                  \\
                                                                                           & \cellcolor[HTML]{EFEFEF}75\%                                                                                          & \cellcolor[HTML]{EFEFEF}64                                                                       & \cellcolor[HTML]{EFEFEF}128                                                                     & \cellcolor[HTML]{EFEFEF}14.780                                                                          & \cellcolor[HTML]{EFEFEF}0.615 & \cellcolor[HTML]{EFEFEF}0.755 & \cellcolor[HTML]{EFEFEF}0.675 & \cellcolor[HTML]{EFEFEF}0.648 & \cellcolor[HTML]{EFEFEF}0.690 & \cellcolor[HTML]{EFEFEF}0.916 & \cellcolor[HTML]{EFEFEF}0.350 & \cellcolor[HTML]{EFEFEF}0.435 & \cellcolor[HTML]{EFEFEF}0.750    & \cellcolor[HTML]{EFEFEF}0.403   & \cellcolor[HTML]{EFEFEF}0.489                                  & \cellcolor[HTML]{EFEFEF}0.450                                   & \cellcolor[HTML]{EFEFEF}0.324                                    & \cellcolor[HTML]{EFEFEF}0.387                                 & \cellcolor[HTML]{EFEFEF}0.684                                 & \cellcolor[HTML]{EFEFEF}0.571                          \\
\multirow{-5}{*}{\textbf{RACE (Ours)}}                                                            & 50\%                                                                                                                  & 64                                                                                               & 64                                                                                              & \textbf{16.016}                                                                                                  & 0.499                         & 0.740                         & 0.625                         & 0.586                         & 0.622                         & 0.885                         & 0.360                         & 0.381                         & 0.689                            & 0.373                           & 0.454                                                          & 0.340                                                           & 0.333                                                            & 0.371                                                         & 0.556                                                         & \textbf{0.522}                                           \\ \hline   
\end{tabular}
}
\end{table}

\begin{table}[]
\caption{Per-task results of Qwen2.5 7B and Llama3.2 3B Instruct on LongBench.
We report zero shot longbench score across the
evaluated model families and KV-cache retention ratios. The final column reports
the average across tasks. For each KV Cache compression level per model, we highlight the best performing average result. }
\label{tab:lb_full}
\resizebox{\columnwidth}{!}{%
\begin{tabular}{llrrrrrrrrrrrrrrrrr}
\hline
\rowcolor[HTML]{EFEFEF} 
                                                        & \textbf{\begin{tabular}[c]{@{}l@{}}KV Cache\\ Retained\end{tabular}} & \multicolumn{1}{l}{\cellcolor[HTML]{EFEFEF}\textbf{\begin{tabular}[c]{@{}l@{}}Narrative\\  QA\end{tabular}}} & \multicolumn{1}{l}{\cellcolor[HTML]{EFEFEF}\textbf{Qasper}} & \multicolumn{1}{l}{\cellcolor[HTML]{EFEFEF}\textbf{\begin{tabular}[c]{@{}l@{}}Multifield\\  QA\end{tabular}}} & \multicolumn{1}{l}{\cellcolor[HTML]{EFEFEF}\textbf{\begin{tabular}[c]{@{}l@{}}Hotpot\\  QA\end{tabular}}} & \multicolumn{1}{l}{\cellcolor[HTML]{EFEFEF}\textbf{Wikimqa}} & \multicolumn{1}{l}{\cellcolor[HTML]{EFEFEF}\textbf{Musique}} & \multicolumn{1}{l}{\cellcolor[HTML]{EFEFEF}\textbf{gov\_report}} & \multicolumn{1}{l}{\cellcolor[HTML]{EFEFEF}\textbf{qmsum}} & \multicolumn{1}{l}{\cellcolor[HTML]{EFEFEF}\textbf{multi\_news}} & \multicolumn{1}{l}{\cellcolor[HTML]{EFEFEF}\textbf{trec}} & \multicolumn{1}{l}{\cellcolor[HTML]{EFEFEF}\textbf{triviaqa}} & \multicolumn{1}{l}{\cellcolor[HTML]{EFEFEF}\textbf{samsum}} & \multicolumn{1}{l}{\cellcolor[HTML]{EFEFEF}\textbf{\begin{tabular}[c]{@{}l@{}}passage\\ \_count\end{tabular}}} & \multicolumn{1}{l}{\cellcolor[HTML]{EFEFEF}\textbf{\begin{tabular}[c]{@{}l@{}}passage\_\\ retrieval\_en\end{tabular}}} & \multicolumn{1}{l}{\cellcolor[HTML]{EFEFEF}\textbf{lcc}} & \multicolumn{1}{l}{\cellcolor[HTML]{EFEFEF}\textbf{repobench}} & \multicolumn{1}{l}{\cellcolor[HTML]{EFEFEF}\textbf{Average}} \\ \hline
\multicolumn{19}{c}{\textbf{Llama3.2 3B Instruct.}}                                                                                                                                                                                                                                                                                                                                                                                                                                                                                                                                                                                                                                                                                                                                                                                                                                                                                                                                                                                                                                                                                                                                                                                                                                                                                                                                                                                                                                                     \\ \hline
\rowcolor[HTML]{EFEFEF} 
\textbf{Uncompressed}                                   & 100\%                                                                & 21.25                                                                                                        & 39.89                                                       & 50.06                                                                                                         & 47.31                                                                                                     & 36.75                                                        & 19.15                                                        & 33.91                                                            & 23.5                                                       & 26.64                                                            & 71                                                        & 88.41                                                         & 42.62                                                       & 8.5                                                                                                            & 75                                                                                                                     & 52.01                                                    & 52.5                                                           & 43.03                                                        \\
                                                        & 87.5\%                                                               & 24.81                                                                                                        & 33.42                                                       & 49.61                                                                                                         & 47.52                                                                                                     & 36.52                                                        & 17.9                                                         & 33.95                                                            & 23                                                         & 26.98                                                            & 73                                                        & 90.47                                                         & 42.91                                                       & 4                                                                                                              & 63.5                                                                                                                   & 51.26                                                    & 53.9                                                           & \textbf{42.05}                                               \\
                                                        & \cellcolor[HTML]{EFEFEF}75\%                                         & \cellcolor[HTML]{EFEFEF}22.2                                                                                 & \cellcolor[HTML]{EFEFEF}30.79                               & \cellcolor[HTML]{EFEFEF}45.3                                                                                  & \cellcolor[HTML]{EFEFEF}40.44                                                                             & \cellcolor[HTML]{EFEFEF}35.29                                & \cellcolor[HTML]{EFEFEF}15.96                                & \cellcolor[HTML]{EFEFEF}32.88                                    & \cellcolor[HTML]{EFEFEF}22.73                              & \cellcolor[HTML]{EFEFEF}25.77                                    & \cellcolor[HTML]{EFEFEF}71                                & \cellcolor[HTML]{EFEFEF}87.78                                 & \cellcolor[HTML]{EFEFEF}41.16                               & \cellcolor[HTML]{EFEFEF}3.5                                                                                    & \cellcolor[HTML]{EFEFEF}58.5                                                                                           & \cellcolor[HTML]{EFEFEF}44.34                            & \cellcolor[HTML]{EFEFEF}44.45                                  & \cellcolor[HTML]{EFEFEF}\textbf{38.88}                       \\
\multirow{-3}{*}{\textbf{RACE}}                         & 50\%                                                                 & 11.45                                                                                                        & 15.08                                                       & 25.46                                                                                                         & 22.86                                                                                                     & 20.01                                                        & 15.52                                                        & 21.44                                                            & 20.5                                                       & 20.97                                                            & 59.5                                                      & 78.45                                                         & 35.32                                                       & 1.18                                                                                                           & 5.75                                                                                                                   & 34.83                                                    & 35.91                                                          & 26.51                                                        \\
\rowcolor[HTML]{EFEFEF} 
\cellcolor[HTML]{EFEFEF}                                & 87.5\%                                                               & 23.49                                                                                                        & 32.55                                                       & 46.14                                                                                                         & 41.67                                                                                                     & 36.7                                                         & 14.63                                                        & 30.91                                                            & 22.6                                                       & 26.31                                                            & 68.5                                                      & 87.58                                                         & 40.96                                                       & 4.5                                                                                                            & 45                                                                                                                     & 49.3                                                     & 52.24                                                          & 38.94                                                        \\
\cellcolor[HTML]{EFEFEF}                                & 75\%                                                                 & 19.49                                                                                                        & 27.26                                                       & 41.91                                                                                                         & 30.35                                                                                                     & 31.14                                                        & 13.96                                                        & 25.55                                                            & 21.63                                                      & 25.39                                                            & 64                                                        & 84.46                                                         & 39.81                                                       & 4                                                                                                              & 23                                                                                                                     & 47.55                                                    & 49.75                                                          & 34.33                                                        \\
\rowcolor[HTML]{EFEFEF} 
\multirow{-3}{*}{\cellcolor[HTML]{EFEFEF}\textbf{RAP}}  & 50\%                                                                 & 7.79                                                                                                         & 6.42                                                        & 19                                                                                                            & 6.16                                                                                                      & 6.81                                                         & 8.77                                                         & 9.2                                                              & 7.94                                                       & 5.48                                                             & 28                                                        & 40.2                                                          & 11.99                                                       & 4                                                                                                              & 5                                                                                                                      & 34.8                                                     & 32.67                                                          & 14.64                                                        \\
                                                        & 87.5\%                                                               & 21.9                                                                                                         & 31.34                                                       & 49.22                                                                                                         & 45.26                                                                                                     & 36.56                                                        & 17.96                                                        & 31.82                                                            & 23.32                                                      & 25.58                                                            & 67                                                        & 89.59                                                         & 30.3                                                        & 2.79                                                                                                           & 74.5                                                                                                                   & 46.9                                                     & 50.69                                                          & 40.30                                                        \\
                                                        & \cellcolor[HTML]{EFEFEF}75\%                                         & \cellcolor[HTML]{EFEFEF}21.91                                                                                & \cellcolor[HTML]{EFEFEF}30.94                               & \cellcolor[HTML]{EFEFEF}46.85                                                                                 & \cellcolor[HTML]{EFEFEF}40.73                                                                             & \cellcolor[HTML]{EFEFEF}32.06                                & \cellcolor[HTML]{EFEFEF}17.78                                & \cellcolor[HTML]{EFEFEF}30.95                                    & \cellcolor[HTML]{EFEFEF}22.63                              & \cellcolor[HTML]{EFEFEF}24.6                                     & \cellcolor[HTML]{EFEFEF}66                                & \cellcolor[HTML]{EFEFEF}88.7                                  & \cellcolor[HTML]{EFEFEF}28.8                                & \cellcolor[HTML]{EFEFEF}2.21                                                                                   & \cellcolor[HTML]{EFEFEF}56.5                                                                                           & \cellcolor[HTML]{EFEFEF}43.62                            & \cellcolor[HTML]{EFEFEF}38.81                                  & \cellcolor[HTML]{EFEFEF}37.07                                \\
\multirow{-3}{*}{\textbf{PALU}}                         & 50\%                                                                 & 20.47                                                                                                        & 30.26                                                       & 37.77                                                                                                         & 33.88                                                                                                     & 29.15                                                        & 14.18                                                        & 24.79                                                            & 22.24                                                      & 21.8                                                             & 48                                                        & 79.63                                                         & 25.5                                                        & 1                                                                                                              & 9                                                                                                                      & 25.74                                                    & 34.71                                                          & \textbf{28.63}                                               \\ \hline
\rowcolor[HTML]{EFEFEF} 
\multicolumn{19}{c}{\cellcolor[HTML]{EFEFEF}\textbf{Qwen2.5 7B}}                                                                                                                                                                                                                                                                                                                                                                                                                                                                                                                                                                                                                                                                                                                                                                                                                                                                                                                                                                                                                                                                                                                                                                                                                                                                                                                                                                                                                                        \\ \hline
\textbf{Uncompressed}                                   & 100\%                                                                & 17.56                                                                                                        & 14.86                                                       & 27.7                                                                                                          & 29.5                                                                                                      & 15.51                                                        & 11.51                                                        & 30.96                                                            & 23.07                                                      & 26.75                                                            & 72.5                                                      & 89.31                                                         & 45.83                                                       & 3.5                                                                                                            & 24.5                                                                                                                   & 63.07                                                    & 51.58                                                          & 34.23                                                        \\
\rowcolor[HTML]{EFEFEF} 
\cellcolor[HTML]{EFEFEF}                                & 87.5\%                                                               & 17.98                                                                                                        & 20.87                                                       & 41.23                                                                                                         & 41.75                                                                                                     & 31.3                                                         & 18.85                                                        & 28                                                               & 22.99                                                      & 26.7                                                             & 72.5                                                      & 90.25                                                         & 44.98                                                       & 2.5                                                                                                            & 19                                                                                                                     & 53.35                                                    & 44.22                                                          & \textbf{36.03}                                               \\
\cellcolor[HTML]{EFEFEF}                                & 75\%                                                                 & 17.11                                                                                                        & 18.33                                                       & 40.39                                                                                                         & 41.08                                                                                                     & 30.36                                                        & 18.76                                                        & 25.78                                                            & 22.27                                                      & 26.25                                                            & 70.5                                                      & 87.13                                                         & 44.07                                                       & 3                                                                                                              & 8.5                                                                                                                    & 40.51                                                    & 46.8                                                           & \textbf{33.80}                                               \\
\rowcolor[HTML]{EFEFEF} 
\multirow{-3}{*}{\cellcolor[HTML]{EFEFEF}\textbf{RACE}} & 50\%                                                                 & 4.9                                                                                                          & 14.27                                                       & 22.66                                                                                                         & 19.21                                                                                                     & 7.23                                                         & 3.84                                                         & 18.37                                                            & 21.1                                                       & 20.89                                                            & 69                                                        & 84.26                                                         & 39.01                                                       & 2.8                                                                                                            & 8.5                                                                                                                    & 37.1                                                     & 38.07                                                          & \textbf{25.70}                                               \\
                                                        & 87.5\%                                                               & 13.1                                                                                                         & 21.46                                                       & 27.31                                                                                                         & 10.36                                                                                                     & 18.43                                                        & 14.52                                                        & 26.22                                                            & 21.15                                                      & 25.7                                                             & 72.5                                                      & 89.56                                                         & 44.53                                                       & 4                                                                                                              & 8                                                                                                                      & 44.44                                                    & 50.62                                                          & 30.74                                                        \\
                                                        & \cellcolor[HTML]{EFEFEF}75\%                                         & \cellcolor[HTML]{EFEFEF}5.78                                                                                 & \cellcolor[HTML]{EFEFEF}21.38                               & \cellcolor[HTML]{EFEFEF}26.8                                                                                  & \cellcolor[HTML]{EFEFEF}13.88                                                                             & \cellcolor[HTML]{EFEFEF}11.9                                 & \cellcolor[HTML]{EFEFEF}5.26                                 & \cellcolor[HTML]{EFEFEF}23.31                                    & \cellcolor[HTML]{EFEFEF}23.32                              & \cellcolor[HTML]{EFEFEF}23.75                                    & \cellcolor[HTML]{EFEFEF}68.5                              & \cellcolor[HTML]{EFEFEF}86.99                                 & \cellcolor[HTML]{EFEFEF}44.21                               & \cellcolor[HTML]{EFEFEF}3                                                                                      & \cellcolor[HTML]{EFEFEF}4.5                                                                                            & \cellcolor[HTML]{EFEFEF}44.79                            & \cellcolor[HTML]{EFEFEF}42.49                                  & \cellcolor[HTML]{EFEFEF}28.12                                \\
\multirow{-3}{*}{\textbf{RAP}}                          & 50\%                                                                 & 5.16                                                                                                         & 9.4                                                         & 15.36                                                                                                         & 7.57                                                                                                      & 8.48                                                         & 4.09                                                         & 15.67                                                            & 21.49                                                      & 18.78                                                            & 52                                                        & 73.1                                                          & 33.92                                                       & 1.17                                                                                                           & 2.8                                                                                                                    & 27.8                                                     & 42.13                                                          & 21.18                                                        \\
\rowcolor[HTML]{EFEFEF} 
\cellcolor[HTML]{EFEFEF}                                & 87.5\%                                                               & 7.94                                                                                                         & 12.25                                                       & 29.8                                                                                                          & 36.21                                                                                                     & 26.83                                                        & 18.46                                                        & 29.05                                                            & 22.63                                                      & 26.35                                                            & 72.5                                                      & 89.49                                                         & 34.25                                                       & 2                                                                                                              & 19                                                                                                                     & 49.72                                                    & 40.75                                                          & 32.33                                                        \\
\cellcolor[HTML]{EFEFEF}                                & 75\%                                                                 & 8.15                                                                                                         & 16.48                                                       & 28.76                                                                                                         & 36.94                                                                                                     & 28.71                                                        & 18.8                                                         & 27.25                                                            & 22.58                                                      & 25.16                                                            & 70.5                                                      & 89.93                                                         & 34.45                                                       & 4                                                                                                              & 10                                                                                                                     & 41.39                                                    & 33.24                                                          & 31.02                                                        \\
\rowcolor[HTML]{EFEFEF} 
\multirow{-3}{*}{\cellcolor[HTML]{EFEFEF}\textbf{PALU}} & 50\%                                                                 & 4.15                                                                                                         & 8.95                                                        & 23.03                                                                                                         & 10.8                                                                                                      & 9.37                                                         & 5.83                                                         & 21.21                                                            & 21.25                                                      & 22.3                                                             & 69.5                                                      & 83.81                                                         & 33.82                                                       & 1.5                                                                                                            & 8.5                                                                                                                    & 31.17                                                    & 31.17                                                          & 24.15                                                        \\ \hline
\end{tabular}}
\end{table}

\paragraph{Language Modeling and Zero-Shot Performance}
The main quality experiments evaluate Qwen2.5-7B, Llama-3.2-3B-Instruct, and
Granite-4.1-3B. We compare against PALU-GLRD, EigenAttention, Thin Keys, and
RAP at matched or reported KV-cache budgets. The full results are presented in Table \ref{tab:ZS_full} while only RAP, a reconstruction-free method like RACE, and PALU-GLRD, a reconstruction-based method, are included in Figure \ref{fig:KV_vary}. Across all three model families, RACE consistently outperforms the
reconstruction-free baselines RAP and Thin Keys, while remaining competitive
with the reconstruction-based EigenAttention and PALU. The difference is most
pronounced under aggressive compression: At 50\% KV-cache retention, RACE achieves the lowest WikiText perplexity across all three model families while remaining competitive with PALU in average zero-shot accuracy. among the evaluated compression methods on each of Qwen2.5, Granite,
and Llama. At more moderate compression, RACE remains close to the
reconstruction-based methods; at 75\% retention it preserves 98.5\%, 96.5\%,
and 97.5\% of the corresponding uncompressed average zero-shot accuracy on
Qwen, Granite, and Llama, respectively. The per-task results also show that the effect of compression is
not uniform across benchmarks: some tasks remain close to the uncompressed
baseline even at lower retention, whereas others are more sensitive to reduced
attention dimensionality.  Table \ref{tab:lb_full} presents the per-task results of Qwen2.5-7B and Llama-3.2-3B-Instruct on 16 English tasks in LongBench. RACE outperforms RAP at all evaluated retention levels and exceeds PALU at 87.5\% and 75\% retention on both model families. At 50\% retention, RACE remains competitive with PALU, outperforming it on Qwen while trailing it modestly on Llama. Together, these results show that the
quality-compression behavior is consistent across model families rather than
being specific to a single architecture, while retaining the reconstruction-free
execution of RACE.

\paragraph{Additional Systems Results}
The systems results in Figure \ref{fig:mems} show that RACE reduces both the fixed attention-parameter footprint and the context-dependent KV-cache footprint. As expected, the reduction in attention parameter memory aligns with the compression. However, since the folded projection weights are smaller in comparison to the result of the model parameters and memory consumption elements, the reduction in memory is less prominant in the total parameter memory. Additionally parameter memory is independent of context length. KV-cache memory scales with sequence length and therefore yields increasingly large absolute savings at longer contexts. At 256K tokens, the gap between the base model and the compressed variants becomes substantial, illustrating that the main systems benefit of RACE grows with context length while preserving native reduced-dimensional execution.

\paragraph{Compatibility with Token compression and Quantization}
Table 5 reports the full per-task results when RACE is combined with token-compression and KV-quantization methods. Across H2O, SnapKV, and KIVI, applying RACE on top of an already compressed KV cache provides additional multiplicative memory reduction, but also introduces further accuracy and perplexity degradation. This effect is generally more pronounced for token-compression methods than for quantization.

For H2O and SnapKV, reducing the head dimension with RACE consistently lowers the total KV-cache footprint from 50\% to 37.5\% and 25\%, or from 25\% to 18.75\% and 12.5\%, depending on the token-retention ratio. The corresponding degradation is gradual across most zero-shot tasks, while WikiText perplexity increases more noticeably at aggressive combined compression. For example, with H2O at 50\% token retention, adding RACE at 75\% and 50\% head retention reduces total KV-cache usage to 37.5\% and 25\%, respectively, while average zero-shot accuracy decreases from 0.706 to 0.674 and 0.604. SnapKV follows a similar trend, with average accuracy decreasing from 0.710 to 0.676 and 0.612 under the same settings.

RACE is comparatively more complementary with KIVI. With 4-bit quantization, adding RACE at 75\% and 50\% head retention reduces the total KV cache from 50\% to 37.5\% and 25\%, while average accuracy decreases from 0.702 to 0.678 and 0.608. Similar behavior is observed with 2-bit quantization. Overall, the results suggest that RACE composes cleanly with both token and precision compression, but the cost of applying additional head-dimension compression is larger once information has already been removed along the token dimension. In contrast, quantization leaves more redundancy for RACE to exploit, resulting in a milder degradation at comparable total cache budgets.

\begin{table}[t]
\caption{Per-task results of Qwen2.5 7B on Wikitext (perplexity) and zero-shot accuracy on combining RACE with Token compression and quantization methods. }
\label{tab:complementary}
\resizebox{\columnwidth}{!}{%
\begin{tabular}{cccccccccc}
\hline
\textbf{\begin{tabular}[c]{@{}c@{}}Tokens \\  /Precision \\ Retained\end{tabular}} & \textbf{\begin{tabular}[c]{@{}c@{}}RACE\\KV Cache \\Reatained\end{tabular}} & \textbf{\begin{tabular}[c]{@{}c@{}}Total\\ KV Cache\\ Retained\end{tabular}} & \textbf{wikitext}             & \textbf{hellaswag}           & \textbf{piqa}                & \textbf{arc\_challenge}      & \textbf{boolq}               & \textbf{\begin{tabular}[c]{@{}c@{}}mmlu\_\\ college\_cs\end{tabular}} & \textbf{Average}              \\ \hline
\multicolumn{10}{c}{\cellcolor[HTML]{EFEFEF}\textbf{H20 (Token Compression)}}                                                                                                                                                                                                                                                                                                                                                                                                                                               \\ \hline
                                                                                   & 100\%                                                                               & 50\%                                                                         & 12.66                         & 0.66                         & 0.85                         & 0.48                         & 0.82                         & 0.72                                                                  & 0.706                         \\
                                                                                   & \cellcolor[HTML]{EFEFEF}75\%                                                        & \cellcolor[HTML]{EFEFEF}37.5\%                                               & \cellcolor[HTML]{EFEFEF}15.8  & \cellcolor[HTML]{EFEFEF}0.65 & \cellcolor[HTML]{EFEFEF}0.83 & \cellcolor[HTML]{EFEFEF}0.45 & \cellcolor[HTML]{EFEFEF}0.81 & \cellcolor[HTML]{EFEFEF}0.63                                          & \cellcolor[HTML]{EFEFEF}0.674 \\
\multirow{-3}{*}{\begin{tabular}[c]{@{}c@{}}512 \\ (50\%)\end{tabular}}            & 50\%                                                                                & 25\%                                                                         & 17.44                         & 0.62                         & 0.79                         & 0.44                         & 0.74                         & 0.43                                                                  & 0.604                         \\
                                                                                   & \cellcolor[HTML]{EFEFEF}100\%                                                       & \cellcolor[HTML]{EFEFEF}25\%                                                 & \cellcolor[HTML]{EFEFEF}13.62 & \cellcolor[HTML]{EFEFEF}0.66 & \cellcolor[HTML]{EFEFEF}0.85 & \cellcolor[HTML]{EFEFEF}0.48 & \cellcolor[HTML]{EFEFEF}0.79 & \cellcolor[HTML]{EFEFEF}0.66                                          & \cellcolor[HTML]{EFEFEF}0.688 \\
                                                                                   & 75\%                                                                                & 18.75\%                                                                      & 16.12                         & 0.65                         & 0.83                         & 0.45                         & 0.78                         & 0.63                                                                  & 0.668                         \\
\multirow{-3}{*}{\begin{tabular}[c]{@{}c@{}}256\\ (25\%)\end{tabular}}             & \cellcolor[HTML]{EFEFEF}50\%                                                        & \cellcolor[HTML]{EFEFEF}12.5\%                                               & \cellcolor[HTML]{EFEFEF}21.81 & \cellcolor[HTML]{EFEFEF}0.62 & \cellcolor[HTML]{EFEFEF}0.79 & \cellcolor[HTML]{EFEFEF}0.42 & \cellcolor[HTML]{EFEFEF}0.71 & \cellcolor[HTML]{EFEFEF}0.4                                           & \cellcolor[HTML]{EFEFEF}0.588 \\ \hline
\multicolumn{10}{c}{\textbf{SnapKV (Token Compression)}}                                                                                                                                                                                                                                                                                                                                                                                                                                                                    \\ \hline
                                                                                   & \cellcolor[HTML]{EFEFEF}100\%                                                       & \cellcolor[HTML]{EFEFEF}50\%                                                 & \cellcolor[HTML]{EFEFEF}12.55 & \cellcolor[HTML]{EFEFEF}0.68 & \cellcolor[HTML]{EFEFEF}0.86 & \cellcolor[HTML]{EFEFEF}0.48 & \cellcolor[HTML]{EFEFEF}0.82 & \cellcolor[HTML]{EFEFEF}0.71                                          & \cellcolor[HTML]{EFEFEF}0.71  \\
                                                                                   & 75\%                                                                                & 37.5\%                                                                       & 15.61                         & 0.65                         & 0.83                         & 0.46                         & 0.81                         & 0.63                                                                  & 0.676                         \\
\multirow{-3}{*}{\begin{tabular}[c]{@{}c@{}}512 \\ (50\%)\end{tabular}}            & \cellcolor[HTML]{EFEFEF}50\%                                                        & \cellcolor[HTML]{EFEFEF}25\%                                                 & \cellcolor[HTML]{EFEFEF}18.63 & \cellcolor[HTML]{EFEFEF}0.62 & \cellcolor[HTML]{EFEFEF}0.79 & \cellcolor[HTML]{EFEFEF}0.45 & \cellcolor[HTML]{EFEFEF}0.76 & \cellcolor[HTML]{EFEFEF}0.44                                          & \cellcolor[HTML]{EFEFEF}0.612 \\
                                                                                   & 100\%                                                                               & 25\%                                                                         & 12.76                         & 0.68                         & 0.85                         & 0.48                         & 0.81                         & 0.7                                                                   & 0.704                         \\
                                                                                   & \cellcolor[HTML]{EFEFEF}75\%                                                        & \cellcolor[HTML]{EFEFEF}18.75\%                                              & \cellcolor[HTML]{EFEFEF}15.66 & \cellcolor[HTML]{EFEFEF}0.65 & \cellcolor[HTML]{EFEFEF}0.83 & \cellcolor[HTML]{EFEFEF}0.45 & \cellcolor[HTML]{EFEFEF}0.79 & \cellcolor[HTML]{EFEFEF}0.63                                          & \cellcolor[HTML]{EFEFEF}0.67  \\
\multirow{-3}{*}{\begin{tabular}[c]{@{}c@{}}256\\ (25\%)\end{tabular}}             & 50\%                                                                                & 12.5\%                                                                       & 18.44                         & 0.62                         & 0.79                         & 0.46                         & 0.74                         & 0.44                                                                  & 0.61                          \\ \hline
\multicolumn{10}{c}{\cellcolor[HTML]{EFEFEF}\textbf{KIVI (Quantization)}}                                                                                                                                                                                                                                                                                                                                                                                                                                                   \\ \hline
                                                                                   & 100\%                                                                               & 25\%                                                                         & 17.04                         & 0.69                         & 0.86                         & 0.48                         & 0.78                         & 0.7                                                                   & 0.702                         \\
                                                                                   & \cellcolor[HTML]{EFEFEF}75\%                                                        & \cellcolor[HTML]{EFEFEF}18.75\%                                              & \cellcolor[HTML]{EFEFEF}18.17 & \cellcolor[HTML]{EFEFEF}0.66 & \cellcolor[HTML]{EFEFEF}0.84 & \cellcolor[HTML]{EFEFEF}0.48 & \cellcolor[HTML]{EFEFEF}0.76 & \cellcolor[HTML]{EFEFEF}0.6                                           & \cellcolor[HTML]{EFEFEF}0.668 \\
\multirow{-3}{*}{\begin{tabular}[c]{@{}c@{}}2 bits\\ (25\%)\end{tabular}}          & 50\%                                                                                & 12.5\%                                                                       & 21.81                         & 0.64                         & 0.8                          & 0.44                         & 0.71                         & 0.42                                                                  & 0.602                         \\
                                                                                   & \cellcolor[HTML]{EFEFEF}100\%                                                       & \cellcolor[HTML]{EFEFEF}50\%                                                 & \cellcolor[HTML]{EFEFEF}15.7  & \cellcolor[HTML]{EFEFEF}0.69 & \cellcolor[HTML]{EFEFEF}0.86 & \cellcolor[HTML]{EFEFEF}0.48 & \cellcolor[HTML]{EFEFEF}0.78 & \cellcolor[HTML]{EFEFEF}0.7                                           & \cellcolor[HTML]{EFEFEF}0.702 \\
                                                                                   & 75\%                                                                                & 37.5\%                                                                       & 16.58                         & 0.66                         & 0.84                         & 0.48                         & 0.78                         & 0.63                                                                  & 0.678                         \\
\multirow{-3}{*}{\begin{tabular}[c]{@{}c@{}}4 bits\\ (50\%)\end{tabular}}          & \cellcolor[HTML]{EFEFEF}50\%                                                        & \cellcolor[HTML]{EFEFEF}25\%                                                 & \cellcolor[HTML]{EFEFEF}17.14 & \cellcolor[HTML]{EFEFEF}0.64 & \cellcolor[HTML]{EFEFEF}0.8  & \cellcolor[HTML]{EFEFEF}0.44 & \cellcolor[HTML]{EFEFEF}0.74 & \cellcolor[HTML]{EFEFEF}0.42                                          & \cellcolor[HTML]{EFEFEF}0.608 \\ \hline
\end{tabular}}
\end{table}

\paragraph{Ablations}

\begin{figure}[t]
    \centering
    \includegraphics[width=1\linewidth]{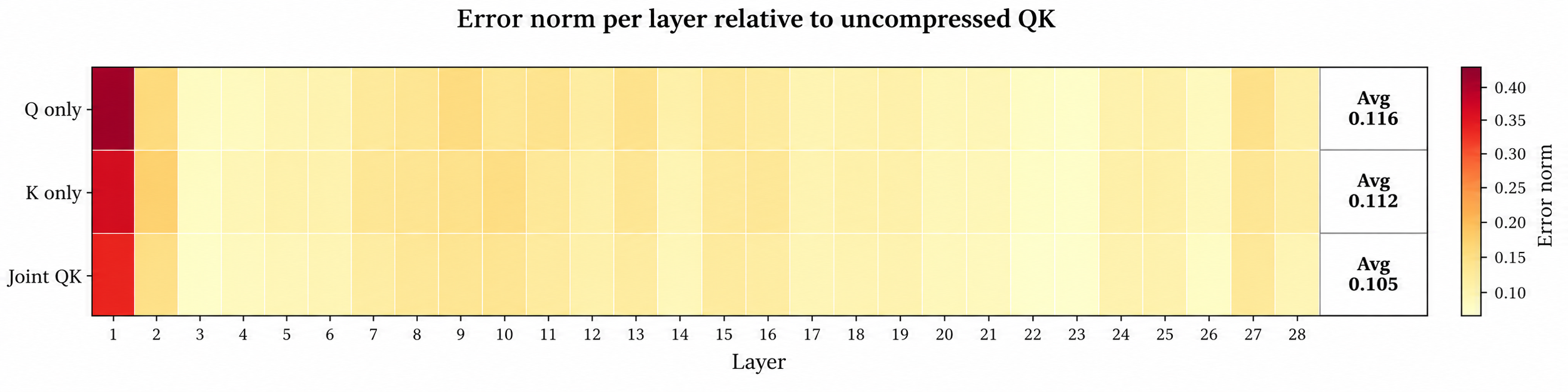}
    \caption{Layer-wise normalised approximation error as in Eq.\ref{eq:pair_energy_bound} with respect to uncompressed model for joint and single-sided energy selectors. Joint Q/K selection yields the lowest average error compared to Q-only or K-only selection.}
    \label{fig:error}
\end{figure}
\begin{table}[t]
\caption{Ablation of selection criteria for RoPE-pair retention. We compare RACE’s joint Q/K activation-energy criterion against magnitude, random selection, gradient Fisher, attention-score selection, and Q-only or K-only energy. Results are reported on WikiText perplexity and zero-shot accuracy across nine tasks at 87.5\%, 75\%, and 50\% KV-cache retention. Joint Q/K energy is consistently strongest overall, with the advantage becoming more pronounced at higher compression.}
\label{tab:ablations}
\resizebox{\columnwidth}{!}{%
\begin{tabular}{ccccccccccccc}
\hline
\textbf{\begin{tabular}[c]{@{}c@{}}KV Cache \\ Retained\end{tabular}} & \textbf{Selection Criteria}                    & \textbf{wikitext}             & \textbf{piqa}                & \textbf{wino}                & \textbf{acr\_c}              & \textbf{sciq}                & \textbf{arc\_easy}           & \textbf{boolq}               & \textbf{race}                & \textbf{mmlu\_cs}            & \textbf{mmlu\_phy}           & \textbf{Avg}                 \\ \hline
                                                                      & \cellcolor[HTML]{EFEFEF}Joint QK Energy (RACE) & \cellcolor[HTML]{EFEFEF}13.88 & \cellcolor[HTML]{EFEFEF}0.74 & \cellcolor[HTML]{EFEFEF}0.67 & \cellcolor[HTML]{EFEFEF}0.46 & \cellcolor[HTML]{EFEFEF}0.93 & \cellcolor[HTML]{EFEFEF}0.70 & \cellcolor[HTML]{EFEFEF}0.77 & \cellcolor[HTML]{EFEFEF}0.42 & \cellcolor[HTML]{EFEFEF}0.48 & \cellcolor[HTML]{EFEFEF}0.35 & \cellcolor[HTML]{EFEFEF}0.61 \\
                                                                      & Magnitude                                      & 14.58                         & 0.74                         & 0.64                         & 0.44                         & 0.89                         & 0.68                         & 0.71                         & 0.41                         & 0.37                         & 0.25                         & 0.57                         \\
                                                                      & \cellcolor[HTML]{EFEFEF}Random                 & \cellcolor[HTML]{EFEFEF}14.95 & \cellcolor[HTML]{EFEFEF}0.71 & \cellcolor[HTML]{EFEFEF}0.63 & \cellcolor[HTML]{EFEFEF}0.45 & \cellcolor[HTML]{EFEFEF}0.89 & \cellcolor[HTML]{EFEFEF}0.67 & \cellcolor[HTML]{EFEFEF}0.65 & \cellcolor[HTML]{EFEFEF}0.39 & \cellcolor[HTML]{EFEFEF}0.35 & \cellcolor[HTML]{EFEFEF}0.25 & \cellcolor[HTML]{EFEFEF}0.55 \\
                                                                      & Gradient Fisher                                & 14.58                         & 0.76                         & 0.66                         & 0.44                         & 0.87                         & 0.69                         & 0.74                         & 0.41                         & 0.38                         & 0.25                         & 0.58                         \\
                                                                      & \cellcolor[HTML]{EFEFEF}Attention Score        & \cellcolor[HTML]{EFEFEF}14.13 & \cellcolor[HTML]{EFEFEF}0.74 & \cellcolor[HTML]{EFEFEF}0.66 & \cellcolor[HTML]{EFEFEF}0.44 & \cellcolor[HTML]{EFEFEF}0.92 & \cellcolor[HTML]{EFEFEF}0.69 & \cellcolor[HTML]{EFEFEF}0.75 & \cellcolor[HTML]{EFEFEF}0.40 & \cellcolor[HTML]{EFEFEF}0.40 & \cellcolor[HTML]{EFEFEF}0.27 & \cellcolor[HTML]{EFEFEF}0.59 \\
                                                                      & Energy Q only                                  & 14.02                         & 0.75                         & 0.66                         & 0.45                         & 0.93                         & 0.69                         & 0.75                         & 0.42                         & 0.40                         & 0.32                         & 0.60                         \\
\multirow{-7}{*}{87.5\%}                                              & \cellcolor[HTML]{EFEFEF}Energy K only          & \cellcolor[HTML]{EFEFEF}14.03 & \cellcolor[HTML]{EFEFEF}0.76 & \cellcolor[HTML]{EFEFEF}0.67 & \cellcolor[HTML]{EFEFEF}0.45 & \cellcolor[HTML]{EFEFEF}0.93 & \cellcolor[HTML]{EFEFEF}0.67 & \cellcolor[HTML]{EFEFEF}0.75 & \cellcolor[HTML]{EFEFEF}0.43 & \cellcolor[HTML]{EFEFEF}0.45 & \cellcolor[HTML]{EFEFEF}0.30 & \cellcolor[HTML]{EFEFEF}0.60 \\
                                                                      & Joint QK Energy (RACE)                         & 14.95                         & 0.76                         & 0.66                         & 0.43                         & 0.93                         & 0.70                         & 0.76                         & 0.42                         & 0.48                         & 0.33                         & 0.61                         \\
                                                                      & \cellcolor[HTML]{EFEFEF}Magnitude              & \cellcolor[HTML]{EFEFEF}15.64 & \cellcolor[HTML]{EFEFEF}0.73 & \cellcolor[HTML]{EFEFEF}0.61 & \cellcolor[HTML]{EFEFEF}0.43 & \cellcolor[HTML]{EFEFEF}0.88 & \cellcolor[HTML]{EFEFEF}0.65 & \cellcolor[HTML]{EFEFEF}0.67 & \cellcolor[HTML]{EFEFEF}0.39 & \cellcolor[HTML]{EFEFEF}0.34 & \cellcolor[HTML]{EFEFEF}0.24 & \cellcolor[HTML]{EFEFEF}0.55 \\
                                                                      & Random                                         & 17.74                         & 0.72                         & 0.56                         & 0.39                         & 0.84                         & 0.62                         & 0.60                         & 0.36                         & 0.25                         & 0.21                         & 0.50                         \\
                                                                      & \cellcolor[HTML]{EFEFEF}Gradient Fisher        & \cellcolor[HTML]{EFEFEF}14.76 & \cellcolor[HTML]{EFEFEF}0.75 & \cellcolor[HTML]{EFEFEF}0.60 & \cellcolor[HTML]{EFEFEF}0.42 & \cellcolor[HTML]{EFEFEF}0.86 & \cellcolor[HTML]{EFEFEF}0.67 & \cellcolor[HTML]{EFEFEF}0.73 & \cellcolor[HTML]{EFEFEF}0.39 & \cellcolor[HTML]{EFEFEF}0.40 & \cellcolor[HTML]{EFEFEF}0.27 & \cellcolor[HTML]{EFEFEF}0.57 \\
                                                                      & Attention Score                                & 14.87                         & 0.74                         & 0.64                         & 0.42                         & 0.91                         & 0.67                         & 0.73                         & 0.40                         & 0.42                         & 0.32                         & 0.58                         \\
                                                                      & \cellcolor[HTML]{EFEFEF}Energy Q only          & \cellcolor[HTML]{EFEFEF}16.78 & \cellcolor[HTML]{EFEFEF}0.74 & \cellcolor[HTML]{EFEFEF}0.66 & \cellcolor[HTML]{EFEFEF}0.42 & \cellcolor[HTML]{EFEFEF}0.91 & \cellcolor[HTML]{EFEFEF}0.68 & \cellcolor[HTML]{EFEFEF}0.75 & \cellcolor[HTML]{EFEFEF}0.41 & \cellcolor[HTML]{EFEFEF}0.43 & \cellcolor[HTML]{EFEFEF}0.33 & \cellcolor[HTML]{EFEFEF}0.59 \\
\multirow{-7}{*}{75\%}                                                & Energy K only                                  & 16.17                         & 0.75                         & 0.65                         & 0.44                         & 0.92                         & 0.68                         & 0.74                         & 0.42                         & 0.43                         & 0.33                         & 0.60                         \\
                                                                      & \cellcolor[HTML]{EFEFEF}Joint QK Energy (RACE) & \cellcolor[HTML]{EFEFEF}16.02 & \cellcolor[HTML]{EFEFEF}0.69 & \cellcolor[HTML]{EFEFEF}0.59 & \cellcolor[HTML]{EFEFEF}0.38 & \cellcolor[HTML]{EFEFEF}0.89 & \cellcolor[HTML]{EFEFEF}0.62 & \cellcolor[HTML]{EFEFEF}0.69 & \cellcolor[HTML]{EFEFEF}0.37 & \cellcolor[HTML]{EFEFEF}0.34 & \cellcolor[HTML]{EFEFEF}0.33 & \cellcolor[HTML]{EFEFEF}0.54 \\
                                                                      & Magnitude                                      & 18.73                         & 0.71                         & 0.54                         & 0.32                         & 0.77                         & 0.49                         & 0.50                         & 0.31                         & 0.23                         & 0.22                         & 0.45                         \\
                                                                      & \cellcolor[HTML]{EFEFEF}Random                 & \cellcolor[HTML]{EFEFEF}26.09 & \cellcolor[HTML]{EFEFEF}0.62 & \cellcolor[HTML]{EFEFEF}0.50 & \cellcolor[HTML]{EFEFEF}0.30 & \cellcolor[HTML]{EFEFEF}0.65 & \cellcolor[HTML]{EFEFEF}0.44 & \cellcolor[HTML]{EFEFEF}0.47 & \cellcolor[HTML]{EFEFEF}0.27 & \cellcolor[HTML]{EFEFEF}0.26 & \cellcolor[HTML]{EFEFEF}0.22 & \cellcolor[HTML]{EFEFEF}0.41 \\
                                                                      & Gradient Fisher                                & 17.80                         & 0.72                         & 0.55                         & 0.35                         & 0.80                         & 0.53                         & 0.44                         & 0.33                         & 0.30                         & 0.27                         & 0.48                         \\
                                                                      & \cellcolor[HTML]{EFEFEF}Attention Score        & \cellcolor[HTML]{EFEFEF}17.86 & \cellcolor[HTML]{EFEFEF}0.70 & \cellcolor[HTML]{EFEFEF}0.59 & \cellcolor[HTML]{EFEFEF}0.33 & \cellcolor[HTML]{EFEFEF}0.73 & \cellcolor[HTML]{EFEFEF}0.49 & \cellcolor[HTML]{EFEFEF}0.50 & \cellcolor[HTML]{EFEFEF}0.35 & \cellcolor[HTML]{EFEFEF}0.33 & \cellcolor[HTML]{EFEFEF}0.31 & \cellcolor[HTML]{EFEFEF}0.48 \\
                                                                      & Energy Q only                                  & 21.01                         & 0.72                         & 0.57                         & 0.35                         & 0.86                         & 0.55                         & 0.66                         & 0.37                         & 0.30                         & 0.33                         & 0.52                         \\
\multirow{-7}{*}{50\%}                                                & \cellcolor[HTML]{EFEFEF}Energy K only          & \cellcolor[HTML]{EFEFEF}20.32 & \cellcolor[HTML]{EFEFEF}0.71 & \cellcolor[HTML]{EFEFEF}0.57 & \cellcolor[HTML]{EFEFEF}0.35 & \cellcolor[HTML]{EFEFEF}0.87 & \cellcolor[HTML]{EFEFEF}0.54 & \cellcolor[HTML]{EFEFEF}0.66 & \cellcolor[HTML]{EFEFEF}0.38 & \cellcolor[HTML]{EFEFEF}0.30 & \cellcolor[HTML]{EFEFEF}0.32 & \cellcolor[HTML]{EFEFEF}0.52 \\ \hline
\end{tabular}
}
\end{table}

We first examine whether the retained RoPE pairs should be selected using information from queries and keys jointly or from either projection alone. Across all retention ratios, the joint Q/K energy criterion gives the strongest overall results. At 87.5\% retention, Q-only and K-only selection remain close to the joint criterion, but the gap becomes clearer as compression increases. At 75\% retention, joint Q/K energy achieves 0.61 average accuracy with a WikiText perplexity of 14.95, compared with 0.59/16.78 for Q-only and 0.60/16.17 for K-only. At 50\% retention, the joint criterion retains 0.54 average accuracy and 16.02 perplexity, while both Q-only and K-only fall to 0.52 average accuracy with substantially higher perplexity.

This trend is also reflected directly in the approximation of the uncompressed attention interaction. Figure \ref{fig:error} shows the layer-wise error norm of the compressed \(QK^\top\) product relative to the uncompressed model. Joint Q/K selection gives the lowest average error, 0.105, compared with 0.112 for K-only and 0.116 for Q-only, and the reduction is observed across most layers rather than being concentrated in a small subset. Together, the downstream results and the layer-wise approximation error support deriving the shared selector from both Q and K rather than from either side in isolation.

We next compare different scoring criteria that all operate on joint Q/K information. Joint activation energy consistently outperforms magnitude, random selection, gradient-Fisher scoring, and attention-based scoring, with the differences becoming larger at stronger compression. At 50\% retention, RACE achieves 0.54 average accuracy, compared with 0.48 for gradient Fisher and attention score, 0.45 for magnitude, and 0.41 for random selection. The same trend is reflected in WikiText perplexity, where joint Q/K energy also achieves the lowest value. Notably, the gradient- and attention-based alternatives do not recover the performance of the simple forward-pass energy criterion despite requiring more complex statistics. Overall, these ablations indicate that both components of the selector are important: the ranking should be derived jointly from Q and K, and joint activation energy provides an effective criterion for identifying the RoPE pairs that best preserve the original attention interaction.

\end{document}